\title{Pursuit of a Discriminative Representation for Multiple Subspaces via Sequential Games}
\author{Druv Pai, Michael Psenka, Chih-Yuan Chiu, Manxi Wu,
Edgar Dobriban,
Yi Ma\thanks{
D. Pai (\texttt{druvpai@berkeley.edu}), M. Psenka (\texttt{psenka@berkeley.edu}), C.Y. Chiu (\texttt{chihyuan\_chiu@berkeley.edu}), and Y. Ma (\texttt{yima@eecs.berkeley.edu}) are with the Department of Electrical Engineering and Computer Sciences at the University of California, Berkeley. M. Wu (\texttt{manxiwu@cornell.edu}) is with the School of Operations Research and Information Engineering at Cornell University, and the Simons Institute for Theory of Computing at the University of California, Berkeley.
E. Dobriban (\texttt{dobriban@upenn.edu}) is with the Department of Statistics and Data Science at the University of Pennsylvania.
}}
\begin{document}

\maketitle

\begin{abstract}
    We consider the problem of learning discriminative representations for data in a high-dimensional space with distribution supported on or around multiple low-dimensional linear subspaces. That is, we wish to compute a linear injective map of the data such that the features lie on multiple \textit{orthogonal} subspaces. Instead of treating this learning problem using multiple PCAs, we cast it as a sequential game using the closed-loop transcription (CTRL) framework recently proposed for learning discriminative and generative representations for general low-dimensional submanifolds. We prove that the equilibrium solutions to the game indeed give correct representations. Our approach unifies classical methods of learning subspaces with modern deep learning practice, by showing that subspace learning problems may be provably solved using the modern toolkit of representation learning. In addition, our work provides the first theoretical justification for the CTRL framework, in the important case of linear subspaces. We support our theoretical findings with compelling empirical evidence. We also generalize the sequential game formulation to more general representation learning problems. Our code, including methods for easy reproduction of experimental results, is \href{https://github.com/DruvPai/MultipleSubspaceRepresentationPursuit}{publically available on GitHub}.
\end{abstract}

\section{Motivation and Context} \label{sec:intro_motivation}

Learning representations of complex high-dimensional data with low underlying complexity is a central goal in machine learning, with applications to compression, sampling, out-of-distribution detection, classification, etc. For example, in the context of image data, one may perform clustering \citep{prasad2020variational}, and generate or detect fake images \citep{huang2018introduction}. There are a number of recently popular methods for representation learning, several proposed in the context of image generation; one such example is generative adversarial networks (GANs) \citep{goodfellow2014generative}, giving promising results \citep{karras2018style,mino2018logan}. Despite empirical successes, theoretical understanding of representation learning of high-dimensional data with underlying low complexity is still rather primitive. Classical methods with theoretical guarantees \citep{Jolliffe2002}, such as principal component analysis (PCA), are divorced from modern methods such as GANs whose justifications are mostly empirical and whose theoretical properties remain poorly understood \citep{tse2017understanding,ozdaglar2020nash}.

A challenge for our theoretical understanding is that \textit{high-dimensional data often has low-dimensional structure}, such as belonging to multiple subspaces and even nonlinear manifolds \citep{Wright-Ma-2022,li2018global,zhang2019structured,shen2020complete,zhai2020complete,zhai2019understanding,qu2019geometric,Lau2020Short,fefferman2013testing}. 
This hypothesis can be difficult to account for theoretically.\footnote{One assumption which violates this hypothesis implicitly is the existence of a probability density for the data. For instance, the analysis in several prominent works on representation learning, such as \cite{kingma2013autoencoding} and \cite{tse2017understanding} critically requires this assumption to hold. Probability densities with respect to the Lebesgue measure on \(\R^{n}\) do not exist if the underlying probability measure has a Lebesgue measure zero support, e.g., for lower-dimensional structures such as subspaces \citep{kallenberg2021foundations}. Thus, this assumption excludes a lower dimensionality of the data.} In fact, our understanding of this setting, and knowledge of principled and generalizable solutions, is still incomplete, even in the case when the data lies on multiple linear subspaces \citep{Vidal:Springer16}, and the representation map is linear.

In this work, we aim to bridge this gap. More specifically, we propose a new theoretically principled formulation, based on sequential game theory, for learning discriminative representations for multiple low-dimensional linear subspaces in high-dimensional spaces. We explicitly characterize the learned representations in this framework. Our results show that classical but complex subspace learning problems can be solved using modern deep learning tools, thus unifying the classical and modern perspectives on this class of problems. Our analysis is tailored to fit the assumption of high-dimensional data with low-dimensional structure.

\subsection{Related Works} \label{sub:related_works}

\paragraph{PCA, Subspace Clustering, and Autoencoding.} Principal component analysis (PCA) and its probabilistic versions \citep{hotelling1933analysis,tipping1999probabilistic} are a classical tool for learning low-dimensional representations. One finds the best \(\ell^{2}\)-approximating subspace of a given dimension for the data. Thus, PCA can be viewed as \textit{seeking to learn the linear subspace structure of the data}. Several generalizations of PCA exist. Generalized PCA (GPCA) \citep{vidal2012generalized} \textit{seeks to learn multiple linear subspace structure} by clustering. Unlike PCA and this work, GPCA does not learn transformed representations of the data. PCA has also been adapted to recover nonlinear structures in many ways \citep{van2009dimensionality}, e.g., via principal curves \citep{hastie1989principal} or autoencoders \citep{kramer1991nonlinear}.

\paragraph{GAN.} Generative Adversarial Networks (GANs) are a recently popular representation learning method \citep{goodfellow2014generative,arjovsky2017wasserstein}. GANs simultaneously learn a generator function, which maps low-dimensional noise to the data distribution, and a discriminator function, which maps the data to discriminative representations from which one can classify the data as authentic or synthetic with a simple predictor. The generator and discriminator are trained adversarially; the generator is trained to generate data which is distributionally close to real data, in order to fool the discriminator, while the discriminator is simultaneously trained to identify discrepancies between the generator output and empirical data. 

While GANs enjoy certain empirical success (see e.g.~\cite{karras2018style,mino2018logan}), their theoretical properties are less well developed, especially in the context of high-dimensional data with intrinsic structure. 
More specifically, the most prominent works of GAN analysis use the simplifying assumption of full-rank data \citep{tse2017understanding}, require explicit computation of objective functions which are intractable to even estimate using a finite sample \citep{arjovsky2017wasserstein,zhu2020deconstructing}, or show that GANs have poor theoretical behavior, such as their training game not having Nash equilibria \citep{ozdaglar2020nash}. In this work, we adopt the more realistic assumption of low-dimensional data in a high-dimensional space, use explicit, closed-form objective functions which are more convenient to optimize (at least in the linear case), and demonstrate the existence of global equilibria of the training game corresponding to our method.

\section{Preliminaries} \label{sec:preliminaries}

\subsection{Representation Learning} \label{sub:representation_learning}

Let \(\x\) be a random variable taking values in \(\R^{\dx}\). Let \(\X \in \R^{\dx \times n}\) be a data matrix that contains \(n \geq 1\) data points \(\x^{1}, \dots, \x^{n}\) in \(\R^{\dx}\) which are independent realizations of \(\x\). To model that the dimension of the support of \(\x\) is lower than the ambient dimension \(\dx\), suppose that \(\x\) is supported on \textit{a union of \(k\) linear subspaces} \(\bigcup_{j = 1}^{k}\S_{j} \subseteq \R^{\dx}\), each of dimension \(\dSj \doteq \dim{\S_{j}}\). For each \(j \in \{1, \dots, k\}\), let \(\X_{j} \in \S_{j}^{n_{j}} \subseteq \R^{\dx \times n_{j}}\) be the matrix of the \(n_{j} \geq 1\) columns of \(\X\) contained in \(\S_{j}\), and let the \textit{class information}, containing the assignment of each data point \(\x^{1}, \dots, \x^{n}\) to its respective subspace index \(j\in\{1,\ldots, k\}\) be denoted by the one-hot encoding matrix \(\bPi \in \{0, 1\}^{n \times k}\). 

The goal is to learn an encoder mapping \(f \colon \R^{\dx} \to \R^{\dz}\), in some function class \(\F\), given \(\X\) and \(\bPi\). 
Normally we want \(f\) to be such that \(\dz \le \dx\) and \(f(\x)\) has better geometric properties, such as being supported on orthogonal subspaces. Moreover, we want to learn an approximate \textit{inverse} or decoder mapping \(g \colon \R^{\dz} \to \R^{\dx}\) in some function class \(\G\), such that the distributions of \(\x\) and \((g \circ f)(\x)\) are close. As we will see, with the simplifying assumption that the data are on linear subspaces, the encoder function \(f \in \F\) and decoder function \(g \in \G\) can both be linear maps. One can view these \(f\) and \(g\) as special ``one-layer'' cases of multi-layer (deep) models, which may be required to handle \textit{nonlinear} low-dimensional structures.\footnote{Analysis where the data has low-dimensional nonlinear structure is beyond the scope of this paper, since even the case of low-dimensional linear structure was open.} 

\subsection{Closed-Loop Transcription} \label{sub:ctrl}

To learn the encoder/decoder mappings \(f\) and \(g\), we use the Closed-Loop Transcription (CTRL) framework, a recent method which was proposed for representation learning of low-dimensional submanifolds in high-dimensional space and has had good empirical results \citep{LDR}. This framework generalizes \textit{both} autoencoders and GANs; \(f\) has dual roles as an encoder and discriminator, and \(g\) has dual roles as a decoder and a generator.

For the data matrix \(\X\), we define \(f(\X) = \mat{f(\x^{1}), \ldots, f(\x^{n})}\), etc. The training process follows a \textit{closed loop}: starting with the data \(\X\) and the autoencoded data \((g \circ f)(\X)\), the data representations \(f(\X)\) and the autoencoded data representations \((f \circ g \circ f)(\X)\) are used to train \(f\) and \(g\). This approach has a crucial advantage over the GAN formulation: contrary to GANs \citep{arjovsky2017wasserstein,zhu2020deconstructing}, since \(f(\X)\) and \((f \circ g \circ f)(\X)\) both live in the structured representation space \(\R^{\dz}\), interpretable quantifications of representation quality and of the difference between \(f(\X)\) and \((f \circ g \circ f)(\X)\) exist and may be computed \textit{efficiently in closed form}.

\subsection{Rate Reduction} \label{sub:rate_reduction}

These tractable quantities are based on the information-theoretic and statistical paradigm of \textit{rate reduction} discussed in the CTRL literature \citep{LDR, OriginalMCR2} as well as previous works \citep{OriginalRateReduction}. Here we review the main principles, as they are central to an information-theoretic interpretation of our objective functions. 

Let \(\z\) be a random variable taking values in \(\R^{\dz}\). Let \(\RD{\cdot \mid \z}\) be the rate distortion function of \(\z\) with respect to the Euclidean squared distance distortion \citep{cover1999elements}. Information-theoretically, this is the \textit{coding rate} of the data; that is, the average number of bits required to encode \(\z\), such that the expected Euclidean squared distance between \(\z\) and its encoding is at most the first argument of the function.

For a symmetric matrix \(\bm{A}\), let \(\lambda_{\min}(\bm{A})\) be the minimum eigenvalue of \(\bm{A}\). If \(\bm{u} \sim \normal{\bm{0}_{\dz}, \bm{\Gamma}}\) is a multivariate Gaussian random vector with mean \(\bm{0}_{\dz}\) and covariance \(\bm{\Gamma}\), then
\begin{equation*}
    \RD{\eps \mid \bm{u}} = \frac{1}{2}\logtdet{\frac{\dz}{\eps^{2}}\bm{\Gamma}} \qquad \forall \eps \in \Big[0, \sqrt{\dz\cdot \lambda_{\min}(\bm{\Gamma})}\Big].
\end{equation*}
For larger \(\eps\), the rate distortion function becomes more complicated and can be found by the water-filling algorithm on the eigenvalues of \(\bm{\Gamma}\). 
However, \cite{OriginalRateReduction} proposes the following approximation of the rate distortion. 
For \(\bm{w}_{\eps} \sim \normal{\bm{0}_{\dz}, \frac{\eps^{2}}{\dz}\bm{I}_{\dz}}\) independent of $\z$, let
\begin{equation*}
    R_{\eps}(\z) \doteq \RD{\eps \mid \z + \bm{w}_{\eps}}.
\end{equation*}
If \(\z \sim \normal{\bm{0}_{\dz}, \bSigma}\), then we may derive a closed form expression for \(R_{\eps}(\z)\) for \textit{all} \(\eps > 0\). Since \(\z\) and \(\bm{w}_{\eps}\) are normally distributed, so is \(\z + \bm{w}_{\eps}\), and 
\begin{equation*}
    \bm{z} + \bm{w}_{\eps} \sim \normal{\bm{0}_{\dz}, \frac{\eps^{2}}{\dz}\bm{I}_{\dz} + \bSigma}.
\end{equation*}
Thus,
\begin{equation*}
    \sqrt{\dz \cdot \lambda_{\min}\mathopen{}\left(\frac{\eps^{2}}{\dz}\bm{I}_{\dz} + \bSigma\right)} = \sqrt{\dz \cdot \left(\frac{\eps^{2}}{\dz} + \lambda_{\min}(\bSigma)\right)} = \sqrt{\eps^{2} + \dz\lambda_{\min}(\bSigma)} \geq \eps.
\end{equation*}
Therefore, we have the following closed form expression for \(R_{\eps}(\z)\) for \textit{all} \(\eps > 0\).
\begin{align*}
	R_{\eps}(\z) 
	&= \RD{\eps \mid \z + \bm{w}_{\eps}}
	= \frac{1}{2}\logtdet{\frac{\dz}{\eps^{2}}\left(\bSigma + \frac{\eps^{2}}{\dz}\I_{\dz}\right)}
	\\
	&= \frac{1}{2}\logtdet{\I_{\dz} + \frac{\dz}{\eps^{2}}\bSigma}.
\end{align*}
In information-theoretic terms, \(R_{\eps}(\z)\) is a \textit{regularized rate distortion} function. Heuristically, it counts the average number of bits required to encode \(\z\) up to \(\eps\) precision, and thus it quantifies the expansiveness of the distribution of \(\z\), or in other words how ``spread out'' the distribution is.

From this quantity we can also define a difference function\footnote{Unfortunately, it is not a true distance function; for starters, it can be zero for random variables with non-identical distributions.} between distributions of two possibly-correlated random vectors \(\z_{1}, \z_{2} \in \R^{\dz}\). This function approximately computes the average number of bits saved by encoding \(\z_{1}\) and \(\z_{2}\) separately and independently compared to encoding them together, say by encoding a mixture random variable \(\z\) which is \(\z_{1}\) with probability \(\frac{1}{2}\) and \(\z_{2}\) with probability \(\frac{1}{2}\), up to precision \(\eps\). In this notation, we have
\begin{equation*}
	\Delta R_{\eps}(\z_{1}, \z_{2}) \doteq R_{\eps}(\z) - \frac{1}{2}R_{\eps}(\z_{1}) - \frac{1}{2}R_{\eps}(\z_{2}).
\end{equation*}
This difference function has several advantages over Wasserstein or Jensen-Shannon distances. It is a principled quantification of difference which is computable in closed-form for the widely representative class of Gaussian distributions. In particular, due to the existence of the closed-form representation, it is much simpler to do analysis on the solutions of optimization problems involving this function.

We may generalize the difference function to several random vectors. Specifically, define probabilities \(\pi_{1}, \dots, \pi_{k} \in [0, 1]\) such that \(\sum_{j = 1}^{k}\pi_{j} = 1\), arranged in a vector \(\bm{\pi} \in [0, 1]^{k}\), and let \(\z_{1}, \dots, \z_{k}\) be random variables taking values in \(\R^{\dz}\). Define \(\z\) to be the mixture random vector which equals \(\z_{j}\) with probability \(\pi_{j}\). Then the \textit{coding rate reduction} of \(\z\) given \(\bm{\pi}\) is given by
\begin{equation*}
	\Delta R_{\eps}(\z \mid \bm{\pi}) \doteq R_{\eps}(\z) - \sum_{j = 1}^{k}\pi_{j}R_{\eps}(\z_{j}).
\end{equation*}
Heuristically, this again approximates the average number of bits saved by encoding each \(\z_{j}\) separately as opposed to encoding \(\z\) as a whole,
and thus 
it quantifies how compact the distribution of each \(\z_{j}\) is and how expansive, or ``spread out'' the distribution of \(\z\) as a whole is. More precisely, it was shown by \cite{OriginalMCR2} that, subject to rank and Frobenius norm constraints on the \(\z_{j}\), this expression is maximized when the \(\z_{j}\) are distributed on pairwise orthogonal subspaces, and also each \(\z_{j}\) has isotropic (or nearly isotropic) covariance on its subspace. 

In practice, we do not know the distribution of the data, and the features are not perfectly a mixture of Gaussians. Still, the mixture of Gaussians is often a reasonable model for lower-dimensional feature distributions \citep{OriginalRateReduction,OriginalMCR2,LDR}, so we use the Gaussian form for the approximate coding rate. 

Also, in practice we may not have access to the full distribution of data, and so we need to estimate all relevant quantities via a finite sample. For Gaussians, \(R_{\eps}\) is only a function of \(\z\) through its covariance \(\bSigma\); in practice, this covariance is estimated via a finite sample \(\Z \in \R^{\dz \times n}\), assumed to be centered, as \(\Z\Z^{\top}/n\). This also allows us to estimate \(\Delta R_{\eps}(\cdot, \cdot)\) from a finite sample. To estimate \(\Delta R_{\eps}(\cdot \mid \cdot)\), we also need to estimate \(\bm{\pi}\). For this, we require finite sample information \(\bPi\) telling us which samples correspond to which random vector \(\z_{j}\). Denote by \(n_{j} \geq 1\) the number of samples in \(\Z\) which correspond to \(\z_{j}\). Then \(\bm{\pi}\) may be estimated via plug-in as \(\hat\pi_{j} = \frac{n_{j}}{n}\). 

This set of approximations yields estimates \(R_{\eps}(\Z)\), \(\Delta R_{\eps}(\Z_{1}, \Z_{2})\), and \(\Delta R_{\eps}(\Z \mid \bPi)\). Henceforth, we drop the \(\eps\) subscript and use the natural logarithm instead of the base-\(2\) logarithm. In this notation, the expressions for Gaussian \(\z\) and \(\z_{j}\), which we use in practice, are:
\begin{align*}
    R(\Z)
    &= \frac{1}{2}\logdet{\I_{\dz} + \frac{\dz}{n\eps^{2}}\Z\Z^{\top}}, \\
    \Delta R(\Z_{1}, \Z_{2})
    &= R(\mat{\Z_{1}, \Z_{2}}) - \frac{1}{2}R(\Z_{1}) - \frac{1}{2}R(\Z_{2}), \\
    \Delta R(\Z \mid \bPi)
    &= R(\mat{\Z_{1}, \dots, \Z_{k}}) - \sum_{j = 1}^{k}\frac{n_{j}}{n}R(\Z_{j}).
\end{align*}
We note here that although the assumption that \(\z\) and \(\z_{j}\) are Gaussian provides an information-theoretic interpretation of the coding rate and rate reduction, our results in this work \textit{do not} rely on anything being exactly distributed according to a mixture of Gaussians. This is because the proofs use purely the \textit{algebraic} properties of the coding rate approximations.

\subsection{Game Theoretic Formulation} \label{sub:game_theory}

Now that we have quantities in the representation space for the properties we want to encourage in the encoder and decoder, we now discuss how to train the encoder function \(f\) and decoder function \(g\).

Several methods, e.g., PCA, GANs \citep{goodfellow2014generative}, and the original CTRL formulation \citep{LDR}, can be viewed as learning the encoder (or discriminator) function \(f\) and decoder (or generator) function \(g\) via finding the Nash equilibria of an appropriate two-player \textit{simultaneous} game between the encoder and decoder. More discussion on this general framework can be found in \Cref{sec:pca_gan_games}. In this work, we approach this problem from a different angle; we learn the encoder function \(f\) and decoder function \(g\) via an appropriate two-player \textit{sequential} game between the encoder and the decoder; finding the so-called \textit{Stackelberg equilibria}. We now cover the basics of sequential game theory; a more complete treatment is found in \cite{basar1998dynamic}.

In a sequential game between the encoder --- whose move corresponds to picking \(f \in \F\) --- and decoder --- whose move corresponds to picking \(g \in \G\) --- both the encoder and the decoder attempt to maximize their own objectives, the so-called \textit{utility functions} \(u_{\enc} \colon \F \times \G \to \R\) and \(u_{\dec} \colon \F \times \G \to \R\) respectively, by making their moves one at a time. In our formulation, the \textit{encoder moves first}, since it is the role of the decoder to invert the encoder.

The solution concept for sequential games --- that is, the encoder and decoder that may be learned by an algorithm -- is the \textit{Stackelberg equilibrium} \citep{basar1998dynamic,fiez2019convergence, jin2019local}. In our context, \((f_{\star}, g_{\star})\) is a Stackelberg equilibrium if and only if
\begin{align*}
    f_{\star} &\in \argmax_{f \in \F}\inf\bigg\{u_{\enc}(f, g)\ \bigg|\ g \in \argmax_{\widetilde{g} \in \G}u_{\dec}(f, \widetilde{g})\bigg\},\\
    g_{\star} &\in \argmax_{g \in \G}u_{\dec}(f_{\star}, g).
\end{align*}
The sequential notion of the game is reflected in the definition of the equilibrium; the decoder, going second, may play \(g\) to maximize \(u_{\dec}\) with full knowledge of the encoder's play \(f\) (assuming the encoder plays rationally), while the encoder plays \(f\) to maximize \(u_{\enc}\) with only the knowledge that the decoder will play optimally.

\section{Multiple-Subspace Pursuit via the CTRL Framework} \label{sec:ctrl_msp}

With this background in place, we shortly introduce the closed-loop multi-subspace pursuit (CTRL-MSP) method. Recall that we seek to learn an encoder function \(f_{\star} \in \F\) and a decoder \(g_{\star} \in \G\) with the following desiderata:
\begin{itemize}
    \item The encoder function \(f_{\star}\) is \textit{injective} on the union of data subspaces \(\bigcup_{j = 1}^{k}\S_{j}\);
    \item The encoder function \(f_{\star}\) is \textit{discriminative} between the data subspaces \(\S_{j}\); 
    \item The encoder and decoder functions \(f_{\star}\) and \(g_{\star}\) form a \textit{self-consistent} closed-loop autoencoding.
\end{itemize}
Moreover, as discussed in \Cref{sub:game_theory}, we seek to learn \(f_{\star}\) and \(g_{\star}\) as equilibria for a two-player sequential game.

Before introducing the game whose equilibria are encoder and decoder functions with these properties, we first discuss how to quantify our desiderata. As a notation, for two sets \(U, V\), a map \(h \colon U \to V\), and a subset \(W \subseteq U\), we denote by \(h(W) = \{h(u) \mid u \in W\}\) the image of \(W\) under \(h\).
\begin{itemize}
	\item To enforce the \textit{injectivity} of the encoder, we aim to ensure that each \(f_{\star}(\S_{j})\) is a linear subspace of dimension equal to that of \(\S_{j}\), and furthermore, we aim to enforce that the covariance matrix of each \(f_{\star}(\X_{j})\) should have no small nonzero singular values. The first property means that the encoder is mathematically injective, i.e., \(f_{\star}\) does not map two points to the same representation. The second property means that the representations \(f_{\star}(\X_{j})\) are spread out across all directions of the subspace, thus ensuring that \(f_{\star}\) does not map two \textit{distant} points in the same subspace to \textit{close} representations, ensuring \textit{well-behaved} (i.e., not pathological) injectivity.
	
	\item To enforce the \textit{discriminativeness} of the encoder, we aim to ensure that the \(f_{\star}(\S_{j})\) are pairwise orthogonal subspaces. This property means that the \(f_{\star}(\S_{j})\) are statistically incoherent, ensuring that a given sample \(\x^{i}\) can be cleanly assigned to one of the subspaces \(\S_{j}\) based on the statistical correlations between its representation \(f_{\star}(\x^{i})\) and vectors from each representation subspace \(f_{\star}(\S_{j})\).
	
	\item To enforce internal \textit{self-consistency of the autoencoding}, we aim to have the subspace representations \(f_{\star}(\S_{j})\) and the autoencoded subspace representations \((f_{\star} \circ g_{\star} \circ f_{\star})(\S_{j})\) be equal. This property means that the decoder has accurately learned the linear structure of the representation space induced by the encoder.
\end{itemize}

Recall that \cite{OriginalMCR2} shows that maximizing \(\Delta R(\Z \mid \bPi)\) over \(\Z\) subject to normalization on the \(\Z_{j}\) provides the first two desiderata: in particular we have that, at optimum, \(\rank{\Z_{j}} = \dSj\), each \(\Z_{j}\) has \(\dSj\) large singular values where at least \(\dSj - 1\) of them are equal, and the spans of the columns of the \(\Z_{j}\) form orthogonal subspaces. This motivates that our encoder should maximize \(\Delta R(f(\X_{j}) \mid \bPi)\) over \(f \in \F\), where \(\F\) is an appropriate set of functions with normalization constraints.

Regarding the third desideratum, the following lemma motivates minimizing \(\Delta R(\Z_{1}, \Z_{2})\) over \(\Z_{1}\) and \(\Z_{2}\). As a notation, for a matrix \(\bm{A}\), we denote by \(\Col{\bm{A}}\) the linear span of the columns of \(\bm{A}\).
\begin{lemma}\label{lem:min_delta_R_distance}
    Suppose \(\Z_{1}, \Z_{2} \in \R^{\dz \times n}\). Then \(\Delta R(\Z_{1}, \Z_{2}) \geq 0\). Furthermore, if \(\Delta R(\Z_{1}, \Z_{2}) = 0\), then \(\Col{\Z_{1}} = \Col{\Z_{2}}\).
\end{lemma}
\begin{proof}
    By Lemma A.4 of \cite{OriginalMCR2}, we have \(\Delta R(\Z_{1}, \Z_{2}) \geq 0\), with equality if and only if \(\Z_{1}\Z_{1}^{\top} = \Z_{2}\Z_{2}^{\top}\), implying that \(\Col{\Z_{1}} = \Col{\Z_{2}}\).
\end{proof}

This lemma motivates that our decoder should minimize \(\Delta R(f(\X_{j}), (f \circ g \circ f)(\X_{j}))\), for all \(j \in \{1, \dots, k\}\), over \(g \in \G\). In line with the CTRL framework, the encoder should attempt to \textit{maximize} this quantity over \(f \in \F\); this is because the encoder conceptually plays a dual role as a GAN-type discriminator and thus seeks to distinguish the representations of the true data \(f(\X)\) from the representations of the autoencoded data \((f \circ g \circ f)(\X)\), especially in a distributional sense.

This discussion motivates the following game, which we call the closed-loop multi-subspace pursuit (CTRL-MSP) game. As notation, let \(\L(\R^{a}, \R^{b})\) be the set of linear maps from \(\R^{a}\) to \(\R^{b}\). Let \(\norm{\cdot}_{F} \colon \bm{A} \mapsto \sqrt{\sum_{i,j}\bm{A}_{ij}^{2}}\) be the usual Frobenius norm on matrices.

\begin{definition}[CTRL-MSP Game]\label{def:ctrl_msp}
    The CTRL-MSP game is a two-player sequential game between: 
    \begin{enumerate}
        \item The encoder, moving first, choosing functions \(f\) in the function class
        \begin{equation*}
            \F \doteq \{f \in \L(\R^{\dx}, \R^{\dz}) \mid \norm{f(\X_{j})}_{F}^{2} \leq n_{j},\ \forall j \in \{1, \dots, k\}\},
        \end{equation*}
        and having utility function
        \begin{equation*}
            u_{\enc}(f, g) \doteq \Delta R(f(\X) \mid \bPi) + \sum_{j = 1}^{k}\Delta R(f(\X_{j}), (f \circ g \circ f)(\X_{j})).
        \end{equation*}
        \item The decoder, moving second, choosing functions \(g\) in the function class \(\G \doteq \L(\R^{\dz}, \R^{\dx})\), and having utility function
        \begin{equation*}
            u_{\dec}(f, g) \doteq -\sum_{j = 1}^{k}\Delta R(f(\X_{j}), (f \circ g \circ f)(\X_{j})).
        \end{equation*}
    \end{enumerate}
\end{definition}

We now explicitly characterize the Stackelberg equilibria in the CTRL-MSP game, and show how they connect to each of the desiderata. As notation, let \(\sigma_{i}(\bm{A})\), \(1 \leq i \leq \min\{m, n\}\) be the singular values of \(\bm{A}\) sorted in non-increasing order.  Finally, for subspaces \(S_{1}, S_{2}\), denote by \(S_{1} + S_{2}\) the sum vector space \(\{\bm{s}_{1} + \bm{s}_{2} \mid \bm{s}_{1} \in S_{1}, \bm{s}_{2} \in S_{2}\}\). With these notations, our key assumptions are summarized below:

\begin{assumption}[Assumptions in CTRL-MSP Games]\label{a:ctrl_msp}
    \phantom{}
	\begin{enumerate}
	    \item (Multiple classes.) \(k \geq 2\).
		\item (Informative data.) For each \(j \in \{1, \dots, k\}\), \(\Col{\X_{j}} = \S_{j}\).
		\item (Large enough representation space.) \(\sum_{j = 1}^{k}\dSj \leq \min\{\dx, \dz\}\).
		\item (Incoherent class data.) \(\sum_{j = 1}^{k}\dSj = \dim{\sum_{j = 1}^{k}\S_{j}}\).\footnote{An intuitive understanding of this condition is that if we take a linearly independent set from each \(\S_{j}\), the union of all these sets is still linearly independent.}
		\item (High coding precision.) \(\eps^{4} \leq \min_{j = 1}^{k}(n_{j}/n\cdot \dz^{2}/\dSj^{2})\).
	\end{enumerate}
\end{assumption}
Our main result is:

\begin{theorem}[Stackelberg Equilibria of CTRL-MSP Games]\label{thm:ctrl_msp}
If Assumption \ref{a:ctrl_msp} holds, then the CTRL-MSP game has the following properties:
	\begin{enumerate}
		\item A Stackelberg equilibrium \((f_{\star}, g_{\star})\) exists.
		\item Any Stackelberg equilibrium \((f_{\star}, g_{\star})\) enjoys the following properties:
		\begin{enumerate}
			\item (Injective encoder.) For each \(j \in \{1, \dots, k\}\), we have that \(f_{\star}(\S_{j})\) is a linear subspace of dimension \(\dSj\). Further, for each \(j \in \{1, \dots, k\}\), one of the following holds:
			\begin{itemize}
			    \item \(\sigma_{1}(f_{\star}(\X_{j})) = \cdots = \sigma_{\dSj}(f_{\star}(\X_{j})) = \frac{n_{j}}{\dSj}\); or
			    \item \(\sigma_{1}(f_{\star}(\X_{j})) = \cdots = \sigma_{\dSj - 1}(f_{\star}(\X_{j})) \in (\frac{n_{j}}{\dSj}, \frac{n_{j}}{\dSj - 1})\) and \(\sigma_{\dSj}(f_{\star}(\X_{j})) > 0\), where if \(\dSj = 1\) then \(\frac{n}{\dSj - 1}\) is interpreted as \(+\infty\). 
			\end{itemize}
			\item (Discriminative encoder.) The subspaces \(\{f_{\star}(\S_{j})\}_{j = 1}^{k}\) are orthogonal.
			\item (Consistent encoding and decoding.) For each \(j \in \{1, \dots, k\}\), we have that \(f_{\star}(\S_{j}) = (f_{\star} \circ g_{\star} \circ f_{\star})(\S_{j})\).
		\end{enumerate}
	\end{enumerate}
\end{theorem}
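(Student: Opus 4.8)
The plan is to exploit the essentially zero-sum structure of the game to decouple the two players, solve the decoder's inner problem in closed form, and then recognize the encoder's reduced problem as a maximal coding rate reduction (\MCR) problem whose optimizers have exactly the claimed geometry.

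First I would reduce the Stackelberg problem. Since \(u_{\enc}(f, g) + u_{\dec}(f, g) = \Delta R(f(\X) \mid \bPi)\) is independent of \(g\), for every \(g\) in the decoder's best-response set \(G^\star(f) := \argmax_{g} u_{\dec}(f, g)\) the encoder's utility equals \(\Delta R(f(\X) \mid \bPi) - \max_{g} u_{\dec}(f, g)\), which does not depend on the particular \(g \in G^\star(f)\). Hence the inner infimum in the definition of Stackelberg equilibrium is trivial, and the encoder effectively solves \(\max_{f \in \F}\big[\Delta R(f(\X) \mid \bPi) - \max_{g} u_{\dec}(f, g)\big]\).

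Next I would analyze the decoder. Using concavity of \(\operatorname{logdet}\), I would show that \(\Delta R(\Z_1, \Z_2) \geq 0\), with equality if and only if \(\Z_1 \Z_1^\top = \Z_2 \Z_2^\top\); this yields \(u_{\dec}(f, g) \leq 0\). I would then show this bound is attained for every \(f\): since the columns of \(f(\X_j)\) lie in \(f(\D)\), where \(\D := \sum_{j} \S_j = \Col{\X}\), a linear right inverse of \(f|_{\D}\) (extended by zero off \(f(\D)\) to a linear \(g \colon \R^{d_z} \to \R^{d_x}\)) yields \((f \circ g \circ f)(\X_j) = f(\X_j)\) for all \(j\), regardless of whether \(f\) is injective, so \(u_{\dec}(f, g) = 0\). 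Thus \(\max_{g} u_{\dec}(f, g) = 0\) for every \(f\), the reconstruction term drops out, and the encoder's problem collapses to \(\max_{f \in \F} \Delta R(f(\X) \mid \bPi)\). Moreover, any \(g_\star \in G^\star(f_\star)\) satisfies \(f_\star(\X_j) f_\star(\X_j)^\top = (f_\star \circ g_\star \circ f_\star)(\X_j)\,[(f_\star \circ g_\star \circ f_\star)(\X_j)]^\top\), which forces equal column spaces and hence \(f_\star(\S_j) = (f_\star \circ g_\star \circ f_\star)(\S_j)\), establishing the consistency claim. Existence then follows: the continuous objective \(\Delta R(f(\X) \mid \bPi)\) depends only on \(f|_{\D}\), the constraints \(\norm{f(\X_j)}_F^2 \leq n_j\) bound \(f\) on \(\D\), so the maximum is attained on a compact set, and we pair a maximizer \(f_\star\) with any \(g_\star \in G^\star(f_\star)\).

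The heart of the argument, and the step I expect to be the main obstacle, is the encoder's reduced problem \(\max_{f \in \F} \Delta R(f(\X) \mid \bPi)\). By the incoherence assumption the subspaces \(\S_j\) are independent, so \(f\) may be prescribed independently on each \(\S_j\); the achievable representation tuples \((\Z_1, \dots, \Z_k)\), with \(\Z_j = f(\X_j)\), are exactly those with \(\rank{\Z_j} \leq d_{\S_j}\), arbitrary column spaces in \(\R^{d_z}\) (there is room by the large-representation-space assumption), and \(\norm{\Z_j}_F^2 \leq n_j\). This is precisely an \MCR maximization, and I would adapt the rate-reduction maximization analysis to show its optimizers satisfy: (i) the subspaces \(\Col{\Z_j} = f_\star(\S_j)\) are mutually orthogonal, giving discriminativeness; (ii) each \(\Z_j\) attains full rank \(d_{\S_j}\), so \(f_\star(\S_j)\) has dimension \(d_{\S_j}\); and (iii) the within-class spectrum follows the stated (near-)isotropic profile. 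Establishing (i)--(iii) is the delicate part: orthogonality requires a perturbation argument showing that any overlap between distinct \(\Col{\Z_j}\) strictly decreases \(\Delta R(f(\X) \mid \bPi)\), while the full-rank conclusion and the two-case spectral dichotomy come from a first-order (water-filling) analysis of the single-class rate function under the trace and rank constraints, in which the high-coding-precision assumption on \(\eps\) is exactly what rules out rank collapse and confines the optimal spectrum to the two stated cases.
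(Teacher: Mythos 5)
Your proposal is correct and follows essentially the same route as the paper: the paper packages your first reduction as a generic lemma about CTRL-SG games (using that \(f \mapsto \max_{g}\C(f,g)\) is constantly zero via the pseudoinverse), proves existence by restricting to encoders vanishing on \(\left(\sum_{j}\S_{j}\right)^{\perp}\) exactly as you suggest, and handles the reduced encoder problem with the same two ingredients you identify --- the orthogonality-at-equality inequality for \(\Delta R(\cdot \mid \bPi)\) (Lemma A.5 of the \MCR{} paper, applied via a perturbation/contradiction constructing an orthogonalized \(\widetilde{f}\) with the same singular values) and the single-class scalar optimization under the trace constraint (Lemma A.7), where the small-\(\eps\) assumption yields the two-case spectral dichotomy and full rank. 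The only cosmetic difference is that you fold the Stackelberg reduction directly into the argument rather than stating it as a separate theorem for general \(\E\) and \(\C\).
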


The proof of this theorem requires \Cref{thm:ctrl_sg}, and is deferred to \Cref{sec:proofs} for brevity. Once in the framework of \Cref{thm:ctrl_sg}, the main difficulty is the characterization of the maximizers of \(f \mapsto \Delta R(f(\X) \mid \bPi)\). This function is non-convex and challenging to analyze; we proceed by carefully applying inequalities on the singular values of the representation matrices.

As the theorem indicates, the earlier check-list of desired quantitative properties can be achieved by CTRL-MSP. That is, CTRL-MSP provably learns injective and discriminative representations of multiple-subspace structure.

For the special case \(k = 1\), where we are learning a single-subspace structure, it is possible to change the utility functions and function classes of CTRL-MSP, to learn a \textit{different} set of properties that more closely mirrors PCA. In particular, a Stackelberg equilibrium encoder \(f_{\star}\) of this modified game does not render the covariance of \(f(\X_{1})\) nearly-isotropic; it instead is an explicit \(\ell^{2}\)-isometry on \(\S_{1}\), which ensures well-behaved injectivity. The details are left to \Cref{sec:ctrl_ssp}.

We now discuss an implication of the CTRL-MSP method. The original problem statement of learning discriminative representations for multiple subspace structure may be solved directly via orthogonalizing the representations produced by using PCA on each data subspace. This solution is a discrete and ad-hoc procedure that is far divorced from modern representation learning. However, CTRL-MSP provides an alternative approach: simultaneously learning and representing the subspaces via the modern representation learning toolkit within a continuous optimization framework. This gives a unifying perspective on classical and modern representation learning, by showing that classical methods can be viewed as \textit{special cases} of modern methods, and that they may be formulated to learn the same types of representations. A major benefit (discussed further in \Cref{sec:generalization,sec:experiments}) is that the new formulation computationally can be generalized to much broader families of structures, beyond subspaces to submanifolds, as compelling empirical evidence from \cite{LDR} demonstrates. 

\section{Generalization via CTRL-SG} \label{sec:generalization}

We now generalize CTRL-MSP to representation learning scenarios more diverse than our task of learning multiple linear subspace structure. This generalized method, which we call \textit{closed-loop sequential games} (CTRL-SG), builds on sequential game theory and the CTRL framework \citep{LDR}. 

To introduce the CTRL-SG game framework, we make the following changes from the CTRL-MSP game:
\begin{itemize}
    \item Change \(\F\) and \(\G\) from restricted classes of linear operators to general function classes.
    
    \item Change the functional \(f \mapsto \Delta R(f(\X) \mid \bPi)\), which quantifies the property of the encoder function \(f \in \F\) to be injective and discriminative, to a general functional \(\Q \colon \F \to \R\) which conceptually quantifies the quality of the representations induced by the encoder.
    
    \item Change the functional \((f, g) \mapsto -\sum_{j = 1}^{k}\Delta R(f(\X_{j}), (f \circ g \circ f)(\X_{j}))\), which quantifies the the similarity of the representation subspaces with the autoencoded representation subspaces, to a general functional \(\C \colon \F \times \G \to \R\) which conceptually quantifies the consistency of the closed-loop autoencoding.
\end{itemize}

This gives the following, vastly more general, game framework.

\begin{definition}[CTRL-SG Game]\label{def:ctrl_sg}
    The CTRL-SG game is a two-player sequential game between: 
    \begin{enumerate}
        \item The encoder, moving first, choosing functions \(f\) in the function class \(\F\), and having utility function \(u_{\enc}(f, g) \doteq \Q(f) - \C(f, g)\).
        \item The decoder, moving second, choosing functions \(g\) in the function class \(\G\), and having utility function \(u_{\dec}(f, g) \doteq \C(f, g)\).
    \end{enumerate}
\end{definition}

We may generically characterize the Stackelberg equilibria of CTRL-SG games, given mild regularity conditions on the choices of \(\F\), \(\G\), \(\Q\), and \(\C\).

\begin{assumption}[Assumptions in CTRL-SG Games]\label{a:ctrl_sg}
    \phantom{}
	\begin{enumerate}
		\item (Quality can be maximized.) \(\argmax_{f \in \F}\Q(f)\) is nonempty.
		\item (Consistency can be maximized.) \(\argmax_{g \in \G}\C(f, g)\) is nonempty for every \(f \in \F\).
		\item (The decoder can obtain equally good outcomes regardless of the encoder's play.) The function \(f \mapsto \max_{g \in \G}\C(f, g)\) is constant. 
	\end{enumerate}
\end{assumption}

With these assumptions, we now characterize the Stackelberg equilibria.

\begin{theorem}[Stackelberg Equilibria of CTRL-SG]\label{thm:ctrl_sg}
    If Assumption \ref{a:ctrl_sg} holds, then the CTRL-SG game has the following properties:
	\begin{enumerate}
		\item A Stackelberg equilibrium \((f_{\star}, g_{\star})\) exists.
		\item Any Stackelberg equilibrium \((f_{\star}, g_{\star})\) enjoys:
		\begin{equation*}
			f_{\star} \in \argmax_{f \in \F}\Q(f), \qquad g_{\star} \in \argmax_{g \in \G}\C(f_{\star}, g).
		\end{equation*}
	\end{enumerate}
\end{theorem}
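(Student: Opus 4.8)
The plan is to unwind the definition of the Stackelberg equilibrium and exploit the fact that, on the decoder's best-response set, the compatibility term \(\C(f,g)\) is forced to a single value, which moreover is independent of \(f\) by the third part of \Cref{a:ctrl_sg}.

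First I would fix an arbitrary \(f \in \F\) and examine the decoder's best response. Since \(u_{\dec}(f,g) = \C(f,g)\), the decoder's best-response set is exactly \(\argmax_{g \in \G}\C(f,g)\), which is nonempty by \Cref{a:ctrl_sg}(2). For every \(g\) in this set we have \(\C(f,g) = \max_{\widetilde{g} \in \G}\C(f,\widetilde{g})\) by the definition of the argmax; denote this common value by \(c(f)\). Consequently, for every best response \(g\),
\[
    u_{\enc}(f,g) = \E(f) - \C(f,g) = \E(f) - c(f),
\]
a quantity that does not depend on the particular choice of \(g\) within the best-response set. Hence the inner infimum in the Stackelberg definition is taken over a nonempty set on which the integrand is constant, so it is attained and equals \(\E(f) - c(f)\).

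Next I would invoke \Cref{a:ctrl_sg}(3), which states precisely that \(f \mapsto c(f) = \max_{g \in \G}\C(f,g)\) is constant; call its value \(c\). Therefore the encoder's problem reduces to
\[
    \argmax_{f \in \F}\bigl(\E(f) - c\bigr) = \argmax_{f \in \F}\E(f),
\]
since subtracting a constant does not change the argmax. This single identity yields both conclusions at once: the right-hand set is nonempty by \Cref{a:ctrl_sg}(1), so a maximizer \(f_{\star}\) exists, and then \(g_{\star} \in \argmax_{g \in \G}\C(f_{\star},g)\) exists by \Cref{a:ctrl_sg}(2), establishing existence of a Stackelberg equilibrium. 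Conversely, any Stackelberg equilibrium \((f_{\star},g_{\star})\) must, by the same reduction, satisfy \(f_{\star} \in \argmax_{f \in \F}\E(f)\), while \(g_{\star} \in \argmax_{g \in \G}\C(f_{\star},g)\) holds directly from the definition of the equilibrium.

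The argument is essentially careful bookkeeping of the Stackelberg definition, so I do not anticipate a deep obstacle; the one point that genuinely requires attention is the treatment of the inner infimum. A priori \(u_{\enc}\) could vary across the decoder's best-response set, which would force the encoder to reason about a worst case, turning the inner expression into a genuine minimization. The crux is the observation that \(u_{\enc}\) depends on \(g\) only through \(\C(f,g)\), which is pinned to the single value \(c(f)\) on that set, so the infimum collapses to a plain evaluation. \Cref{a:ctrl_sg}(3) is then what decouples this value from \(f\) and lets the encoder's problem separate cleanly into the maximization of \(\E\) alone.
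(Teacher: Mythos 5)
Your proposal is correct and follows essentially the same route as the paper's proof: both collapse the inner infimum by noting that \(\C(f,g)\) equals \(\max_{\widetilde g \in \G}\C(f,\widetilde g)\) on the decoder's best-response set, then invoke Assumption \ref{a:ctrl_sg}(3) to make that value constant in \(f\), reducing the encoder's problem to \(\argmax_{f \in \F}\E(f)\). Your remark that \(u_{\enc}\) is literally constant on the best-response set is a slightly more explicit justification of the same step the paper handles by rewriting the infimum as a supremum of \(\C\).
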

\begin{proof}
    We show both consequences of the theorem at the same time, by first computing the equilibria \(f_{\star}\), then computing the corresponding \(g_{\star}\). 
    By Assumption \ref{a:ctrl_sg}.3, the function \(f \mapsto \max_{g \in \G}\C(f, g)\) is constant; say equal to \(c \in \R\).
    Then we have 
    \begin{align*}
        &\argmax_{f \in \F}\inf\bigg\{u_{\enc}(f, g)\bigg| g \in \argmax_{\widetilde{g} \in \G}u_{\dec}(f, \widetilde{g})\bigg\} \\
        &= \argmax_{f \in \F}\inf\bigg\{\Q(f) - \C(f, g) \bigg| g \in \argmax_{\widetilde{g} \in \G}\C(f, \widetilde{g})\bigg\} \\
        &= \argmax_{f \in \F}\left[\Q(f) - \sup\bigg\{\C(f, g) \bigg| g \in \argmax_{\widetilde{g} \in \G}\C(f, \widetilde{g})\bigg\}\right] \\
        &= \argmax_{f \in \F}\left[\Q(f) - \max_{g \in \G}\C(f, g)\right]
        = \argmax_{f \in \F}\left[\Q(f) - c\right] 
        = \argmax_{f \in \F}\Q(f).
    \end{align*}
    By \Cref{a:ctrl_sg}.1, this set is nonempty. Suppose \(f_{\star}\) is in this set. Then
    \begin{equation*}
        \argmax_{g \in \G}u_{\dec}(f_{\star}, g) = \argmax_{g \in \G}\C(f_{\star}, g)
    \end{equation*}
    and by Assumption \ref{a:ctrl_sg}.2, this set is also nonempty. Thus, a Stackelberg equilibrium exists. If \((f_{\star}, g_{\star})\) is a Stackelberg equilibrium, then \(f_{\star} \in \argmax_{f \in \F}\Q(f)\) by the first calculation, and \(g_{\star} \in \argmax_{g \in \G}\C(f_{\star}, g)\) by the second calculation, confirming the remaining part of the theorem.
\end{proof}

The generalized CTRL-SG system allows us to use the CTRL framework for representation learning, choose principled objective functions to encourage the desired representation, and then explicitly characterize the optimal learned encoder and decoder for that algorithm. It also suggests principled optimization strategies and algorithms, such as GDMax \citep{jin2019local}, for obtaining these optimal functions.

This system generalizes the original setting of learning from a fixed finite labelled dataset. Since it is a purely game-theoretic formulation, in principle, one may adapt it to other learning contexts than the ones developed here, e.g., semi-supervised learning and online/incremental learning.
\section{Empirical Evaluation for CTRL-MSP and CTRL-SG} \label{sec:experiments}

In this section, we demonstrate empirical convergence, via appropriate algorithms, of learned \(f\) and \(g\) to Stackelberg equilibria of the CTRL-MSP game which satisfy the conclusions of \Cref{thm:ctrl_msp}, in the case of data lying on coherent subspaces \(\S_{j}\) and possessing rich correlation structure. We then demonstrate CTRL-MSP's robustness to noise. Next, we apply CTRL-MSP to the popular image dataset MNIST, and show partial success in achieving the desired equilibria even far outside the scope of the CTRL-MSP assumptions. Finally, we replace the function classes \(\F\) and \(\G\), which in CTRL-MSP are sets of linear operators, with function classes corresponding to simple, but nonlinear, neural networks. We empirically demonstrate convergence to appropriate equilibria which satisfy the conclusions of \Cref{thm:ctrl_sg}.

\subsection{Optimization Algorithm}

Both CTRL-MSP and CTRL-SG operate in the framework of sequential games. Thus, our implementation uses an optimization strategy which is informed by previous works on convergence to Stackelberg equilibria. Here, we use the GDMax algorithm proposed in \cite{jin2019local}, which, in our context, alternates between taking one optimization step on \(f\) to maximize the encoder utility \(u_{\enc}(\cdot, g)\) and taking optimization steps on \(g\) to maximize the decoder utility \(u_{\dec}(f, \cdot)\) until the decoder utility converges. For the sake of reducing the runtime, we instead take a fixed number \(L \gg 1\) of decoder optimization steps; empirically the decoder nearly converges in this number of iterations.

\subsection{Ways to Evaluate Results}

In both CTRL-MSP and the CTRL-SG instance discussed above, we have the same claims for our Stackelberg equilibria \((f_{\star}, g_{\star})\), furnished by \Cref{thm:ctrl_msp}, that we may test empirically. As a notation, let \(\norm{\cdot}_{\ell^{2}}\) denote the standard \(\ell^{2}\) norm.
\begin{itemize}
    \item We wish to show that the encoder is injective, i.e., each \(f_{\star}(\S_{j})\) is a linear subspace of dimension \(\dSj\), and that each \(f_{\star}(\X_{j})\) has nearly isotropic covariance. To show this, we plot the singular value distribution of \(f_{\star}(\X_{j})\) and show that the \(\sigma_{p}(f_{\star}(\X_{j}))\) are large and clustered around one or two values, for \(1 \leq p \leq \dSj\).
    
    \item We wish to show that the encoder orthogonalizes the data subspaces, i.e., the \(f_{\star}(\S_{j})\) are orthogonal. To show this, we plot the absolute cosine similarity (or, magnitude of the correlation coefficient), defined by 
    \begin{equation*}
        \texttt{acs}(\z_{1}, \z_{2}) \doteq \frac{\abs{\z_{1}^{\top}\z_{2}}}{\norm{\z_{1}}_{\ell^{2}}\norm{\z_{2}}_{\ell^{2}}}
    \end{equation*}
    across all representations of data points \(f_{\star}(\x^{p})\) and \(f_{\star}(\x^{q})\). We order our dataset so that each class \(\X_{j}\) is a continuous block of data points, i.e., \(\X = \mat{\X_{1} & \X_{2} & \cdots & \X_{k}}\). We show that the plot of the absolute cosine similarities forms a block diagonal, showing that \(\texttt{acs}(f_{\star}(\x^{p}), f_{\star}(\x^{q})) \approx 0\) when \(\x^{p}, \x^{q}\) are drawn from different subspaces, and is a large value if they are drawn from the same subspace.
    
    \item We wish to show that the encoder and decoder form a self-consistent autoencoding, i.e., that \(f_{\star}(\S_{j}) = (f_{\star} \circ g_{\star} \circ f_{\star})(\S_{j})\) for all \(j\). Since \(f_{\star}(\S_{j}) = \Col{f_{\star}(\X_{j})}\) and \((f_{\star} \circ g_{\star} \circ f_{\star})(\S_{j}) = \Col{(f_{\star} \circ g_{\star} \circ f_{\star})(\X_{j})}\), we may compare the images of the subspace \(\S_{j}\) by comparing the images of the data \(\X_{j}\). In particular, we show that the distribution of residuals
    \begin{equation*}
        \texttt{resid}_{j}(f_{\star}(\x^{i})) \doteq \norm{f_{\star}(\x^{i}) - \operatorname{proj}_{\Col{(f_{\star} \circ g_{\star} \circ f_{\star})(\X_{j})}}(f_{\star}(\x))}_{\ell^{2}}
    \end{equation*}
    for all \(\x^{i}\) which are columns of \(\X_{j}\), is concentrated near zero, for all \(j\).
\end{itemize}

\subsection{CTRL-MSP with Data on Subspaces}

To generate data which lies on subspaces with rich correlation structure, we fix \(\nu \geq 0\) and \(\alpha \in [0, 1]\), then generate a single random matrix with orthonormal columns \(\U \in \R^{\dx \times (\max_{j}\dSj)}\) using the QR decomposition. Then, for each subspace, we select \(\dSj\) random columns of \(\U\) uniformly without replacement to form a matrix \(\U_{j} \in \R^{\dx \times \dSj}\). We then generate random matrices \(\bm{\Theta}_{j} \in \R^{\dx \times \dSj}\), \(\bm{\Xi}_{j} \in \R^{\dSj \times n_{j}}\), \(\bm{\Phi}_{j} \in \R^{\dx \times n_{j}}\) whose entries are i.i.d.~standard normal random variables, and set \(\widetilde{\U}_{j}\) to be the matrix whose columns are the normalized columns of \((1 - \alpha)\U_{j} + \alpha \bm{\Theta}_{j}\). We finally obtain \(\X_{j}\) using the formula \(\X_{j} \doteq \widetilde{\U}_{j}\bm{\Xi}_{j} + \sqrt{\frac{\nu}{\dx}}\bm{\Phi}_{j}\).

If \(\alpha = 0\) then the \(\X_{j}\) are highly correlated; in particular, each class subspace \(\S_{j}\) is spanned by vectors from the same set \(\U\). Some subspaces will have basis vectors in common, making them impossible to orthogonalize by a linear encoder. If \(\alpha = 1\) then the \(\X_{j}\) are highly incoherent, since each class subspace \(\S_{j}\) is spanned by a random basis \(\bm{\Theta}_{j}\), and random vectors in high dimensions are highly incoherent with high probability \citep{Wright-Ma-2022}. In order to make the problem harder, and thus demonstrate the capacity of CTRL-MSP, we set \(\alpha\) to be low, close to \(0\).

For experiments, we set baseline values of \(n = 1500\), \(k = 3\), \(n_{1} = n_{2} = n_{3} = n/k = 500\), \(\dx = 50\),  \(d_{1} = 3\), \(d_{2} = 4\), \(d_{3} = 5\), \(\dz = 40\), \(\eps^{2} = 1\), \(\alpha = \frac{1}{10}\), and \(\nu = 0\). 

Regarding optimization strategy, we set the number of decoder steps per encoder step as \(L = 10^{3}\). We set the learning rate of \(f\) to be \(10^{-2}\) and the learning rate of \(g\) to be \(10^{-3}\). We use the Adam optimizer \citep{kingma2014adam} to optimize both \(f\) and \(g\). The data are randomly partitioned into mini-batches of size \(b = 50\) during optimization. We train for \(2\) epochs.

We observe success in the baseline regime: the \(\texttt{acs}(f_{\star}(\x^{i}), f_{\star}(\x^{j}))\) plot presents a significant block diagonal structure, the representation matrix \(f_{\star}(\X_{j})\) corresponding to each subspace \(\S_{j}\) has \(\dSj\) non-zero singular values, which are large, and the residuals \(\texttt{resid}_{j}(f_{\star}(\x^{i}))\) are concentrated near \(0\) (\Cref{fig:ctrl_msp_baseline}).

\begin{figure}
    \centering
    \begin{subfigure}[b]{0.24\textwidth}
        \centering 
        \includegraphics[width=\textwidth]{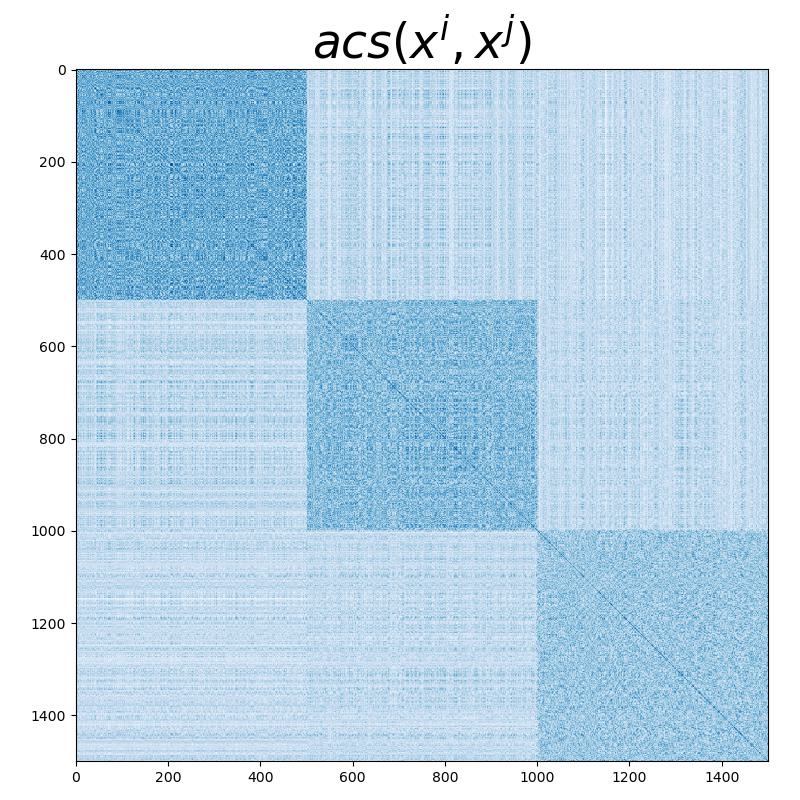}
        \caption{}
    \end{subfigure}
    \begin{subfigure}[b]{0.24\textwidth}
        \centering 
        \includegraphics[width=\textwidth]{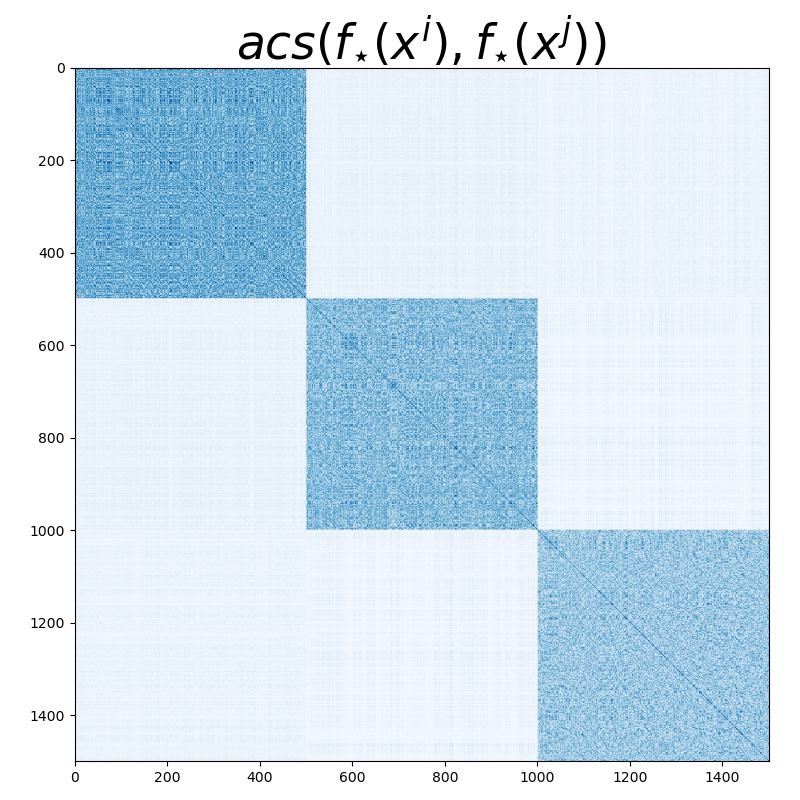}
        \caption{}
    \end{subfigure}
    \begin{subfigure}[b]{0.24\textwidth}
        \centering 
        \includegraphics[width=\textwidth]{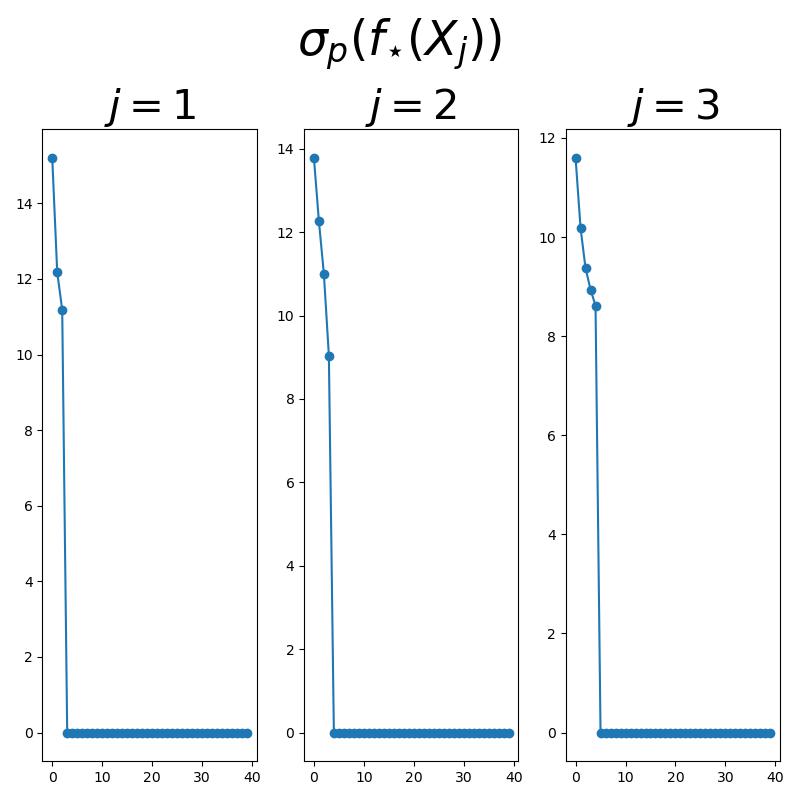}
        \caption{}
    \end{subfigure}
    \begin{subfigure}[b]{0.24\textwidth}
        \centering 
        \includegraphics[width=\textwidth]{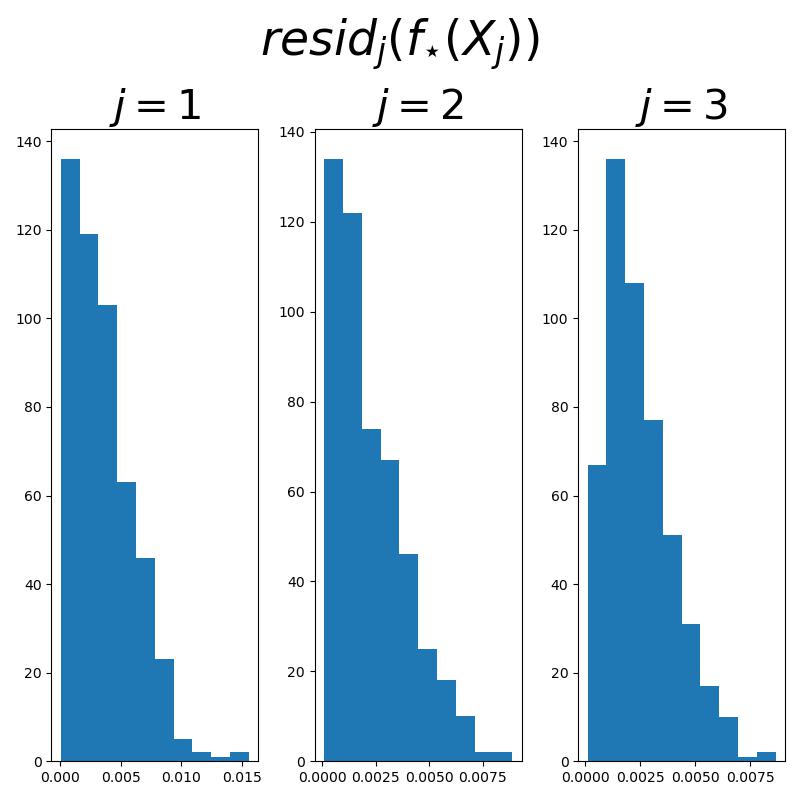}
        \caption{}
    \end{subfigure}
    \caption{Baseline performance of CTRL-MSP. (a) Heatmap of absolute cosine similarities of the original data \(\texttt{acs}(\x^{p}, \x^{q})\). (b) Heatmap of absolute cosine similarities of the learned representations \(\texttt{acs}(f_{\star}(\x^{p}), f_{\star}(\x^{q}))\). (c) Spectra of the representation matrices for each data subspace, i.e., distribution of \(\sigma_{p}(f_{\star}(\X_{j}))\) for each \(j\). (d) Histogram of the residuals \(\texttt{resid}_{j}(f_{\star}(\x^{i}))\) for all \(\x^{i}\) in \(\X_{j}\), for every \(j\).}
    \label{fig:ctrl_msp_baseline}
\end{figure}

We now test the impact of noise on CTRL-MSP. Specifically, we set \(\nu = 10^{-2}\). This adds off-subspace noise to the data, which then no longer satisfies the assumption that the data are distributed on linear subspaces. However, CTRL-MSP still clearly succeeds: the \(\texttt{acs}(f_{\star}(\x^{i}), f_{\star}(\x^{j}))\) plot presents a clearly visible block diagonal, the representation matrix \(f_{\star}(\X_{j})\) corresponding to each subspace \(\S_{j}\) has \(\dSj\) large non-zero singular values, and the residuals \(\texttt{resid}_{j}(f_{\star}(\x^{i}))\) are concentrated around \(0\) (\Cref{fig:ctrl_msp_nu_1e-2}).

\begin{figure}
    \centering
    \includegraphics[width=0.24\textwidth]{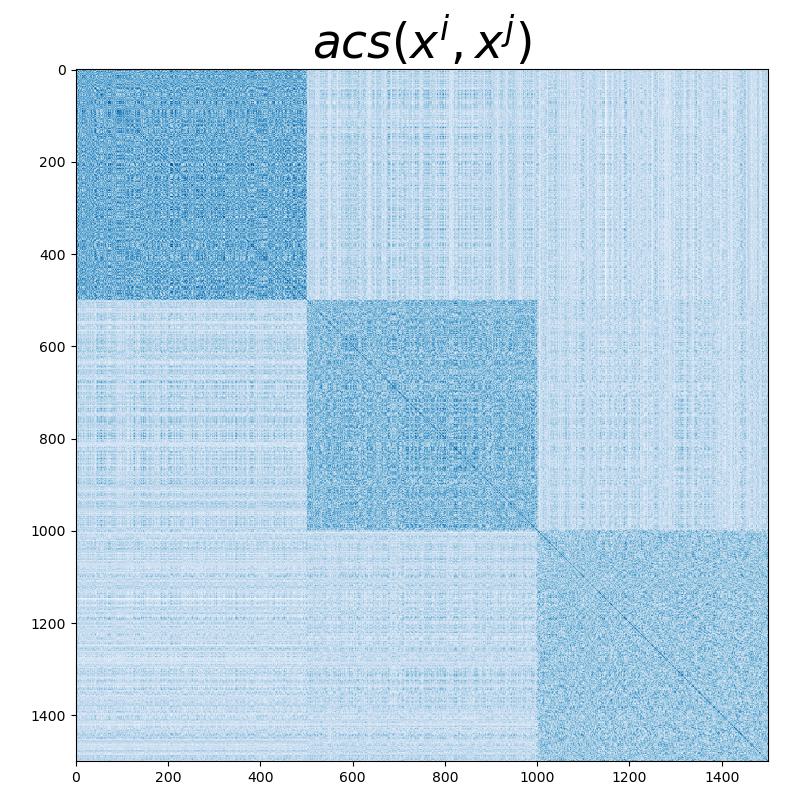}
    \includegraphics[width=0.24\textwidth]{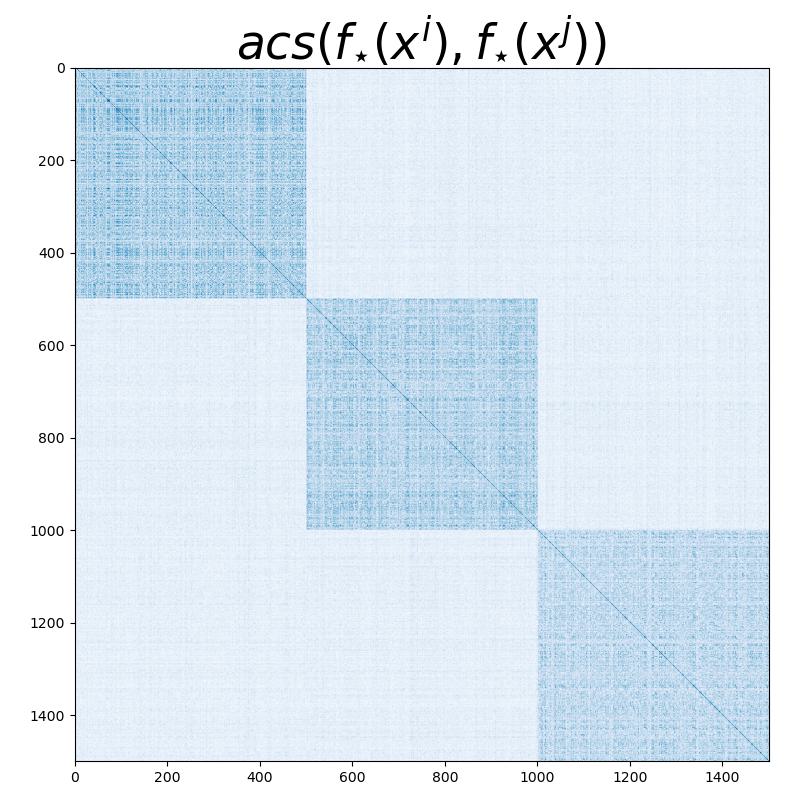}
    \includegraphics[width=0.24\textwidth]{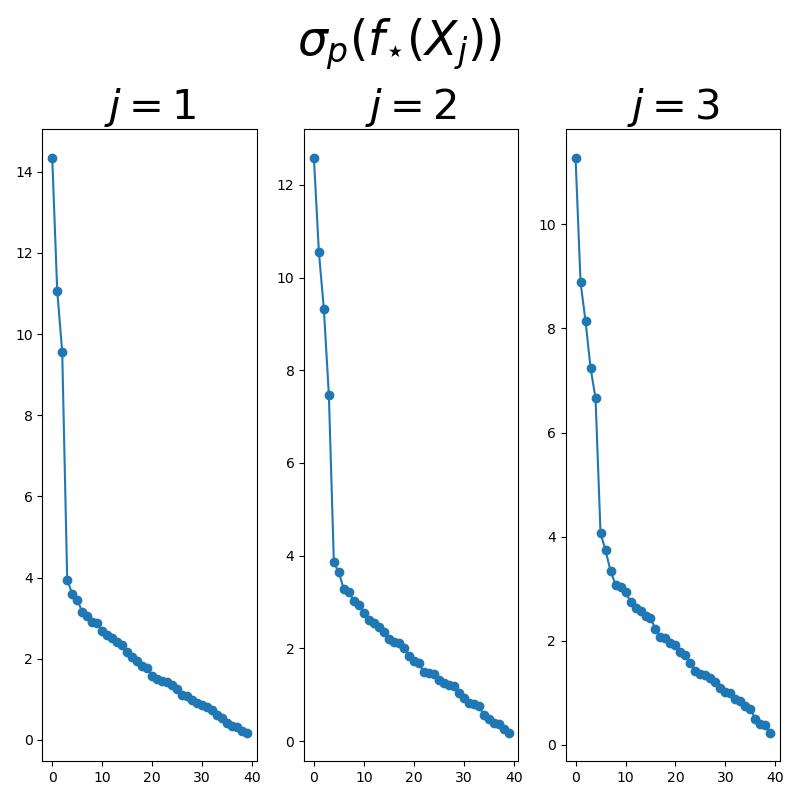}
    \includegraphics[width=0.24\textwidth]{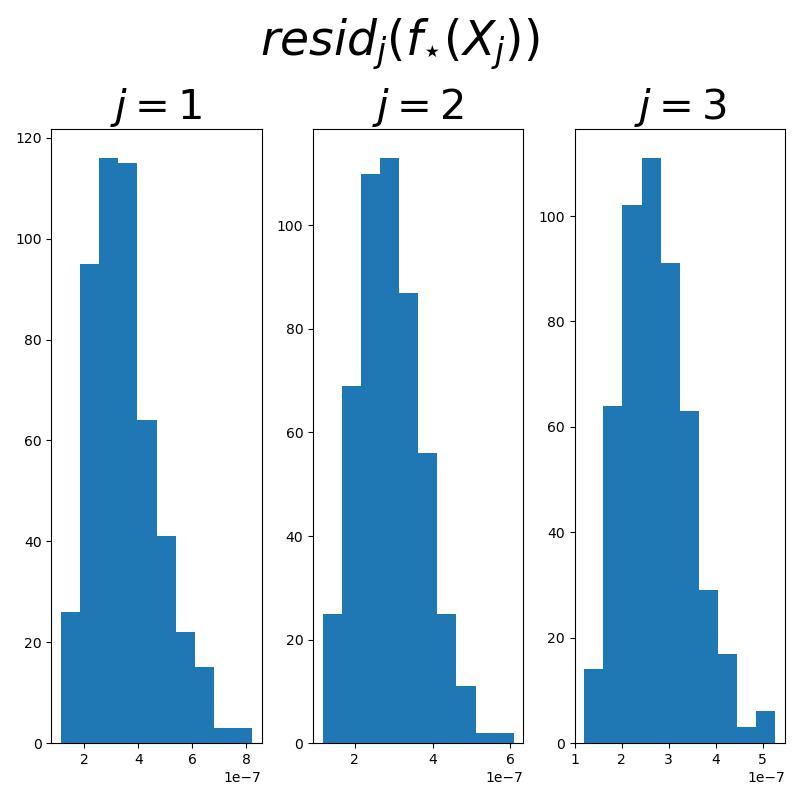}
    \caption{Performance of CTRL-MSP, \(\nu = 10^{-2}\).}
    \label{fig:ctrl_msp_nu_1e-2}
\end{figure}

\subsection{CTRL-MSP on MNIST Data}

Another setting we test CTRL-MSP on is the popular MNIST digit recognition dataset, where each \(28 \times 28\) black-and-white image can be viewed as a vector in \(\R^{784}\). This makes \(\dx = 784\). 

In this setting, we make the following changes to the optimization scheme for the sake of improving runtime:
\begin{itemize}
    \item Lower \(L\), the number of decoder optimization steps for each encoder optimization step, from \(10^{3}\) to \(2 \cdot 10^{2}\).
    \item Raise the batch size \(b\) from \(50\) to \(200\).
    \item Train for \(1\) epoch instead of \(2\) epochs.
\end{itemize}

In the MNIST dataset, the data lie not on linear subspaces \(\S_{j} \subseteq \R^{784}\) but \textit{nonlinear submanifolds} \citep{fefferman2013testing,OriginalMCR2,LDR}. Thus, our theoretical assumption that our data are distributed on linear subspaces is completely shattered. Still, CTRL-MSP with \(\dz = 150\) shows partial success in this regime; the block diagonal structure of the \(\texttt{acs}(f_{\star}(\x^{i}), f_{\star}(\x^{j}))\) plot is faint but visible, the representation matrices \(f_{\star}(\X_{j})\) each have a few large singular values, and the residuals \(\texttt{resid}_{j}(f_{\star}(\x^{i}))\) are concentrated around \(0\) (\Cref{fig:ctrl_msp_mnist}).

\begin{figure}
    \centering
    \includegraphics[width=0.3\textwidth]{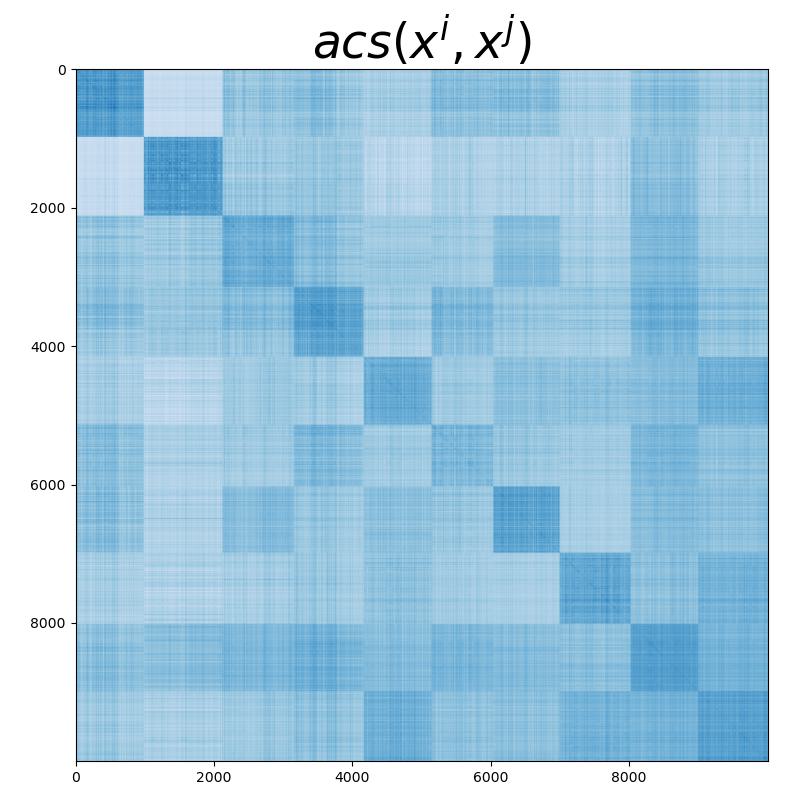}
    \includegraphics[width=0.3\textwidth]{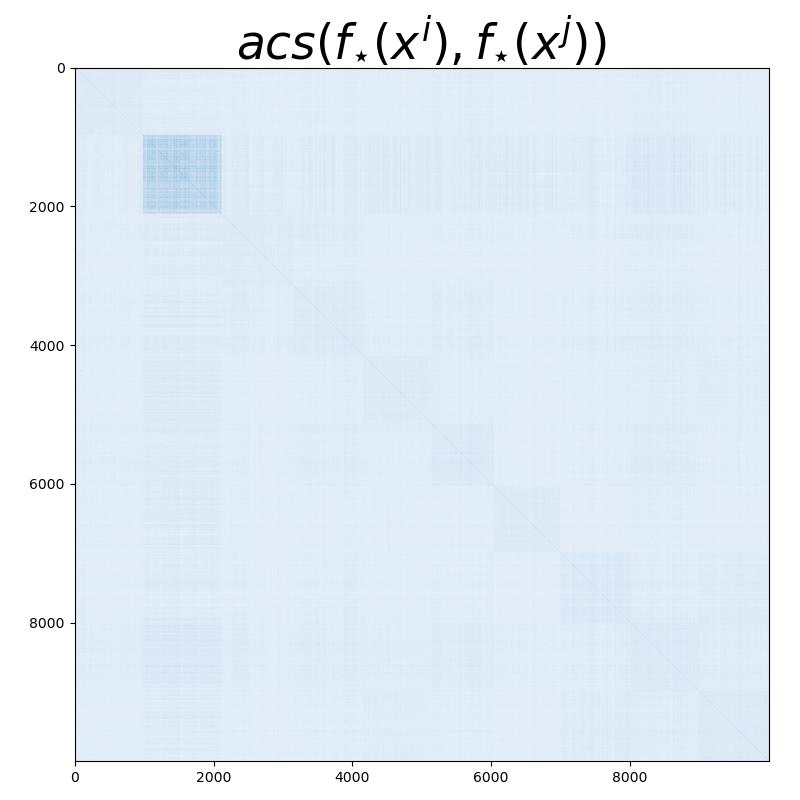}
    \includegraphics[width=\textwidth]{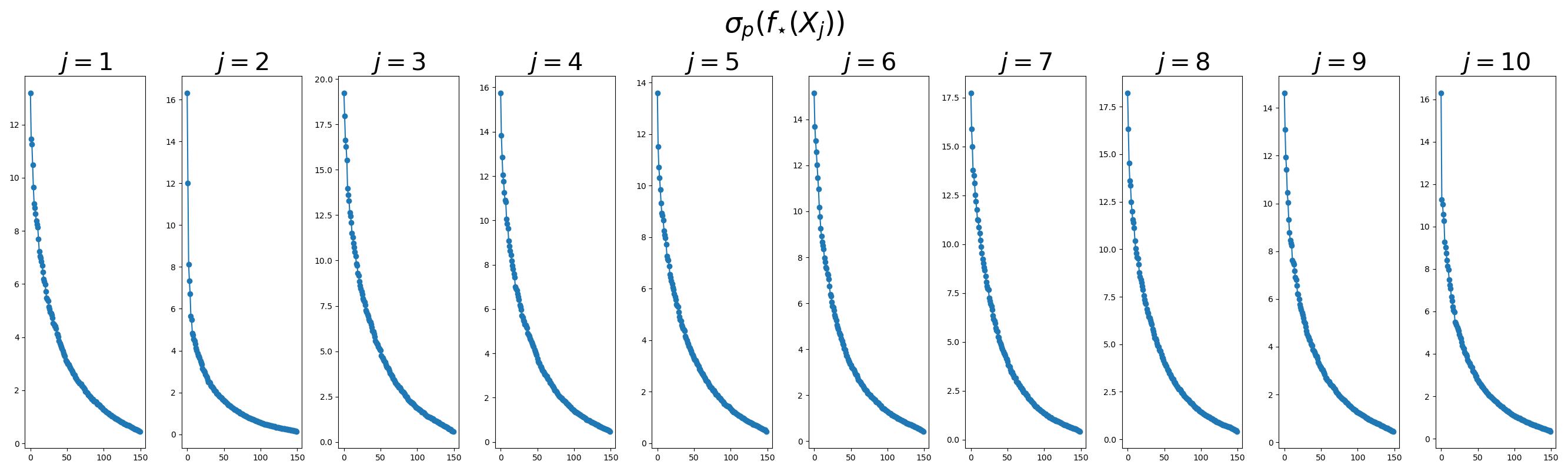}
    \includegraphics[width=\textwidth]{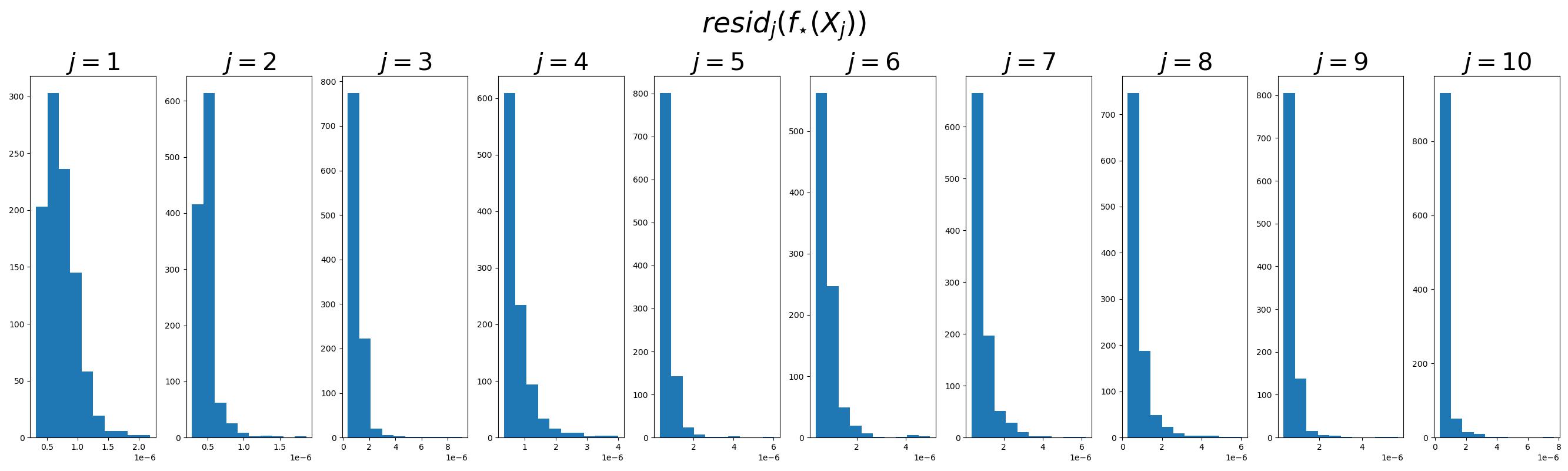}
    \caption{Behavior of CTRL-MSP on MNIST.}
    \label{fig:ctrl_msp_mnist}
\end{figure}

\subsection{CTRL-SG on MNIST Data}

To improve performance of our methods on real-world data, we turn to the more general CTRL-SG framework. Seeing as how the limiting factor of CTRL-MSP is the fact that the function classes \(\F\) and \(\G\) are linear maps, we replace \(\F\) and \(\G\) by function classes corresponding to neural networks; that is, we replace the linear encoder and decoder with simple two-layer feed-forward networks, where the representation dimension is still \(\dz = 150\), and the neural networks' latent dimension is \(d_{\mathrm{latent}} = 150\).

In order to make this choice of \(\F\) and \(\G\) fully principled, we need to verify that the choice of \(\F\), \(\G\), and the objective functions form an instance of CTRL-SG that satisfy \Cref{a:ctrl_sg} and thus enjoy the guarantees of \Cref{thm:ctrl_sg}. The difficult assumption to show is \Cref{a:ctrl_sg}.3, which claims that the function \(f \mapsto \max_{g \in \G}\C(f, g)\) is constant, where in this context \(\C(f, g) \doteq -\sum_{j = 1}^{k}\Delta R(f(\X_{j}), (f \circ g \circ f)(\X_{j}))\). We can check this empirically by verifying that (after a start-up period), the function \(\C(f, g)\) is approximately constant after optimizing over \(g\) (\Cref{fig:ctrl_sg_mnist_losses}).

\begin{figure}
    \centering
    \includegraphics[width=0.6\textwidth]{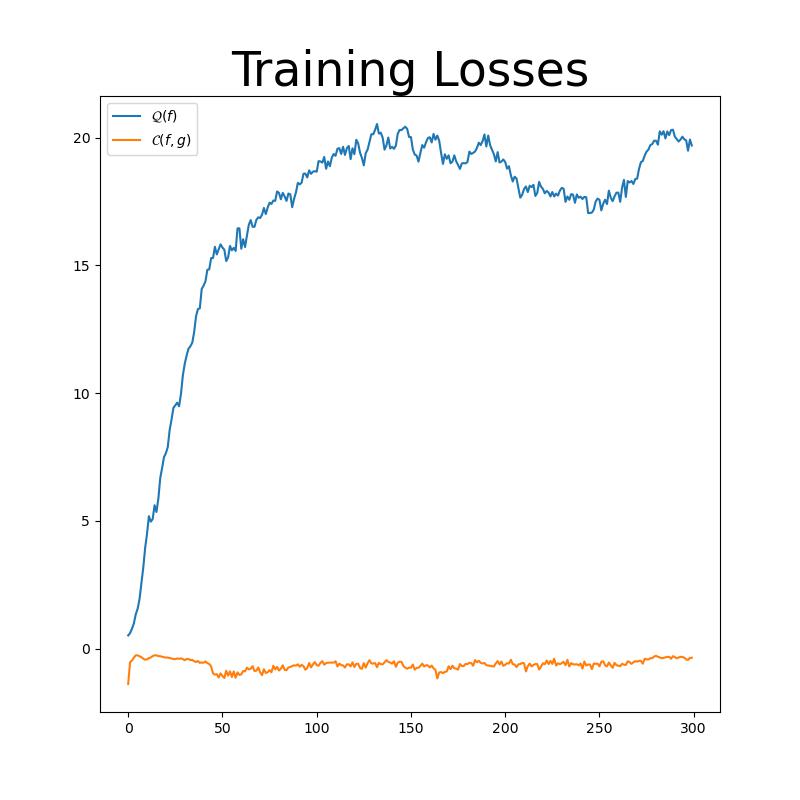}
    \caption{Loss functions of CTRL-SG while training on MNIST, where losses are computed after optimization over the decoder function \(g\). Notice that \(\C(f, g)\) does not depend on the \(f\) that is chosen (past some start-up period).}
    \label{fig:ctrl_sg_mnist_losses}
\end{figure}

Thus, \Cref{a:ctrl_sg} holds in practice, and thus the guarantees of \Cref{thm:ctrl_sg} also hold; that is, \(f_{\star}\) maximizes \(f \mapsto \Delta R(f(\X) \mid \bPi)\), and \(g_{\star}\) minimizes \(g \mapsto \sum_{j = 1}^{k}\Delta R(f_{\star}(\X_{j}), (f_{\star} \circ g \circ f_{\star})(\X_{j}))\). By Theorem A.6 of \cite{OriginalMCR2} and \Cref{lem:min_delta_R_distance}, we see that \(f_{\star}\) and \(g_{\star}\) obtain analogous guarantees to \Cref{thm:ctrl_msp}. Now, we see that our instance of CTRL-SG obtains much better performance than CTRL-MSP (\Cref{fig:ctrl_sg_mnist}).

\begin{figure}
    \centering
    \includegraphics[width=0.3\textwidth]{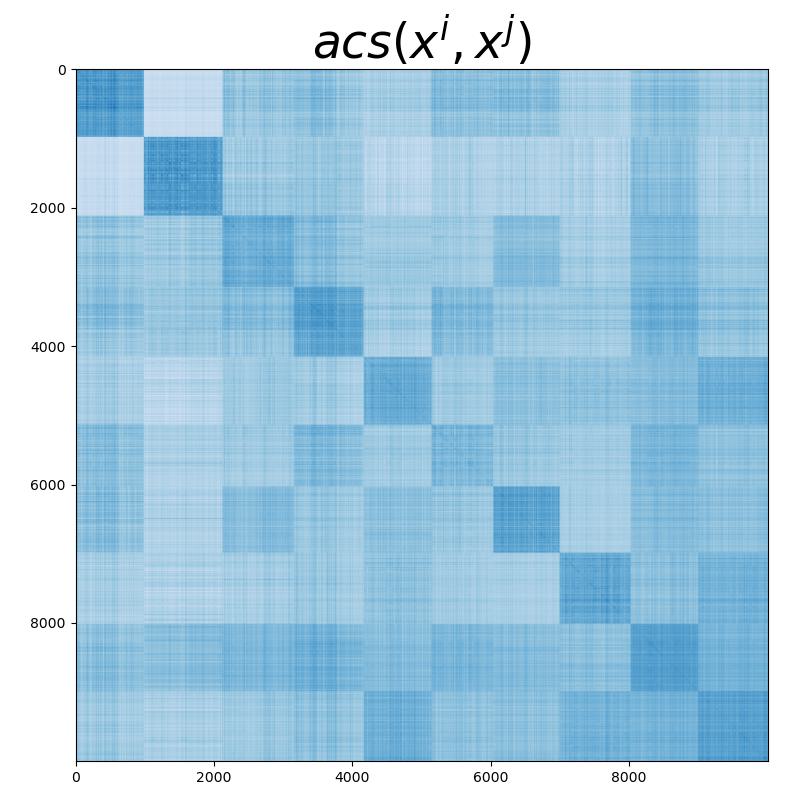}
    \includegraphics[width=0.3\textwidth]{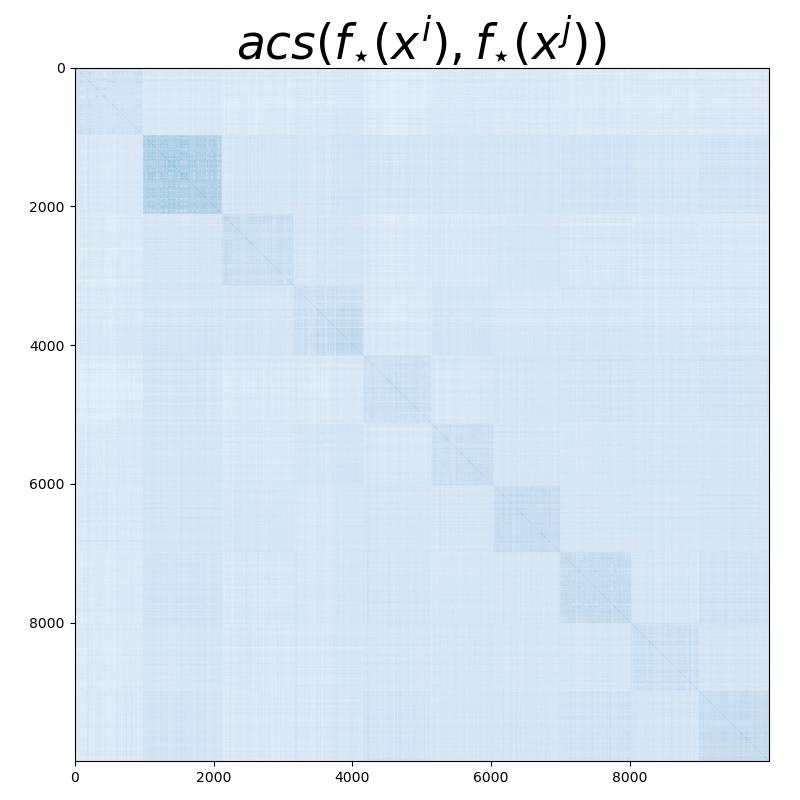}
    \includegraphics[width=\textwidth]{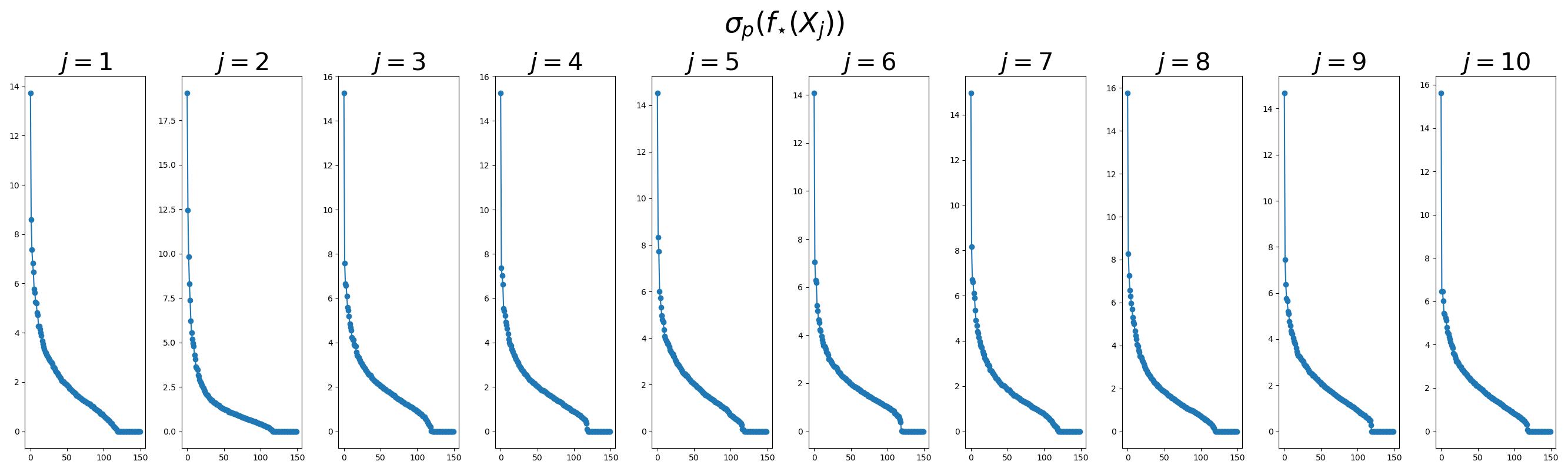}
    \includegraphics[width=\textwidth]{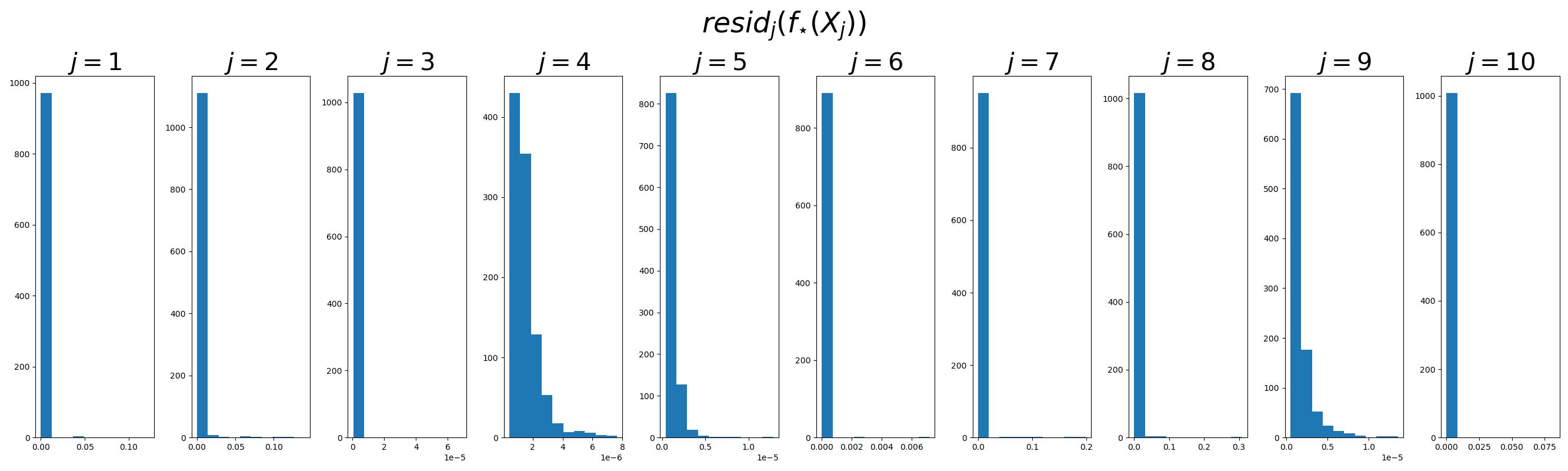}
    \caption{Behavior of our CTRL-SG instance on MNIST.}
    \label{fig:ctrl_sg_mnist}
\end{figure}

As an implementation detail, since the encoder is now a nonlinear neural network, it is intractable to directly optimize over only norm-constrained \(f\), i.e., only the \(f\) such that \(\norm{f(\X_{j})}_{F}^{2} \leq n_{j}\) for all \(j\). As such, we optimize over a \textit{smaller} class of functions, namely those \(f\) such that \(\norm{f(\x)}_{\ell^{2}} = 1\) for all \(\x\). This constraint is \textit{stricter} than the previous normalization constraint, and is much easier to implement; in particular, adding a normalization step just before the output of the encoder network enforces that \(\norm{f(\x)}_{\ell^{2}} = 1\) for all \(\x\). This choice of normalization is used in practice with large-scale networks and is shown to realize the same optima as those realized by the original normalization \citep{LDR}, as we observe in \Cref{fig:ctrl_sg_mnist}.

\section{Conclusion} \label{sec:conclusion}

In this work, we introduced the closed-loop multi-subspace pursuit (CTRL-MSP) framework for learning representations of the multiple linear subspace structure. We explicitly characterized the Stackelberg equilibria of the associated CTRL-MSP game, and provided empirical support for the characterization. Finally, we introduced a generalization, CTRL-SG, for more general representation learning, and characterized the Stackelberg equilibria of the associated game.

There are several directions for future work, both for CTRL-MSP and for CTRL-SG. First, the current analysis of CTRL-MSP holds when the data lie perfectly on linear subspaces; it may be fruitful to consider the conditions under which the addition of noise causes the learned encoder and decoder for CTRL-MSP to diverge. In addition, one may consider modifications to CTRL-MSP which are able to better handle noise. For instance, one might modify the \(\Q\) function of CTRL-MSP to also minimize the coding rate of the autoencoded representations, i.e., \(\Delta R((f \circ g \circ f)(\X) | \bPi)\), in order to avoid the phenomenon where the encoder increases the coding rate by magnifying off-subspace noise (say by adding dimensions to the representation subspaces \(f_{\star}(\S_{j})\)). Analysis of this type of scheme would simultaneously reveal properties of the original CTRL game (discussed in \Cref{sec:pca_gan_games}), which is used in practice to achieve state-of-the-art empirical results on large scale datasets \citep{LDR}. Also, it may be interesting to analyze cases where the data lies on more general non-linear manifolds. We believe that this analysis would require us to consider function classes \(\F\) and \(\G\) containing nonlinear mappings, each of which is a concatenation of simple (possibly linear) layers that iteratively and gradually deform the nonlinear manifolds to orthogonal subspaces (for \(\F\)) or reverse this process (for \(\G\)). Regarding CTRL-SG, it is possible to use the framework for other kinds of representation learning problems in different contexts, and characterize the learned encoder and decoder similarly to this work.

In conclusion, CTRL-MSP exemplifies how classical subspace learning problems can be formulated as special cases of modern representation learning problems. In general, unifying classical and modern perspectives greatly contributes towards better understanding the behavior of modern algorithms on both classical and modern problems.
\section{Acknowledgements}

We thank Peter Tong and Xili Dai of UC Berkeley for insightful discussion regarding practical optimization strategies, as well as fair comparisons to other methods.

\paragraph{Funding.} Edgar would like to acknowledge support by the NSF under grants 2046874 and 2031895. Yi would like to acknowledge the support of ONR grants N00014-20-1-2002 and N00014-22-1-2102, and the~joint Simons Foundation-NSF DMS grant \#2031899.

\bibliographystyle{plainnat} 
\bibliography{references}


\appendix 

\section*{\center Appendix}

The appendix is organized as follows. In \Cref{sec:pca_gan_games}, we discuss popular representation learning algorithms, such as PCA, GAN, and CTRL in terms of simultaneous game theory and the representation learning framework we developed. In \Cref{sec:proofs} we give proofs of all theorems in the main body. In \Cref{sec:ctrl_ssp} we discuss a specialization of CTRL-MSP, which we call CTRL-SSP, to the case where the data lies on a single linear subspace; in this section we give mathematical justification and empirical support for CTRL-SSP. In \Cref{sec:more_experiments} we provide more experimental details, a more thorough empirical evaluation of CTRL-MSP, and more detailed comparisons with other representation learning algorithms.

\section{PCA, GAN, and CTRL as Games} \label{sec:pca_gan_games}

In the main body of the paper, we use the framework of learning an encoder function \(f \in \F\) and decoder function \(g \in \G\) via a two-player \textit{sequential} game between the encoder and decoder. This is different than conventional formulations of other representation learning algorithms, including PCA, nonlinear PCA (autoencoding) \citep{kramer1991nonlinear}, GANs \citep{goodfellow2014generative,arjovsky2017wasserstein}, and CTRL \citep{LDR} as \textit{simultaneous} games between the encoder (or discriminator) and decoder (or generator). For completeness, we briefly introduce aspects of simultaneous game theory (a more detailed introduction is again found in \cite{basar1998dynamic}), and then discuss each of these frameworks in terms of our general representation learning formulation as well as simultaneous game theory.

\paragraph{Simultaneous Game Theory.} In a simultaneous game between the encoder --- playing \(f \in \F\) --- and the decoder --- playing \(g \in \G\) --- both players make their move at the same time with no information about the other player's move. As in the sequential game framework, both players rationally attempt to maximize their utility functions \(u_{\enc} \colon \F \times \G \to \R\) and \(u_{\dec} \colon \F \times \G \to \R\) respectively. The solution concept for a simultaneous game is a so-called \textit{Nash equilibrium}. Formally, \((f_{\star}, g_{\star}) \in \F \times \G\) is a Nash equilibrium if and only if
\begin{equation*}
    f_{\star} \in \argmax_{f \in \F}u_{\enc}(f, g_{\star}), \qquad g_{\star} \in \argmax_{g \in \G}u_{\dec}(f, g_{\star}).
\end{equation*}

\paragraph{PCA and Autoencoding.} PCA finds the best approximating subspace, which we denote \(\S_{\mathrm{PCA}}\), to the data in the following sense. Let \(\mathrm{Gr}(\dz, \R^{\dx})\) be the set of \(\dz\)-dimensional linear subspaces of \(\R^{\dx}\). Then, PCA solves the problem 
\begin{equation*}
    \S_{\mathrm{PCA}} \in \argmin_{S \in \mathrm{Gr}(\dz, \R^{\dx})}\sum_{i = 1}^{n}\norm{\x^{i} - \operatorname{proj}_{S}(\x^{i})}_{\ell^{2}}^{2}.
\end{equation*}
The solution is well-known and exists in closed form in terms of the SVD of the data matrix \(\X\) \citep{Wright-Ma-2022}. To formulate this problem in terms of a game between an encoder and decoder, we learn the orthogonal projection operator \(\operatorname{proj}_{S}(\cdot)\) as a rank-\(\dz\) composition of so-called \textit{semi-orthogonal} linear maps. Let \(p, q \geq 1\) be integers. For a linear map \(h \colon \R^{p} \to \R^{q}\) denote its transpose \(h^{\top} \colon \R^{q} \to \R^{p}\) as the linear map whose matrix representation is the transpose of the matrix representation of \(h\). For a set \(A\) let \(\mathrm{id}_{A} \colon A \to A\) be the identity map on \(A\). Finally, define the semi-orthogonal linear maps \(\O{\R^{p}, \R^{q}}\) by
\begin{equation*}
    \O{\R^{p}, \R^{q}} \doteq \left\{h \in \L(\R^{p}, \R^{q}) \middle|\, \begin{array}{cc}h^{\top} \circ h = \mathrm{id}_{\R^{p}} & \text{if}\ p \leq q \\ h \circ h^{\top} = \mathrm{id}_{\R^{q}} & \text{if}\ q \leq p\end{array}\right\}.
\end{equation*}
We may then define a simultaneous PCA game to learn the projection operator.
\begin{definition}[PCA Game]
    The PCA game is a two-player simultaneous game between:
    \begin{enumerate}
        \item The encoder, choosing functions \(f\) in the function class \(\F \doteq \O{\R^{\dx}, \R^{\dz}}\), and having utility function \(u_{\enc}(f, g) \doteq -\sum_{i = 1}^{n}\norm{\x^{i} - (g \circ f)(\x^{i})}_{\ell^{2}}\).
        \item The decoder, choosing functions \(g\) in the function class \(\G \doteq \O{\R^{\dz}, \R^{\dx}}\), and having utility function \(u_{\dec}(f, g) \doteq -\sum_{i = 1}^{n}\norm{\x^{i} - (g \circ f)(\x^{i})}_{\ell^{2}}\).
    \end{enumerate}
\end{definition}

There are alternate formulations of PCA as a game \citep{gemp2020eigengame}.

Autoencoder games \citep{kramer1991nonlinear} may be formulated similar to PCA games, perhaps with the same utility functions, but the function classes \(\F\) and \(\G\) are less constrained. In particular, they may include functions modelled by neural networks. This makes the Nash equilibrium analysis much more difficult.

\paragraph{GAN.} In the GAN framework \citep{goodfellow2014generative,arjovsky2017wasserstein}, \(f\) is interpreted as a discriminator and \(g\) is interpreted as a generator. Unlike the cooperative games of PCA and autoencoding, GANs train the generator and discriminator functions adversarially, in a zero-sum fashion (meaning \(u_{\enc} = -u_{\dec}\)). The generator attempts to fool the discriminator to treat generated data similarly to real data (by mapping to similar representations), while the discriminator seeks to discriminate between real data and generated data. Let \(\Z \in \R^{\dz \times n}\) be a random matrix whose entries are i.i.d.~(probably Gaussian) noise. Also, let \(\mathcal{D} \colon \R^{\dx \times n} \times \R^{\dx \times n} \to \R\) be any finite-sample estimate of the distance between the distributions which generate the columns of its first argument and second argument (for the purpose of concreteness, one may take \(f\) to be an estimator of the Jensen-Shannon divergence or Wasserstein distance). Then the GAN training game may be posited as follows.
\begin{definition}[GAN Game]
    The GAN game is a two-player simultaneous game between:
    \begin{enumerate}
        \item The discriminator, choosing functions \(f\) in the function class \(\F\), and having utility function \(u_{\enc}(f, g) \doteq \mathcal{D}(f(\X), (g \circ f)(\Z))\).
        \item The generator, choosing functions \(g\) in the function class \(\G\), and having utility function \(u_{\dec}(f, g) \doteq -\mathcal{D}(f(\X), (g \circ f)(\Z))\).
    \end{enumerate}
\end{definition}

Despite having conceptually simple foundations, GANs have technical problems. The above two-player game may not have a Nash equilibrium \citep{ozdaglar2020nash}, and even when \(\F\) and \(\G\) are very simple, e.g., linear and quadratic functions, and one assumes that equilibria exists, GANs may not converge to the equilibria \citep{tse2017understanding}. A further, very important, complexity is that the distances most commonly used for GANs, such as Wasserstein distance and Jensen-Shannon divergence \citep{arjovsky2017wasserstein}, are defined variationally and do not have a closed form for any non-trivial distributions (even mixtures of Gaussians). Thus, one has to approximate the distance via another distance function which is more tractable \citep{arjovsky2017wasserstein}; this estimate becomes worse as the data dimension increases \citep{Wright-Ma-2022}, and is thus less suitable for high-dimensional data.

\paragraph{CTRL.} In CTRL \citep{LDR}, the function \(f\) is both an encoder and a discriminator, while \(g\) is both a decoder and generator. Closed-loop training compares the representations \(f(\X)\) and the representations of the autoencoded data \((f \circ g \circ f)(\X)\). More specifically, using the rate reduction difference measure \(\Delta R(\cdot, \cdot)\) discussed in \Cref{sub:rate_reduction}, the difference measure between the distributions generating the representations of the data and of the autoencoded data is estimated by the class-wise difference measure \(\sum_{j = 1}^{k}\Delta R(f(\X_{j}), (f \circ g \circ f)(\X_{j}))\). Similar to the GAN game, the encoder attempts to maximize this quantity, and the decoder attempts to minimize it. Unlike the GAN game, CTRL also seeks to achieve structured representations; in order to do this, the encoder jointly attempts to maximize the representation quality of both the data and the autoencoded data, which is represented by an additive term \(\Delta R(f(\X) \mid \bPi) + \Delta R((f \circ g \circ f)(\X) \mid \bPi)\). Formalizing this gives the CTRL game.

\begin{definition}[CTRL Game]
    The CTRL game is a two-player simultaneous game between:
    \begin{enumerate}
        \item The encoder, choosing functions \(f\) in the function class \(\F\), and having utility function 
        \begin{align*}
            u_{\enc}(f, g) 
            &\doteq \Delta R(f(\X) | \bPi) + \Delta R((f \circ g \circ f)(\X) | \bPi) \\
            &\quad + \sum_{j = 1}^{k}\Delta R(f(\X_{j}), (f \circ g \circ f)(\X_{j})).
        \end{align*}
        \item The decoder, choosing functions \(f\) in the function class \(\F\), and having utility function \(u_{\dec}(f, g) \doteq -u_{\enc}(f, g)\).
    \end{enumerate}
\end{definition}

Theoretical analysis of this particular simultaneous game is still an open problem, though this formulation \citep{LDR} and other closely related formulations \citep{tong2022incremental} have achieved good empirical results. 

\newpage
\section{Proof of \Cref{thm:ctrl_msp}} \label{sec:proofs}

The proof of \Cref{thm:ctrl_msp} is split up into several steps, which we capture into a few lemmas.
\begin{enumerate}
    \item We first show that the CTRL-MSP game is a particular instance of the CTRL-SG game. We show that if the CTRL-MSP assumptions hold then the CTRL-SG assumptions hold.
    \item We then characterize the optima of the quality and consistency functions of the CTRL-MSP game.
\end{enumerate}

\begin{lemma}\label{lem:ctrl_msp_instance_ctrl_sg}
    The CTRL-MSP game is an instance of the CTRL-SG game. Moreover, if the CTRL-MSP assumptions hold then the CTRL-SG assumptions hold.
\end{lemma}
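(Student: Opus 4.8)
The plan is to prove the two assertions in turn, leaning on the explicit identification already recorded after \Cref{def:ctrl_sg}. For the first assertion I would simply substitute $\F := \{f \in \L(\R^{d_{x}}, \R^{d_{z}}) \mid \norm{f(\X_{j})}_{F}^{2} \leq n_{j}\ \forall j\}$, $\G := \L(\R^{d_{z}}, \R^{d_{x}})$, $\E(f) := \Delta R(f(\X) \mid \bPi)$, and $\C(f, g) := -\sum_{j=1}^{k}\Delta R(f(\X_{j}), (f \circ g \circ f)(\X_{j}))$ into the CTRL-SG utilities of \Cref{def:ctrl_sg}, and verify that $u_{\enc} = \E - \C$ and $u_{\dec} = \C$ reproduce exactly the CTRL-MSP utilities of \Cref{def:ctrl_msp}. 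This is essentially a one-line check and reduces the lemma to verifying the three items of \Cref{a:ctrl_sg} for this particular $(\F, \G, \E, \C)$.

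For \Cref{a:ctrl_sg}.1 (expressiveness attains its maximum) I would exploit that $\E(f)$ depends on $f$ only through the matrix $f(\X) \in \R^{d_{z} \times n}$. Writing $\Phi \colon f \mapsto f(\X)$, which is linear, the achievable set $\Phi(\F)$ equals the intersection of the image subspace $\Phi(\L(\R^{d_{x}},\R^{d_{z}}))$ with the closed, bounded block-Frobenius ball $\{\Z = [\Z_{1},\dots,\Z_{k}] : \norm{\Z_{j}}_{F}^{2} \le n_{j}\ \forall j\}$, and is therefore compact and nonempty (it contains $0$). Since $\Z \mapsto \Delta R(\Z \mid \bPi)$ is continuous in the entries of $\Z$, the extreme value theorem yields a maximizer $\Z^{\star} \in \Phi(\F)$; pulling back any $f_{\star} \in \Phi^{-1}(\Z^{\star}) \cap \F$ then shows $\argmax_{f \in \F}\E(f)$ is nonempty.

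The crux is \Cref{a:ctrl_sg}.2 and \Cref{a:ctrl_sg}.3, which I would handle simultaneously by proving that $\max_{g \in \G}\C(f, g) = 0$ for every $f \in \F$. For the upper bound $\C \le 0$ I would show each summand is nonnegative: the pooled rate can be rewritten as $R(\Z_{1} \cup \Z_{2}) = \frac{1}{2}\logdet{\frac{1}{2}((\I + A_{1}) + (\I + A_{2}))}$, where $A_{i}$ denotes the normalized Gram term $\frac{d_{z}}{m\eps^{2}}\Z_{i}\Z_{i}^{\top}$ appearing inside $R(\Z_{i})$ (with $m$ the shared column count), so concavity of $\logdet{\cdot}$ on positive definite matrices gives $\Delta R(\Z_{1}, \Z_{2}) \ge 0$. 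For attainment, given $f$ I set $V := \Col{\X} = \sum_{j}\S_{j}$ (using informative data) and construct $g$ as a right inverse of $f$ on $f(V)$: choosing for each basis vector of $f(V)$ a preimage in $V$ under $f|_{V}$, extending linearly, and extending by zero on a complement of $f(V)$ in $\R^{d_{z}}$ produces a genuine $g \in \G$ with $(f \circ g)|_{f(V)} = \mathrm{id}$. Since the columns of $f(\X_{j})$ lie in $f(V)$, this forces $(f \circ g \circ f)(\X_{j}) = f(\X_{j})$ and hence $\Delta R(f(\X_{j}), (f \circ g \circ f)(\X_{j})) = \Delta R(f(\X_{j}), f(\X_{j})) = 0$ for every $j$, so $\C(f, g) = 0$. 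This shows at once that $\argmax_{g}\C(f, \cdot)$ is nonempty (\Cref{a:ctrl_sg}.2) and that $f \mapsto \max_{g}\C(f, g)$ equals the constant $0$ (\Cref{a:ctrl_sg}.3).

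The main obstacle is precisely the last paragraph: one must produce a \emph{single} decoder $g$ that annihilates every class-wise difference term at once, and pair this explicit construction with the concavity-based lower bound to pin the optimal compatibility value at the constant $0$. The right-inverse construction is the natural candidate, and the only thing to watch is that $g$ be well-defined as a linear map on all of $\R^{d_{z}}$ (not merely on $f(V)$), which the extend-by-zero choice guarantees; notably, no dimension counting beyond $f(V) \subseteq \R^{d_{z}}$ is needed here, as the finer spectral constraints of \Cref{a:ctrl_msp} are only invoked in the separate characterization of $\argmax_{f}\E(f)$.
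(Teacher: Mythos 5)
Your proposal is correct and follows the same overall skeleton as the paper's proof (verify the correspondence, then check the three items of \Cref{a:ctrl_sg}), but two of the ingredients are genuinely different and worth noting. For \Cref{a:ctrl_sg}.1, the paper handles the non-compactness of \(\F\) by restricting to the compact subclass \(\F' = \F \cap \{f \mid f(\S^{\perp}) = \{\bm{0}\}\}\) with \(\S = \Span{\bigcup_j \S_j}\) and arguing that this does not change the maximum; you instead push forward along the linear map \(\Phi \colon f \mapsto f(\X)\) and apply the extreme value theorem to the compact set \(\Phi(\F)\) (a closed subspace intersected with a block-Frobenius ball) before pulling back a maximizer. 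Both devices are valid; yours has the mild advantage of making it transparent that \(\E\) factors through \(f(\X)\), which is also the structural fact exploited later in \Cref{lem:max_delta_R}. For \Cref{a:ctrl_sg}.2--3, the paper cites Lemma A.4 of the MCR\(^2\) paper for \(\Delta R(\Z_1,\Z_2) \geq 0\) and takes \(g = f^{+}\), using the pseudoinverse identity \(f \circ f^{+} \circ f = f\); you reprove the nonnegativity from concavity of \(\operatorname{logdet}\) on the positive definite cone and build an explicit right inverse of \(f\) on \(f(\Col{\X})\) extended by zero. Your version is more self-contained (it does not import the external lemma) at the cost of a slightly longer construction, and it correctly observes that no dimension counting or spectral assumption is needed at this stage; the pseudoinverse route is just the same right-inverse idea packaged as a standard identity. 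Both arguments pin \(\max_{g}\C(f,g)\) at the constant \(0\), so the lemma follows either way.
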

\begin{proof}
    The CTRL-MSP game is an instance of the CTRL-SG game, with the following correspondences:
    \begin{itemize}
        \item \(\F \doteq \{f \in \L(\R^{\dx}, \R^{\dz}) \mid \norm{f(\X_{j})}_{F}^{2} \leq n_{j}\ \forall j \in \{1, \dots, k\}\}\).
        \item \(\G \doteq \L(\R^{\dz}, \R^{\dx})\).
        \item \(\Q \colon f \mapsto \Delta R(f(\X) \mid \bPi)\).
        \item \(\C \colon (f, g) \mapsto -\sum_{j = 1}^{k}\Delta R(f(\X_{j}), (f \circ g \circ f)(\X_{j}))\).
    \end{itemize}

    We now show that the CTRL-SG assumptions hold.
    First, we claim that \(\argmax_{f \in \F}\Q(f) = \argmax_{f \in \F}\Delta R(f(\X) \mid \bPi)\) is nonempty. 
    Let \(\S \doteq \Span{\bigcup_{j = 1}^{k}\S_{j}}\). While \(\Q\) is continuous in \(f\), compactness (required for the usual argument showing the existence of maxima) is not immediate from the definition: 
    linear maps in \(\F\) are controlled only on \(\S\) and may have arbitrarily large operator norms on \(\S^{\perp}\), thus making \(\F\) an unbounded set and not compact. 
    To remedy this, consider the related problem of optimization over the set
    \begin{equation*}
        \F' \doteq \F \cap \left\{f \in \L(\R^{\dx}, \R^{\dz}) \middle|\, f(\S^{\perp}) = \{\bm{0}\}\right\}.
    \end{equation*}
    Now we have 
    \begin{equation*}
        \max_{f \in \F}\Q(f) = \max_{f \in \F'}\Q(f) \qquad \text{and} \qquad \argmax_{f \in \F'}\Q(f) \subseteq \argmax_{f \in \F}\Q(f).
    \end{equation*}
    Thus, it suffices to show that \(\argmax_{f \in \F'}\Q(f)\) is nonempty. Clearly \(\F'\) is compact. The extreme value theorem holds for optimizing \(\Q\) over \(\F'\), and the claim is proved.
    
    Now we claim that \(\argmax_{g \in \G}\C(f, g) = \argmin_{g \in \G}\sum_{j = 1}^{k}\Delta R(f(\X_{j}), (f \circ g \circ f)(\X_{j}))\) exists for every \(f\). Indeed, \Cref{lem:min_delta_R_distance} shows that \(\Delta R(\Z_{1}, \Z_{2}) \geq 0\) with equality if and only if \(\Z_{1}\Z_{1}^{\top} = \Z_{2}\Z_{2}^{\top}\), so \(\C(f, g) \leq 0\) for all \((f, g) \in \F \times \G\). If \(f^{+}\) is taken to be the Moore-Penrose pseudoinverse of \(f\), then \(f \circ f^{+} \circ f = f\) and \(\C(f, f^{+}) = 0\), which implies \(f^{+} \in \argmax_{g \in \G}\C(f, g)\). This implies that the set of maximizers is nonempty, proving the claim.
    
    Finally, we claim that the function \(f \mapsto \max_{g \in \G}\C(f, g)\) is constant. Indeed, by the choice of \(g = f^{+}\) which is well-defined for all linear maps \(f\), this function is constantly zero, as desired.
\end{proof}

Now, we can separate the utility and analyze its parts; this is the main theoretical benefit of casting the problem in the CTRL-SG paradigm.

\begin{lemma}\label{lem:max_delta_R}
    Suppose the CTRL-MSP assumptions hold, and let \(\F, \G\) be defined as in the CTRL-MSP game. Then any \(f_{\star} \in \argmax_{f \in \F}\Delta R(f(\X) \mid \bPi)\) enjoys properties 2a and 2b from \Cref{thm:ctrl_msp}.
\end{lemma}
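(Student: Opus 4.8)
The plan is to reduce the constrained maximization of \(f \mapsto \Delta R(f(\X)\mid\bPi)\) to a finite-dimensional problem over the per-class second-moment matrices, and then to solve that problem by cleanly separating a \emph{between-class} orthogonality effect from a \emph{within-class} spectral effect. Observe first that the objective depends on \(f\) only through the matrices \(\bSigma_j := f(\X_j)f(\X_j)^\top \succeq 0\). Writing a thin SVD \(\X_j = U_j\Lambda_j V_j^\top\), where \(\Col{U_j} = \S_j\) has dimension \(d_{\S_j}\) by the informative-data assumption, gives \(f(\X_j) = (fU_j)\Lambda_j V_j^\top\), hence \(\bSigma_j = (fU_j)\Lambda_j^2(fU_j)^\top\), with \(fU_j\) ranging over all of \(\R^{d_z\times d_{\S_j}}\). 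Crucially, by the incoherent-class-data assumption the sum \(\sum_j\S_j\) is direct, so the blocks \(fU_j\) may be chosen independently across \(j\); and the large-enough-representation-space assumption leaves room in \(\R^{d_z}\) to realize any prescribed column spaces. I would thus argue that maximizing \(\E(f)\) over \(f\in\F\) is equivalent to maximizing
\[
\frac{1}{2}\logdet{\I_{d_z} + \frac{d_z}{n\eps^2}\textstyle\sum_{j}\bSigma_j} - \sum_{j}\frac{n_j}{2n}\logdet{\I_{d_z} + \frac{d_z}{n_j\eps^2}\bSigma_j}
\]
over PSD tuples \((\bSigma_1,\dots,\bSigma_k)\) subject to \(\rank{\bSigma_j}\le d_{\S_j}\) and \(\tr{\bSigma_j}\le n_j\) (the Frobenius constraint defining \(\F\)), with freely choosable column spaces.

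To separate the two effects, I would bound the expansion term by the determinant inequality \(\det{\I + \sum_j\bm{A}_j} \le \prod_j\det{\I + \bm{A}_j}\), valid for PSD \(\bm{A}_j\), which yields
\[
\Delta R \le \sum_j\Bigl[\tfrac12\logdet{\I_{d_z} + \tfrac{d_z}{n\eps^2}\bSigma_j} - \tfrac{n_j}{2n}\logdet{\I_{d_z} + \tfrac{d_z}{n_j\eps^2}\bSigma_j}\Bigr].
\]
This upper bound decouples completely across classes. Moreover, tracking the equality condition of the determinant inequality—equality holds exactly when \(\bSigma_i\bSigma_j = 0\) for \(i\ne j\), i.e. when the ranges \(\{\Col{\bSigma_j}\}\) are pairwise orthogonal—is what will force property 2b for every maximizer.

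For the within-class effect, I would diagonalize each decoupled summand. Writing the eigenvalues of \(\bSigma_j\) as \(\lambda_1\ge\cdots\ge\lambda_{d_{\S_j}}\ge 0\), each summand becomes \(\sum_i h_j(\lambda_i)\), where
\[
h_j(\lambda) = \tfrac12\log{1 + \tfrac{d_z}{n\eps^2}\lambda} - \tfrac{n_j}{2n}\log{1 + \tfrac{d_z}{n_j\eps^2}\lambda},
\]
to be maximized subject to \(\sum_i\lambda_i \le n_j\). I would show \(h_j\) is strictly increasing (so the trace constraint is active, \(\sum_i\lambda_i = n_j\)) and that \(h_j\) is convex near \(0\) and concave for large \(\lambda\), i.e. it has a single inflection point. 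For such a convex-then-concave profile, a KKT/majorization argument shows the optimal eigenvalues take at most two distinct positive levels with at most one eigenvalue at the lower level, which is exactly the dichotomy of property 2a: either all \(d_{\S_j}\) are equal to \(n_j/d_{\S_j}\), or \(d_{\S_j}-1\) share a common larger value and one is strictly smaller but positive.

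I expect this within-class spectral analysis to be the main obstacle: pinning down the exact convex/concave transition of \(h_j\), proving that such a profile forces the claimed equal-or-nearly-equal eigenvalue pattern rather than some intermediate spread, and—using the high-coding-precision assumption—placing the relevant \(\lambda_i\) in the concave regime and keeping \(\lambda_{d_{\S_j}}\) strictly positive, so that the optimal \(\bSigma_j\) has full rank \(d_{\S_j}\) and \(\dim{f_\star(\S_j)} = d_{\S_j}\) (the injectivity claim). A secondary subtlety is that, once the within-class spectra are pinned to their essentially unique optima, I must invoke the equality case of the determinant inequality to conclude that attaining the global maximum forces exact orthogonality of the column spaces \(\Col{\bSigma_j^\star}=f_\star(\S_j)\), so that 2b holds for \emph{every} maximizer and not merely for some.
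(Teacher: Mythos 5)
Your proposal takes essentially the same route as the paper's proof: the class-decoupling bound you derive from \(\det{\I + \sum_{j}\bm{A}_{j}} \leq \prod_{j}\det{\I + \bm{A}_{j}}\) for PSD summands, with equality exactly when the ranges are pairwise orthogonal, is precisely the content of Lemma A.5 of \citep{OriginalMCR2} that the paper invokes to force property 2b, and your increasing, convex-then-concave analysis of \(h_{j}\) under the active trace constraint (using the high-coding-precision assumption to land in the right regime and keep the smallest eigenvalue positive) is precisely the content of Lemma A.7 of \citep{OriginalMCR2} that the paper invokes for property 2a. The only cosmetic differences are that you inline proofs of those two cited lemmas and phrase the reduction in terms of the freely choosable second-moment matrices \(\bSigma_{j}\), whereas the paper realizes the same attainability via an explicit SVD construction of a competitor \(\widetilde{f} \in \F\) in a contradiction argument; the substance is identical.
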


\begin{proof}
    First, since \(f_{\star} \in \F\) is linear, \(f_{\star}(\S_{j})\) is a linear subspace; further, \(\dim{f_{\star}(\S_{j})} \leq \dSj\).
    We now claim that the subspaces \(\{f_{\star}(\S_{j})\}_{j = 1}^{k}\) are orthogonal. Since \(f_{\star}(\S_{j}) = \Col{f_{\star}(\X_{j})}\), this is equivalent to the columns of \(f_{\star}(\X_{j})\) being orthogonal to the columns of \(f_{\star}(\X_{\ell})\) for all \(\ell \neq j\), i.e., \(f_{\star}(\X_{j})^{\top}f_{\star}(\X_{\ell}) = \bm{0}\). 
    
    The essential tool we use to show that the \(f_{\star}(\X_{j})\) have orthogonal columns is Lemma A.5 of \cite{OriginalMCR2}, which states that, for matrices \(\Z_{j} \in \R^{\dz \times n_{j}}\) which are collected in a matrix \(\Z \in \R^{\dz \times n}\), 
    \begin{align}\label{drb}
        \Delta R(\Z \mid \bPi) 
        \leq \frac{1}{2n}\sum_{j = 1}^{k}\sum_{p = 1}^{\dSj}\log{\frac{\left(1 + \frac{\dz}{n\eps^{2}}\sigma_{p}(\Z_{j})^{2}\right)^{n}}{\left(1 + \frac{\dz}{n_{j}\eps^{2}}\sigma_{p}(\Z_{j})^{2}\right)^{n_{j}}}}
    \end{align}
    with equality if and only if \(\Z_{j}^{\top}\Z_{\ell}=0\) for all \(1 \leq j < \ell \leq k\).
    
    Suppose for the sake of contradiction that \(f_{\star}(\X_{j})^{\top}f_{\star}(\X_{\ell}) \neq \bm{0}\) for some \(1 \leq j < \ell \leq k\). Since \(\dz \geq \sum_{j = 1}^{k}\dSj\) and the subspaces \(\S_{j}\), $j=1,\ldots,k$ have linearly independent bases, one can construct via the SVD another linear map \(\widetilde{f} \in \L(\R^{\dx}, \R^{\dz})\) such that
    \begin{itemize}
        \item \(\sigma_{p}(f_{\star}(\X_{j})) = \sigma_{p}(\widetilde{f}(\X_{j}))\), for \(1 \leq p \leq \dSj\) and \(1 \leq j \leq k\).
        \item \(\widetilde{f}(\X_{j})^{\top}\widetilde{f}(\X_{\ell}) = 0\) for all for \(1 \leq j < \ell \leq k\).
    \end{itemize}
    Then for each \(j\) we have 
    \begin{equation*}
        \norm{\widetilde{f}(\X_{j})}_{F}^{2} = \sum_{p = 1}^{\dSj}\sigma_{p}^2(\widetilde{f}(\X_{j})) = \sum_{p = 1}^{\dSj}\sigma_{p}^2(f_{\star}(\X_{j})) = \norm{f_{\star}(\X_{j})}_{F}^{2} \leq n_{j}
    \end{equation*}
    so we have \(\widetilde{f} \in \F\). Further, since equality holds in the inequality \eqref{drb} of Lemma A.5 of \cite{OriginalMCR2} for \(\widetilde{f}\) but not for \(f_{\star}\), we have 
    \begin{align*}
        \Delta R(\widetilde{f}(\X) \mid \bPi)
        &= \frac{1}{2n}\sum_{j = 1}^{k}\sum_{p = 1}^{\dSj}\log{\frac{\left(1 + \frac{\dz}{n\eps^{2}}\sigma_{p}(\widetilde{f}(\X_{j}))^{2}\right)^{n}}{\left(1 + \frac{\dz}{n_{j}\eps^{2}}\sigma_{p}(\widetilde{f}(\X_{j}))^{2}\right)^{n_{j}}}} \\
        &= \frac{1}{2n}\sum_{j = 1}^{k}\sum_{p = 1}^{\dSj}\log{\frac{\left(1 + \frac{\dz}{n\eps^{2}}\sigma_{p}(f_{\star}(\X_{j}))^{2}\right)^{n}}{\left(1 + \frac{\dz}{n_{j}\eps^{2}}\sigma_{p}(f_{\star}(\X_{j}))^{2}\right)^{n_{j}}}} \\
        &> \Delta R(f_{\star}(\X) \mid \bPi).
    \end{align*}
    Thus \(f_{\star}\) does not maximize \(f \mapsto \Delta R(f(\X) \mid \bPi)\) over \(f \in \F\), a contradiction. Thus, we must have that \(f_{\star}(\X_{j})^{\top}f_{\star}(\X_{\ell}) = \bm{0}\) for all \(1 \leq j < \ell \leq k\), and so the \(\{f_{\star}(\S_{j})\}_{j = 1}^{k}\) are orthogonal subspaces.
    
    Now, we claim that either \(\sigma_{1}(f_{\star}(\X_{j})) = \cdots = \sigma_{\dSj}(f_{\star}(\X_{j})) = \frac{n_{j}}{d_{j}}\), or \(\sigma_{1}(f_{\star}(\X_{j})) = \cdots = \sigma_{\dSj - 1}(f_{\star}(\X_{j})) \in (\frac{n_{j}}{d_{j}}, \frac{n_{j}}{d_{j} - 1})\) and \(\sigma_{\dSj}(f_{\star}(\X_{j})) > 0\). 
    To show this, the general approach is to isolate the effect of \(f_{\star}\) on each \(\X_{j}\). In particular, fix \(t \in \{1, \dots, k\}\). We claim that
    \begin{equation}\label{fmax}
        f_{\star} \in \argmax_{f \in \F}\sum_{p = 1}^{\dSt}\log{\frac{\left(1 + \frac{\dz}{n\eps^{2}}\sigma_{p}(f(\X_{t}))^{2}\right)^{n}}{\left(1 + \frac{\dz}{n_{t}\eps^{2}}\sigma_{p}(f(\X_{t}))^{2}\right)^{n_{t}}}}.
    \end{equation}
    Indeed, suppose that this does not hold,  
    and there exists \(\widehat{f} \in \F\) such that
    \begin{equation*}
        \sum_{p = 1}^{\dSt}\log{\frac{\left(1 + \frac{\dz}{n\eps^{2}}\sigma_{p}(f_{\star}(\X_{t}))^{2}\right)^{n}}{\left(1 + \frac{\dz}{n_{t}\eps^{2}}\sigma_{p}(f_{\star}(\X_{t}))^{2}\right)^{n_{t}}}} < \sum_{p = 1}^{\dSt}\log{\frac{\left(1 + \frac{\dz}{n\eps^{2}}\sigma_{p}(\widehat{f}(\X_{t}))^{2}\right)^{n}}{\left(1 + \frac{\dz}{n_{t}\eps^{2}}\sigma_{p}(\widehat{f}(\X_{t}))^{2}\right)^{n_{t}}}}.
    \end{equation*}
    Then, again since \(\dz \geq \sum_{j = 1}^{k}\dSj\) and the subspaces \(\S_{j}\) have linearly independent bases, one can construct another linear map \(\widetilde{f} \in \L(\R^{\dx}, \R^{\dz})\) such that
    \begin{itemize}
        \item \(\sigma_{p}(\widetilde{f}(\X_{t})) = \sigma_{p}(\widehat{f}(\X_{t}))\), for \(1 \leq p \leq \dSt\).
        \item \(\sigma_{p}(\widetilde{f}(\X_{j})) = \sigma_{p}(f_{\star}(\X_{j}))\), for \(1 \leq p \leq \dSj\), \(1 \leq j \leq k\) with \(j \neq t\).
        \item \(\widetilde{f}(\X_{j})^{\top}\widetilde{f}(\X_{\ell}) = \bm{0}\) for \(1 \leq j < \ell \leq k\).
    \end{itemize}
    For the same reason as in the previous claim, \(\widetilde{f} \in \F\). 
    Moreover, \(\Delta R(\widetilde{f}(\X) \mid \bPi) > \Delta R(f_{\star}(\X) \mid \bPi)\), because
    \begin{align*}
        &2n\cdot \Delta R(\widetilde{f}(\X) \mid \bPi) \\
        &= \sum_{p = 1}^{\dSt}\log{\frac{\left(1 + \frac{\dz}{n\eps^{2}}\sigma_{p}(\widetilde{f}(\X_{t}))^{2}\right)^{n}}{\left(1 + \frac{\dz}{n_{t}\eps^{2}}\sigma_{p}(\widetilde{f}(\X_{t}))^{2}\right)^{n_{t}}}} + \sum_{\substack{j = 1 \\ j \neq t}}^{k}\sum_{p = 1}^{\dSj}\log{\frac{\left(1 + \frac{\dz}{n\eps^{2}}\sigma_{p}(\widetilde{f}(\X_{j}))^{2}\right)^{n}}{\left(1 + \frac{\dz}{n_{j}\eps^{2}}\sigma_{p}(\widetilde{f}(\X_{j}))^{2}\right)^{n_{j}}}} \\
        &= \sum_{p = 1}^{\dSt}\log{\frac{\left(1 + \frac{\dz}{n\eps^{2}}\sigma_{p}(\widehat{f}(\X_{t}))^{2}\right)^{n}}{\left(1 + \frac{\dz}{n_{t}\eps^{2}}\sigma_{p}(\widehat{f}(\X_{t}))^{2}\right)^{n_{t}}}} + \sum_{\substack{j = 1 \\ j \neq t}}^{k}\sum_{p = 1}^{\dSj}\log{\frac{\left(1 + \frac{\dz}{n\eps^{2}}\sigma_{p}(f_{\star}(\X_{j}))^{2}\right)^{n}}{\left(1 + \frac{\dz}{n_{j}\eps^{2}}\sigma_{p}(f_{\star}(\X_{j}))^{2}\right)^{n_{j}}}}.
    \end{align*}
    This is strictly lower bounded by
    \begin{align*}
        &\sum_{p = 1}^{\dSt}\log{\frac{\left(1 + \frac{\dz}{n\eps^{2}}\sigma_{p}(f_{\star}(\X_{t}))^{2}\right)^{n}}{\left(1 + \frac{\dz}{n_{t}\eps^{2}}\sigma_{p}(f_{\star}(\X_{t}))^{2}\right)^{n_{t}}}} + \sum_{\substack{j = 1 \\ j \neq t}}^{k}\sum_{p = 1}^{\dSj}\log{\frac{\left(1 + \frac{\dz}{n\eps^{2}}\sigma_{p}(f_{\star}(\X_{j}))^{2}\right)^{n}}{\left(1 + \frac{\dz}{n_{j}\eps^{2}}\sigma_{p}(f_{\star}(\X_{j}))^{2}\right)^{n_{j}}}}\\
        &= 2n\cdot \Delta R(f_{\star}(\X) \mid \bPi).
    \end{align*}
    Thus \(f_{\star}\) is not a maximizer of \(f \mapsto \Delta R(f(\X) \mid \bPi)\), which is a contradiction. Hence, \eqref{fmax} follows.
    
    To finish, we may now solve the problem in terms of the singular values of \(f(\X_{t})\). Indeed, from the above optimization problem and the definition of \(\F\), the singular values \(\sigma_{p}(f_{\star}(\X_{t}))\)  are the solutions of the scalar optimization problem
    \begin{align*}
        \max_{\sigma_{1}, \dots, \sigma_{\dSt} \in \R} \quad & \sum_{p = 1}^{\dSt} \log{\frac{\left(1 + \frac{\dz}{n\eps^{2}}\sigma_{p}^{2}\right)^{n}}{\left(1 + \frac{\dz}{n_{t}\eps^{2}}\sigma_{p}^{2}\right)^{n_{t}}}} \\
        \text{s.t.} \quad & \sigma_{1} \geq \cdots \geq \sigma_{\dSt} \geq 0, \qquad
         \sum_{p = 1}^{\dSt}\sigma_{p}^{2} = n_{j}.
    \end{align*}
    Given the assumption that \(\eps\) is small enough, Lemma A.7 of \cite{OriginalMCR2} says that the solutions to this optimization problem either fulfill \(\sigma_{1} = \cdots = \sigma_{\dSt} = \frac{n_{j}}{\dSt}\) or \(\sigma_{1} = \cdots = \sigma_{\dSt - 1} \in (\frac{n_{t}}{\dSt}, \frac{n_{t}}{\dSt - 1})\) and \(\sigma_{\dSt} > 0\) as desired, where if \(\dSt = 1\) then \(\frac{n_{t}}{\dSt - 1}\) is interpreted as \(+\infty\). 
    This also confirms that \(\dim{f_{\star}(\S_{t})} = \rank{f_{\star}(\X_{t})} = \dSt\).
\end{proof}

We now characterize the maximizers of the consistency function.

\begin{lemma}\label{lem:min_delta_R}
    Suppose the CTRL-MSP assumptions hold, and let \(\F, \G\) be defined as in the CTRL-MSP game. Let \(f_{\star} \in \F\). Then for any 
    \(g_{\star} \in \argmin_{g \in \G}\sum_{j = 1}^{k}\Delta R(f_{\star}(\X_{j}), (f_{\star} \circ g_{\star} \circ f_{\star})(\X_{j}))\), we have
    property 2c from \Cref{thm:ctrl_msp}.
\end{lemma}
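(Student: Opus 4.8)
The plan is to leverage the nonnegativity of the rate-reduction difference measure together with the fact, already established in the proof of \Cref{lem:ctrl_msp_instance_ctrl_sg}, that the minimum value of the objective \(\sum_{j = 1}^{k}\Delta R(f_{\star}(\X_{j}), (f_{\star} \circ g \circ f_{\star})(\X_{j}))\) over \(g \in \G\) is exactly zero, attained by the Moore--Penrose pseudoinverse \(g = f_{\star}^{+}\). The entire argument then reduces to reading off what \emph{attaining} this zero minimum means term by term.

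Concretely, I would first recall that Lemma A.4 of \citep{OriginalMCR2} gives \(\Delta R(\Z_{1}, \Z_{2}) \geq 0\), with equality if and only if \(\Z_{1}\Z_{1}^{\top} = \Z_{2}\Z_{2}^{\top}\). Hence every summand of the objective is nonnegative, and since \(g = f_{\star}^{+}\) makes each summand vanish, the minimum value is \(0\). Therefore any minimizer \(g_{\star}\) satisfies \(\sum_{j = 1}^{k}\Delta R(f_{\star}(\X_{j}), (f_{\star} \circ g_{\star} \circ f_{\star})(\X_{j})) = 0\); as this is a sum of nonnegative terms, each term must individually equal zero, so for every \(j\) we have \(\Delta R(f_{\star}(\X_{j}), (f_{\star} \circ g_{\star} \circ f_{\star})(\X_{j})) = 0\). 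Invoking the equality case of Lemma A.4 then yields, for each \(j\),
\begin{equation*}
    f_{\star}(\X_{j})f_{\star}(\X_{j})^{\top} = (f_{\star} \circ g_{\star} \circ f_{\star})(\X_{j})\,(f_{\star} \circ g_{\star} \circ f_{\star})(\X_{j})^{\top}.
\end{equation*}

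The final step translates this Gram-matrix identity into the desired subspace equality. Using the standard fact that \(\Col{\bm{A}} = \Col{\bm{A}\bm{A}^{\top}}\) for any matrix \(\bm{A}\) --- which follows from the inclusion \(\Col{\bm{A}\bm{A}^{\top}} \subseteq \Col{\bm{A}}\) together with the rank identity \(\rank{\bm{A}} = \rank{\bm{A}\bm{A}^{\top}}\) --- the equality of the two Gram matrices gives \(\Col{f_{\star}(\X_{j})} = \Col{(f_{\star} \circ g_{\star} \circ f_{\star})(\X_{j})}\). Since \(f_{\star}\) is linear and, by the informative-data assumption, \(\Col{\X_{j}} = \S_{j}\), we have \(f_{\star}(\S_{j}) = \Col{f_{\star}(\X_{j})}\) and likewise \((f_{\star} \circ g_{\star} \circ f_{\star})(\S_{j}) = \Col{(f_{\star} \circ g_{\star} \circ f_{\star})(\X_{j})}\), establishing property 2c. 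The argument is essentially mechanical once the zero-minimum fact is in hand, so there is no serious obstacle; the one point deserving care is the passage from the covariance identity \(\bm{A}\bm{A}^{\top} = \bm{B}\bm{B}^{\top}\) to the column-space identity, which must be justified via the rank argument rather than by improperly canceling transposes.
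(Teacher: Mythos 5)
Your proposal is correct and follows essentially the same route as the paper's proof: nonnegativity of \(\Delta R(\cdot,\cdot)\) from Lemma A.4 of \citep{OriginalMCR2}, attainment of the zero minimum via the pseudoinverse \(f_{\star}^{+}\), termwise vanishing, the equality case giving equal Gram matrices, and finally the passage to equal column spaces. The only cosmetic difference is in that last step, where you use \(\Col{\bm{A}} = \Col{\bm{A}\bm{A}^{\top}}\) via the rank identity while the paper exhibits a matrix \(\bm{W}_{j}\) with \(f_{\star}(\X_{j}) = (f_{\star} \circ g_{\star} \circ f_{\star})(\X_{j})\bm{W}_{j}\); both are standard and equivalent, and your phrasing is if anything slightly more explicit.
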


\begin{proof}
    From \Cref{lem:min_delta_R_distance}, we have \(\Delta R(\Z_{1}, \Z_{2}) \geq 0\) with equality implying that \(\Col{\Z_{1}} = \Col{\Z_{2}}\). Picking \(g \in \G\) to be the pseudoinverse \(f_{\star}^{+}\) of \(f_{\star}\) shows that \(f_{\star} = f_{\star} \circ f_{\star}^{+} \circ f_{\star} = f_{\star} \circ g_{\star} \circ f_{\star}\), implying that \(\sum_{j = 1}^{k}\Delta R(f_{\star}(\X_{j}), (f_{\star} \circ g \circ f_{\star})(\X_{j})) = 0\). Thus \(\argmin_{g \in \G}\sum_{j = 1}^{k}\Delta R(f_{\star}(\X_{j}), (f_{\star} \circ g \circ f_{\star})(\X_{j}))\) is exactly the set of \(g_{\star} \in \G\) such that \(\sum_{j = 1}^{k}\Delta R(f_{\star}(\X_{j}), (f_{\star} \circ g_{\star} \circ f_{\star})(\X_{j})) = 0\). Indeed, this means that, for each \(j\), we have \(\Col{f_{\star}(\X_{j})} = \Col{(f_{\star} \circ g_{\star} \circ f_{\star})(\X_{j})}\). The former is exactly \(f_{\star}(\S_{j})\), and the latter term is \((f_{\star} \circ g_{\star} \circ f_{\star})(\S_{j})\), so equality holds and the lemma is proved.
\end{proof}

With \Cref{lem:ctrl_msp_instance_ctrl_sg,lem:max_delta_R,lem:min_delta_R}, we are now able to cleanly prove \Cref{thm:ctrl_msp}.

\begin{proof}[Proof of \Cref{thm:ctrl_msp}]
    By \Cref{lem:ctrl_msp_instance_ctrl_sg} and \Cref{thm:ctrl_sg}, we have that, if the CTRL-MSP assumptions hold, a Stackelberg equilibrium to the CTRL-MSP game exists, and that any Stackelberg equilibrium \((f_{\star}, g_{\star})\) enjoys:
    \begin{align*}
        f_{\star} 
        &\in \argmax_{f \in \F}\Delta R(f(\X) \mid \bPi), \\ 
        g_{\star} 
        &\in \argmin_{g \in \G}\sum_{j = 1}^{k}\Delta R(f_{\star}(\X_{j}), (f_{\star} \circ g \circ f_{\star})(\X_{j})).
    \end{align*}
    By \Cref{lem:max_delta_R,lem:min_delta_R}, the conclusions all follow.
\end{proof}

\newpage

\section{Single-Subspace Pursuit via the CTRL Framework} \label{sec:ctrl_ssp}

\subsection{Theory of CTRL-SSP} \label{sub:ctrl_ssp_theory}

We may specialize the CTRL-SG framework to obtain a method for \textit{single subspace pursuit}, called CTRL-SSP, which has similar properties to PCA. 
Suppose in our notation that \(k = 1\), so that \(\X_{1} = \X\). 
We drop the subscript on the only linear subspace \(\S_{1}\) which supports the data distribution, to obtain the single data subspace \(\S\), which we say has  dimension \(d_{\S}\).

Our formulation of PCA in \Cref{sec:pca_gan_games} as finding the best subspace of dimension \(\dz\) for the data shows that, if \(\dz \geq d_{\S}\) then \(\S_{\mathrm{PCA}} \supseteq \S\). Thus \(g_{\mathrm{PCA}} \circ f_{\mathrm{PCA}}\) is the identity on \(\S\), so the semi-orthogonal linear maps \(f_{\mathrm{PCA}}\) and \(g_{\mathrm{PCA}}\) must be isometries on \(\S\) and \(f_{\mathrm{PCA}}(\S)\) respectively. This isometric property is the essential property of the PCA encoder we choose to achieve in CTRL-SSP; our encoder, more than just being injective on \(\S\), must preserve all distances and thus all the structure of the data distribution on \(\S\).

To present the CTRL-SSP game, recall the definition of \(\O{\cdot, \cdot}\) from \Cref{sec:pca_gan_games}.
\begin{definition}[CTRL-SSP Game]\label{def:ctrl_ssp}
    The CTRL-SSP game is a two-player sequential game between: 
    \begin{enumerate}
        \item The encoder, moving first, choosing functions \(f\) in the function class \(\F \doteq \O{\R^{\dx}, \R^{\dz}}\)
        and having utility function 
        \begin{equation*}
            u_{\enc}(f, g) \doteq R(f(\X)) + \Delta R(f(\X), (f \circ g \circ f)(\X)).
        \end{equation*}
        \item The decoder, moving second, choosing functions \(g\) in the function class \(\G \doteq \O{\R^{\dz}, \R^{\dx}}\), and having utility function 
        \begin{equation}
            u_{\dec}(f, g) \doteq -\Delta R(f(\X), (f \circ g \circ f)(\X)).
        \end{equation}
    \end{enumerate}
\end{definition}

As in the previous contexts of CTRL-MSP and CTRL-SG, we outline some assumptions for the data that are required for our main results to hold.
\begin{assumption}[Assumptions in CTRL-SSP Games]\label{a:ctrl_ssp}
    \phantom{}
	\begin{enumerate}
		\item (Informative data.) \(\Col{\X} = \S\).
		\item (Large enough representation space.) \(d_{\S} \leq \min\{\dx, \dz\}\).
	\end{enumerate}
\end{assumption}

We may now explicitly characterize the Stackelberg equilibria of the CTRL-SSP game.

\begin{theorem}[Stackelberg Equilibria of CTRL-SSP Games]\label{thm:ctrl_ssp}
    If \Cref{a:ctrl_ssp} holds, then the CTRL-SSP game has the following properties:
	\begin{enumerate}
		\item A Stackelberg equilibrium \((f_{\star}, g_{\star})\) exists.
		\item Any Stackelberg equilibrium \((f_{\star}, g_{\star})\) enjoys the following properties:
		\begin{enumerate}
			\item (Injective encoder.) \(f_{\star}(\S)\) is a linear subspace of dimension \(d_{\S}\), and \(f_{\star}\) is an \(\ell^{2}\)-isometry on \(\S\).
			\item (Consistent encoding and decoding.) \(f_{\star}(\S) = (f_{\star} \circ g_{\star} \circ f_{\star})(\S)\).
		\end{enumerate}
	\end{enumerate}
\end{theorem}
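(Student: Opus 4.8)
The plan is to mirror the proof of \Cref{thm:ctrl_msp}: first recognize the CTRL-SSP game as an instance of the CTRL-SG game, then verify \Cref{a:ctrl_sg} so that \Cref{thm:ctrl_sg} applies, and finally characterize separately the maximizers of the expressiveness and compatibility functions. The correspondence is \(\F := \O{\R^{d_{x}}, \R^{d_{z}}}\), \(\G := \O{\R^{d_{z}}, \R^{d_{x}}}\), \(\E \colon f \mapsto R(f(\X))\), and \(\C \colon (f, g) \mapsto -\Delta R(f(\X), (f \circ g \circ f)(\X))\), so that \(u_{\enc} = \E - \C\) and \(u_{\dec} = \C\) exactly as in \Cref{def:ctrl_sg}.

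To verify \Cref{a:ctrl_sg}, I would first note that \(\F\) is compact (a Stiefel manifold) and that \(f \mapsto R(f(\X))\) is continuous, so \(\argmax_{f \in \F}\E(f)\) is nonempty by the extreme value theorem; this is strictly easier than in CTRL-MSP, where the noncompactness of \(\F\) forced the detour through \(\F'\). For the compatibility function, Lemma A.4 of \citep{OriginalMCR2} gives \(\Delta R \geq 0\), hence \(\C(f, g) \leq 0\) for all \((f, g)\); choosing \(g = f^{\top}\) yields \(f \circ f^{\top} \circ f = f\) (using \(f f^{\top} = \I_{d_{z}}\) when \(d_{z} \leq d_{x}\), and \(f^{\top} f = \I_{d_{x}}\) otherwise), and a short check shows \(f^{\top} \in \G\), so \(\C(f, f^{\top}) = 0\). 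Thus \(\argmax_{g \in \G}\C(f, g)\) is nonempty and \(f \mapsto \max_{g \in \G}\C(f, g)\) is constantly zero, verifying all three parts of \Cref{a:ctrl_sg}. \Cref{thm:ctrl_sg} then yields existence of a Stackelberg equilibrium and the decoupling \(f_{\star} \in \argmax_{f \in \F}R(f(\X))\) and \(g_{\star} \in \argmin_{g \in \G}\Delta R(f_{\star}(\X), (f_{\star} \circ g \circ f_{\star})(\X))\).

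The substantive step is to show that \(\argmax_{f \in \F}R(f(\X))\) is exactly the set of \(f\) that are \(\ell^{2}\)-isometries on \(\S\). Writing \(\bSigma := \X\X^{\top}\), which has rank \(d_{\S}\) and column space \(\S\) since \(\Col{\X} = \S\), I would observe that for semi-orthogonal \(f\) in the compression regime \(d_{z} \leq d_{x}\) (the case \(d_{z} > d_{x}\) makes every \(f \in \F\) a global isometry and is immediate) the matrix \(f\bSigma f^{\top}\) is the compression of \(\bSigma\) onto the \(d_{z}\)-dimensional row space of \(f\), so that \(R(f(\X)) = \frac{1}{2}\sum_{i = 1}^{d_{z}}\log{1 + \frac{d_{z}}{n\eps^{2}}\mu_{i}}\), where \(\mu_{1} \geq \cdots \geq \mu_{d_{z}}\) are the eigenvalues of \(f\bSigma f^{\top}\) and depend on \(f\) only through the orthogonal projection \(P := f^{\top}f\) onto that row space. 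By the Poincaré separation theorem these satisfy \(\mu_{i} \leq \lambda_{i}\), where \(\lambda_{1} \geq \cdots \geq \lambda_{d_{x}}\) are the eigenvalues of \(\bSigma\); since the scalar map \(t \mapsto \log{1 + \frac{d_{z}}{n\eps^{2}}t}\) is strictly increasing and \(\lambda_{i} = 0\) for \(i > d_{\S}\), the objective is maximized exactly when \(\mu_{i} = \lambda_{i}\) for all \(i \leq d_{\S}\), i.e.\ when \(P\) captures the full column space \(\S\) of \(\bSigma\) (feasible since \(d_{\S} \leq d_{z}\)). Finally, \(\norm{f\x}_{\ell^{2}}^{2} = \x^{\top}P\x \leq \norm{\x}_{\ell^{2}}^{2}\) for \(\x \in \S\), with equality for all such \(\x\) iff \(\S \subseteq \operatorname{range}(P)\); hence maximizing \(R(f(\X))\) is equivalent to \(f\) being an \(\ell^{2}\)-isometry on \(\S\), which forces \(f_{\star}(\S)\) to be a linear subspace of dimension \(d_{\S}\). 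This establishes property 2a.

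Property 2b then follows exactly as in \Cref{lem:min_delta_R}: since \(\min_{g \in \G}\Delta R(f_{\star}(\X), (f_{\star} \circ g \circ f_{\star})(\X)) = 0\), the equilibrium decoder \(g_{\star}\) satisfies \(f_{\star}(\X)f_{\star}(\X)^{\top} = (f_{\star} \circ g_{\star} \circ f_{\star})(\X)(f_{\star} \circ g_{\star} \circ f_{\star})(\X)^{\top}\) by Lemma A.4 of \citep{OriginalMCR2}, which forces \(\Col{f_{\star}(\X)} = \Col{(f_{\star} \circ g_{\star} \circ f_{\star})(\X)}\) and hence \(f_{\star}(\S) = (f_{\star} \circ g_{\star} \circ f_{\star})(\S)\). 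I expect the main obstacle to be the expressiveness characterization, the analog of the delicate singular-value analysis in \Cref{lem:max_delta_R}; the key simplification here is that one need only certify that all of \(\bSigma\)'s spectral mass is captured, via the Poincaré separation theorem together with strict monotonicity of \(t \mapsto \log{1 + \frac{d_{z}}{n\eps^{2}}t}\), rather than pin down the exact optimal spectrum as in CTRL-MSP.
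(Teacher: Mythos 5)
Your proposal is correct and follows essentially the same route as the paper's proof: reduce to CTRL-SG via the same correspondences, verify \Cref{a:ctrl_sg} using compactness of \(\O{\R^{d_{x}}, \R^{d_{z}}}\) and the choice \(g = f^{+}\) (which equals \(f^{\top}\) for semi-orthogonal \(f\)), and then show maximizers of \(R(f(\X))\) are exactly the \(\ell^{2}\)-isometries on \(\S\) before reusing the Lemma A.4 argument for the decoder. Your use of the Poincar\'e separation theorem on the eigenvalues of \(f\bSigma f^{\top}\) is just a repackaging of the paper's Courant--Fischer bound \(\sigma_{p}(f(\X)) \leq \sigma_{p}(\X)\), so the two arguments are equivalent in substance.
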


The proof of this theorem is split up into several steps, similarly to the CTRL-MSP picture.
\begin{enumerate}
    \item We first show that the CTRL-SSP game is a particular instance of the CTRL-SG game. We show that if the CTRL-SSP assumptions hold then the CTRL-SG assumptions hold.
    \item We then characterize the optima of the expressiveness and compatibility functions of the CTRL-SSP game.
\end{enumerate}

\begin{lemma}\label{lem:ctrl_ssp_instance_ctrl_sg}
    The CTRL-SSP game is an instance of the CTRL-SG game. Moreover, if the CTRL-SSP assumptions hold then the CTRL-SG assumptions hold.
\end{lemma}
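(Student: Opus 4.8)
The plan is to make the CTRL-SG correspondence explicit and then verify \Cref{a:ctrl_sg} item by item, closely following the proof of \Cref{lem:ctrl_msp_instance_ctrl_sg}. I would take \(\F := \O{\R^{d_{x}}, \R^{d_{z}}}\), \(\G := \O{\R^{d_{z}}, \R^{d_{x}}}\), \(\E \colon f \mapsto R(f(\X))\), and \(\C \colon (f, g) \mapsto -\Delta R(f(\X), (f \circ g \circ f)(\X))\). A one-line check that \(u_{\enc} = \E - \C\) and \(u_{\dec} = \C\) then identifies \Cref{def:ctrl_ssp} as an instance of \Cref{def:ctrl_sg}.

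For \Cref{a:ctrl_sg}.1, the key point is that, unlike the CTRL-MSP function class, \(\F = \O{\R^{d_{x}}, \R^{d_{z}}}\) is a Stiefel manifold and hence already compact, so no passage to an auxiliary set \(\F'\) (as was needed in \Cref{lem:ctrl_msp_instance_ctrl_sg}) is required. Since \(\E\) is continuous in \(f\), the extreme value theorem gives that \(\argmax_{f \in \F}\E(f)\) is nonempty.

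For \Cref{a:ctrl_sg}.2 and \Cref{a:ctrl_sg}.3, I would replace the Moore--Penrose pseudoinverse used in the CTRL-MSP proof with the transpose \(f^{\top}\). The two facts to establish are: (i) \(f^{\top} \in \G\), because the transpose of a semi-orthogonal map is again semi-orthogonal; and (ii) \(f \circ f^{\top} \circ f = f\), which follows by splitting into the two cases \(d_{x} \leq d_{z}\) (where \(f^{\top} \circ f = \mathrm{id}\)) and \(d_{z} \leq d_{x}\) (where \(f \circ f^{\top} = \mathrm{id}\)) of the definition of \(\O{\cdot, \cdot}\). Granting these, \((f \circ f^{\top} \circ f)(\X) = f(\X)\), so Lemma A.4 of \citep{OriginalMCR2} (which states \(\Delta R(\Z_{1}, \Z_{2}) \geq 0\), with equality iff \(\Z_{1}\Z_{1}^{\top} = \Z_{2}\Z_{2}^{\top}\)) yields \(\C(f, f^{\top}) = 0\). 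Because the same lemma gives \(\C(f, g) \leq 0\) for every \(g\), this shows \(f^{\top}\) maximizes \(\C(f, \cdot)\) --- establishing \Cref{a:ctrl_sg}.2 --- and that \(\max_{g \in \G}\C(f, g) = 0\) for all \(f\), which is constant --- establishing \Cref{a:ctrl_sg}.3.

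I expect the main obstacle to be exactly point (ii): since \(\G\) is here restricted to \emph{semi-orthogonal} maps rather than to all linear maps, the pseudoinverse argument of \Cref{lem:ctrl_msp_instance_ctrl_sg} does not transfer verbatim, and one must instead exploit that for a semi-orthogonal \(f\) the transpose both lies in \(\G\) and acts as a one-sided inverse witnessing \(f \circ f^{\top} \circ f = f\). This case analysis on the two branches of the semi-orthogonality definition is the one genuinely new ingredient relative to the multi-subspace argument, and it is routine once both branches are handled.
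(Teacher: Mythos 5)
Your proposal is correct and matches the paper's proof essentially verbatim: the paper establishes Assumption \ref{a:ctrl_sg}.1 by the same compactness and extreme value theorem argument, and handles Assumptions \ref{a:ctrl_sg}.2 and \ref{a:ctrl_sg}.3 by choosing \(g = f^{+}\), which for semi-orthogonal \(f\) is exactly the transpose \(f^{\top}\) you use. Your explicit case analysis on the two branches of the definition of \(\O{\cdot,\cdot}\) merely spells out the detail the paper leaves implicit when asserting that \(f^{+}\) lies in \(\G\).
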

\begin{proof}
    One may easily see that the CTRL-SSP game is an instance of the CTRL-SG game, with the following correspondences:
    \begin{itemize}
        \item \(\F \doteq \O{\R^{\dx}, \R^{\dz}}\).
        \item \(\G \doteq \O{\R^{\dz}, \R^{\dx}}\).
        \item \(\Q \colon f \mapsto R(f(\X))\).
        \item \(\C \colon (f, g) \mapsto -\Delta R(f(\X), (f \circ g \circ f)(\X))\).
    \end{itemize}

    We show that the CTRL-SG assumptions hold.
    We first claim that \(\argmax_{f \in \F}\Q(f) = \argmax_{f \in \F}R(f(\X))\) exists. Indeed, \(\F\) is compact and \(\Q\) is continuous in \(f\), so the conclusion follows by the extreme value theorem.
    Further, using the same argument as in the corresponding part of the proof of \cref{lem:ctrl_msp_instance_ctrl_sg}, for every \(f \in \F\), we have that \(\argmax_{g \in \G}\C(f, g) = \argmin_{g \in \G}\Delta R(f(\X), (f \circ g \circ f)(\X))\) is nonempty. 
    We finally claim that the function \(f \mapsto \max_{g \in \G}\C(f, g)\) is constant. Indeed, by the choice of \(g = f^{+}\), which for \(f \in \F = \O{\R^{\dx}, \R^{\dz}}\) is contained in \(\G = \O{\R^{\dz}, \R^{\dx}}\), this function is constantly zero, as desired.
\end{proof}

Again, we can separate the utility and analyze each part of it.

\begin{lemma}\label{lem:max_R_orthogonal}
    Suppose the CTRL-SSP assumptions hold, and let \(\F, \G\) be defined as in the CTRL-SSP game. Then any \(f_{\star} \in \argmax_{f \in \F}R(f(\X))\) 
    enjoys property 2a from \cref{thm:ctrl_ssp}.
\end{lemma}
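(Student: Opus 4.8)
The plan is to reduce the maximization of $\E(f) = R(f(\X))$ to a statement about singular values and then characterize the equality case. For any semi-orthogonal $f \in \F = \O{\R^{d_{x}}, \R^{d_{z}}}$, set $P := f^{\top}f$; this is the orthogonal projection onto the row space of $f$ (indeed $P^{2} = f^{\top}(ff^{\top})f = P$ when $d_{z} \le d_{x}$ since then $ff^{\top} = \I_{d_{z}}$, while $P = \I_{d_{x}}$ when $d_{x} \le d_{z}$). Since $0 \preceq P \preceq \I_{d_{x}}$, I would note that $f(\X)^{\top}f(\X) = \X^{\top}P\X \preceq \X^{\top}\X$, so by Weyl's monotonicity theorem $\sigma_{i}(f(\X)) \le \sigma_{i}(\X)$ for every $i$. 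Because $\Col{\X} = \S$ has dimension $d_{\S}$, both $\X$ and $f(\X)$ have rank at most $d_{\S}$, so only the first $d_{\S}$ singular values enter the coding rate.

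Next I would substitute into the closed form $R(f(\X)) = \frac{1}{2}\sum_{i = 1}^{d_{\S}}\log{1 + \frac{d_{z}}{n\eps^{2}}\sigma_{i}(f(\X))^{2}}$. As $x \mapsto \log{1 + \frac{d_{z}}{n\eps^{2}}x^{2}}$ is strictly increasing on $[0, \infty)$ and each $\sigma_{i}(f(\X)) \le \sigma_{i}(\X)$, this gives $R(f(\X)) \le R(\X)$, with equality if and only if $\sigma_{i}(f(\X)) = \sigma_{i}(\X)$ for all $i \in \{1, \dots, d_{\S}\}$. Hence $\argmax_{f \in \F}R(f(\X))$ is exactly the set of $f \in \F$ that reproduce every singular value of $\X$.

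The crux, and the main obstacle, is translating this singular-value equality into the geometric isometry statement of property 2a. Here I would pass to traces: matching all singular values forces $\tr{\X^{\top}P\X} = \tr{\X^{\top}\X}$, i.e. $\tr{\X^{\top}(\I_{d_{x}} - P)\X} = 0$. Since $\I_{d_{x}} - P \succeq 0$, this yields $(\I_{d_{x}} - P)\X = 0$, equivalently $\S = \Col{\X} \subseteq \Col{P}$, the row space of $f$. For such an $f$ and any $\bm{s} \in \S$ we then get $\norm{f(\bm{s})}_{\ell^{2}}^{2} = \bm{s}^{\top}P\bm{s} = \norm{\bm{s}}_{\ell^{2}}^{2}$, so $f$ restricts to an $\ell^{2}$-isometry on $\S$; conversely any such isometry attains all of $\X$'s singular values, closing the characterization. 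The informative-data assumption $\Col{\X} = \S$ is precisely what lets me identify the row-space containment with an isometry on the true subspace $\S$, and the dimension condition $d_{\S} \le \min\{d_{x}, d_{z}\}$ ensures the maximizer set is nonempty.

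Finally I would record the consequences for any maximizer $f_{\star}$. Being an $\ell^{2}$-isometry on $\S$, $f_{\star}$ is injective on $\S$, so $f_{\star}(\S)$ is a linear subspace of dimension $\dim{\S} = d_{\S}$, and $f_{\star}|_{\S}$ preserves all $\ell^{2}$ distances, which is exactly property 2a of \Cref{thm:ctrl_ssp}. In the degenerate regime $d_{z} \ge d_{x}$ every $f \in \F$ has orthonormal columns, hence is a global isometry on $\R^{d_{x}}$ and trivially satisfies the claim, so the argument above subsumes both cases.
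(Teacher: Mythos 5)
Your proof is correct and follows essentially the same route as the paper: bound \(R(f(\X))\) via the termwise singular-value inequality \(\sigma_{p}(f(\X)) \le \sigma_{p}(\X)\) (you via \(f^{\top}f \preceq \I_{d_{x}}\) and Weyl monotonicity, the paper via the unit operator norm and Courant--Fischer), then identify the equality case with \(f\) being an \(\ell^{2}\)-isometry on \(\S\). Your trace argument showing that matching all singular values forces \((\I_{d_{x}} - f^{\top}f)\X = \bm{0}\), hence isometry on \(\Col{\X} = \S\), in fact supplies a rigorous justification for the equivalence that the paper only asserts as ``clear,'' so your write-up is if anything more complete.
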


\begin{proof}
    Since \(f_{\star}\) is a linear map and \(\S\) is a linear subspace, \(f_{\star}(\S)\) is a linear subspace, and furthermore \(\dim{f_{\star}(\S)} \leq d_{\S}\). We now claim that \(f_{\star}\) is an \(\ell^{2}\)-isometry on \(\S\). We show this by calculating an upper bound for \(R(f(\X))\) and show that it is achieved if and only if \(f\) is an \(\ell^{2}\)-isometry on \(\S\).
    
    Indeed, for any \(f \in \F = \O{\R^{\dx}, \R^{\dz}}\), we have that \(f\) has operator norm and Lipschitz constant equal to unity, so \(\norm{f(\x)}_{\ell^{2}} \leq \norm{\x}_{\ell^{2}}\) for any \(\x \in \R^{\dx}\). By the Courant-Fischer min-max theorem for singular values, we have,     for each \(1 \leq p \leq d_{\S}\),
    \begin{align*}
        \sigma_{p}(f(\X))
        &= \sup_{S \in \mathrm{Gr}(p, \R^{\dx})}\inf_{\substack{\bm{u} \in S \\ \norm{\bm{u}}_{\ell^{2}} = 1}}\norm{f(\X)\cdot\bm{u}}_{\ell^{2}} \\
        &\leq \sup_{S \in \mathrm{Gr}(p, \R^{\dx})}\inf_{\substack{\bm{u} \in S \\ \norm{\bm{u}}_{\ell^{2}} = 1}}\norm{\X\cdot\bm{u}}_{\ell^{2}}
        = \sigma_{p}(\X).
    \end{align*}
     Thus
    \begin{align*}
        R(f(\X))
        &= \frac{1}{2}\logdet{\I_{\dz} + \frac{\dz}{n\eps^{2}}f(\X)f(\X)^{\top}} \\
        &= \frac{1}{2}\sum_{p = 1}^{d_{\S}}\log{1 + \frac{\dz}{n\eps^{2}}\sigma_{p}(f(\X))^{2}}
        \leq \frac{1}{2}\sum_{p = 1}^{d_{\S}}\log{1 + \frac{\dz}{n\eps^{2}}\sigma_{p}(\X)^{2}}.
    \end{align*}
    As such, \(f\) is an \(\ell^{2}\) isometry on \(\S\) if and only if \(\sigma_{p}(f(\X)) = \sigma_{p}(\X)\) for all \(1 \leq p \leq d_{\S}\).
    Thus, any \(f_{\star} \in \F\) which fulfills the upper bound for \(R(f(\X))\), i.e., any maximizer for \(R(f(\X))\), is an \(\ell^{2}\) isometry on \(\S\). 
    Therefore, \(\dim{f_{\star}(\S)} = d_{\S}\).
\end{proof}

\begin{lemma}\label{lem:min_delta_R_orthogonal}
    Suppose the CTRL-SSP assumptions hold, and let \(\F, \G\) be defined as in the CTRL-SSP game. Let \(f_{\star} \in \argmax_{f \in \F}R(f(\X))\). Then any \(g_{\star} \in \argmin_{g \in \G}\Delta R(f(\X), (f \circ g \circ f)(\X))\) 
    enjoys property 2b from \cref{thm:ctrl_ssp}.
\end{lemma}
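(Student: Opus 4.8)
The plan is to follow the template of \Cref{lem:min_delta_R} essentially verbatim, specialized to the single class \(k = 1\) and to the semi-orthogonal decoder class \(\G = \O{\R^{d_z}, \R^{d_x}}\). The whole argument rests on Lemma A.4 of \citep{OriginalMCR2}, which asserts that \(\Delta R(\Z_1, \Z_2) \geq 0\), with equality if and only if \(\Z_1 \Z_1^\top = \Z_2 \Z_2^\top\). Applying this with \(\Z_1 = f_\star(\X)\) and \(\Z_2 = (f_\star \circ g \circ f_\star)(\X)\) shows that the compatibility objective \(\Delta R(f_\star(\X), (f_\star \circ g \circ f_\star)(\X))\) is bounded below by \(0\) over all \(g \in \G\).

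First I would exhibit a decoder attaining this lower bound. Taking \(g = f_\star^+\), the Moore--Penrose pseudoinverse of \(f_\star\), the identity \(f_\star \circ f_\star^+ \circ f_\star = f_\star\) forces \(\Delta R(f_\star(\X), (f_\star \circ f_\star^+ \circ f_\star)(\X)) = \Delta R(f_\star(\X), f_\star(\X)) = 0\). The one point that differs from the CTRL-MSP proof is that here \(\G\) is not all of \(\L(\R^{d_z}, \R^{d_x})\) but only the semi-orthogonal maps; however, for \(f_\star \in \O{\R^{d_x}, \R^{d_z}}\) one has \(f_\star^+ = f_\star^\top\), which lies in \(\O{\R^{d_z}, \R^{d_x}} = \G\) (exactly as already recorded in the proof of \Cref{lem:ctrl_ssp_instance_ctrl_sg}). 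Hence the minimum value over \(\G\) is \(0\), and \(\argmin_{g \in \G}\Delta R(f_\star(\X), (f_\star \circ g \circ f_\star)(\X))\) is precisely the set of \(g_\star\) for which this objective vanishes.

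Next I would translate the vanishing of \(\Delta R\) into the desired equality of subspaces. For any \(g_\star\) in the \(\argmin\), the equality condition of Lemma A.4 gives \(f_\star(\X) f_\star(\X)^\top = (f_\star \circ g_\star \circ f_\star)(\X)(f_\star \circ g_\star \circ f_\star)(\X)^\top\). Using the standard fact that \(\Col{\bm{A}\bm{A}^\top} = \Col{\bm{A}}\) for any real matrix \(\bm{A}\), this equality of Gram-type matrices yields \(\Col{f_\star(\X)} = \Col{(f_\star \circ g_\star \circ f_\star)(\X)}\). Finally, invoking \(\Col{\X} = \S\) from \Cref{a:ctrl_ssp} together with the linearity of \(f_\star\) and of \(f_\star \circ g_\star \circ f_\star\), the left-hand side is \(f_\star(\S)\) and the right-hand side is \((f_\star \circ g_\star \circ f_\star)(\S)\), establishing property 2b of \cref{thm:ctrl_ssp}.

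Since every element of the \(\argmin\) attains the minimal value \(0\), the conclusion holds for \emph{all} minimizers \(g_\star\), as required. The main obstacle is essentially bookkeeping: one must confirm that restricting the decoder to the semi-orthogonal class does not destroy the pseudoinverse construction that attains the optimum. This is handled by the observation \(f_\star^+ = f_\star^\top \in \G\); with that in hand, the remainder transfers directly from the multi-subspace case, and no new spectral analysis (unlike the expressiveness side in \Cref{lem:max_R_orthogonal}) is needed.
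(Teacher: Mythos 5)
Your proposal is correct and follows essentially the same route as the paper, which simply declares this proof identical to that of \Cref{lem:min_delta_R}: nonnegativity of \(\Delta R\) via Lemma A.4 of \citep{OriginalMCR2}, attainment of zero by \(g = f_{\star}^{+}\), and translation of the equality condition \(\Z_{1}\Z_{1}^{\top} = \Z_{2}\Z_{2}^{\top}\) into equality of column spaces. The one genuinely new point --- that \(f_{\star}^{+} = f_{\star}^{\top}\) remains in the semi-orthogonal class \(\G\) --- is exactly the observation the paper records in \Cref{lem:ctrl_ssp_instance_ctrl_sg}, and you handle it correctly.
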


\begin{proof}
The proof is exactly as the proof of \Cref{lem:min_delta_R}, and is thus omitted.
\end{proof}

With \Cref{lem:ctrl_ssp_instance_ctrl_sg,lem:max_R_orthogonal,lem:min_delta_R_orthogonal}, we are now able to prove \Cref{thm:ctrl_ssp}. 

\begin{proof}[Proof of \Cref{thm:ctrl_ssp}]
The proof is exactly as the proof of \Cref{thm:ctrl_msp}, and is thus omitted.
\end{proof}

CTRL-SSP replicates the essential isometry aspect of the PCA solution, but it learns all principal components simultaneously, unlike the common greedy algorithms. Thus, it does not require any model selection beyond a choice of \(\dz\), which can be set to \textit{any} integer greater than \(d_{\S}\) without loss of efficacy.

\subsection{Empirical Evidence for CTRL-SSP}\label{sub:ctrl_ssp_experiments}

We demonstrate empirical convergence of learned \(f\) and \(g\) to Stackelberg equilibria of the CTRL-SSP game which satisfy the conclusions of \Cref{thm:ctrl_ssp} in the case of data lying on a single linear subspace. We empirically verify the above claim: that varying \(\dz\) does not impact efficacy of the algorithm. We then demonstrate CTRL-SSP's robustness to noise.

\subsubsection{Optimization Algorithm}

Like CTRL-MSP and CTRL-SG, CTRL-SSP operates in the framework of sequential games, and thus uses the GDMax algorithm \citep{jin2019local}, modified to save runtime in the same way as in \Cref{sec:experiments}. 

\subsubsection{Ways to Evaluate Results}

In CTRL-SSP, our claims for the Stackelberg equilibria \((f_{\star}, g_{\star})\) are furnished by \Cref{thm:ctrl_ssp} and may be tested empirically.

\begin{enumerate}
    \item We wish to show that the encoder is injective on the data subspace \(\S\), i.e., \(f_{\star}(\S)\) is a linear subspace of dimension \(d_{\S}\), and that \(f_{\star}\) is an isometry on \(\S\). To show this, we plot the singular value distribution of \(f_{\star}(\X)\), and show that each \(\sigma_{p}(f_{\star}(\X))\) is close to \(\sigma_{p}(\X)\). As per the previous proofs, this is enough to demonstrate both aspects of the claim.
    \item We wish to show that the encoder and decoder form a self-consistent autoencoding, i.e., that \(f_{\star}(\S) = (f_{\star} \circ g_{\star} \circ f_{\star})(\S)\). Using the same reasoning as in \Cref{sec:experiments}, we show that the distribution of residuals
    \begin{equation*}
        \texttt{resid}(f_{\star}(\x^{i})) \doteq \norm{f_{\star}(\x^{i}) - \operatorname{proj}_{\Col{(f_{\star} \circ g_{\star} \circ f_{\star})(\X)}}(f_{\star}(\x^{i}))}_{\ell^{2}}
    \end{equation*}
    for all \(\x^{i}\) is concentrated near zero.
\end{enumerate}

\subsubsection{CTRL-SSP with Data on Single Subspace}

To generate data which lies on a subspace, the process is simple. We generate a single random matrix with orthonormal columns \(\U \in \R^{\dx \times d_{\S}}\) using the QR decomposition. We then generate random matrices \(\bm{\Xi} \in \R^{d_{\S} \times n}\) and \(\bm{\Phi} \in \R^{\dx \times n_{j}}\) whose entries are i.i.d.~standard normal random variables. We obtain \(\X\) using the formula \(\X \doteq \U\bm{\Xi} + \sqrt{\frac{\nu}{\dx}}\bm{\Phi}\).

For experiments, we set baseline values of \(n = 500\), \(\dx = 50\), \(\dz = 40\), \(d_{\S} = 10\), \(\eps^{2} = 1\), and \(\nu = 0\).

Regarding optimization strategy, we set the number of decoder steps per encoder to be \(L = 10^{3}\). We set the learning rate of \(f\) to be \(10^{-2}\) and the learning rate of \(g\) to be \(10^{-3}\). We use a Riemannian SGD optimizer \citep{kochurov2020geoopt} to optimize over the sets of semi-orthogonal linear maps (i.e., the Stiefel manifold) for \(f\) and \(g\). The data are randomly partitioned into mini-batches of size \(b = 50\) during optimization. We train for \(2\) epochs.

We observe success in the baseline regime. That is, the representation matrix of the subspace data \(f_{\star}(\X)\) has \(d_{\S}\) nonzero singular values \(\sigma_{p}(f_{\star}(\X))\) which are identical to the subspace data matrix \(\X\)'s singular values \(\sigma_{p}(\X)\). This shows that the learned \(f_{\star}\) is injective on \(\S\) and, even stronger, that \(f_{\star}\) is an isometry on \(\S\). Furthermore, we observe that the residuals \(\texttt{resid}(f_{\star}(\x^{i}))\) are concentrated near \(0\) (\Cref{fig:ctrl_ssp_baseline}).

\begin{figure}
    \centering
    \begin{subfigure}[b]{0.24\textwidth}
        \centering 
        \includegraphics[width=\textwidth]{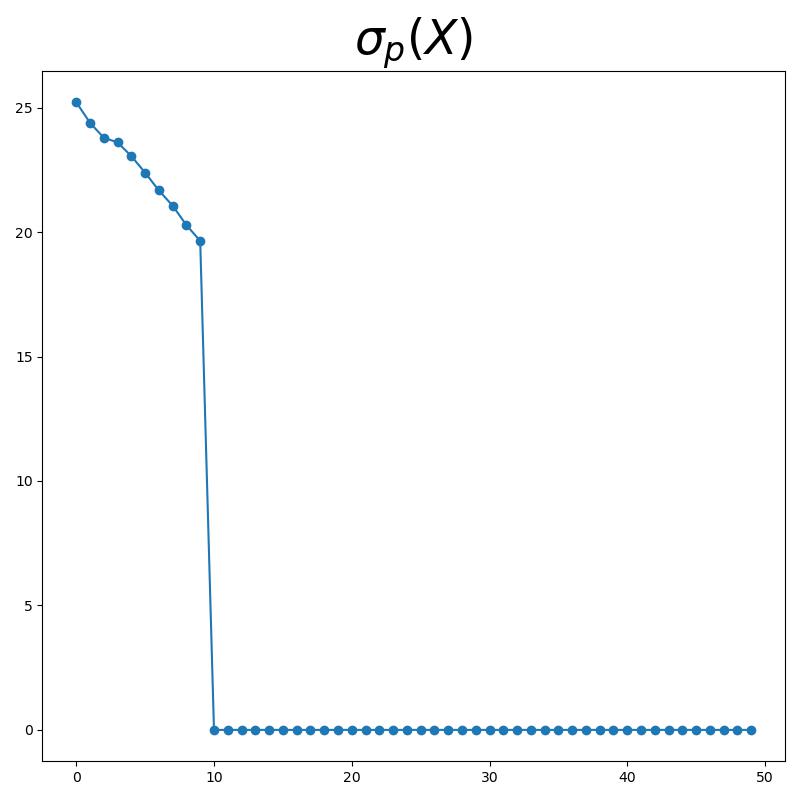}
        \caption{}
    \end{subfigure}
    \begin{subfigure}[b]{0.24\textwidth}
        \centering 
        \includegraphics[width=\textwidth]{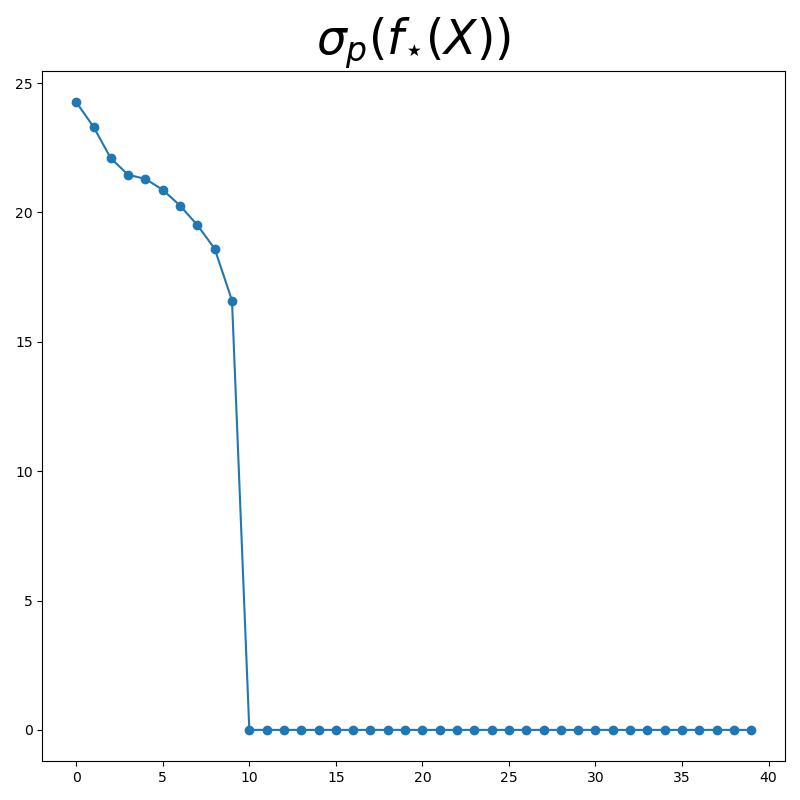}
        \caption{}
    \end{subfigure}
    \begin{subfigure}[b]{0.24\textwidth}
        \centering 
        \includegraphics[width=\textwidth]{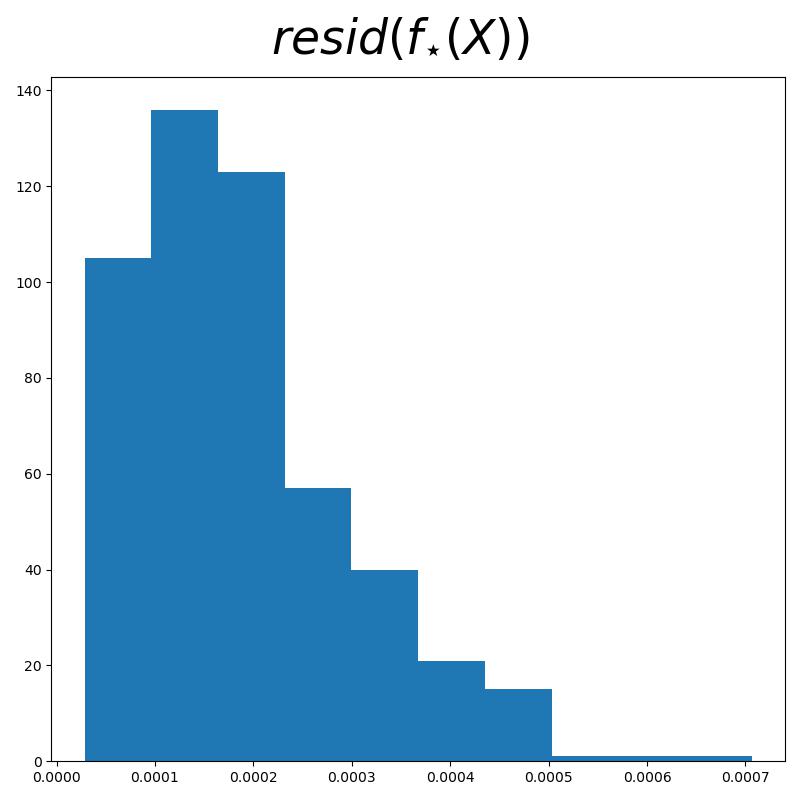}
        \caption{}
    \end{subfigure}
    \caption{Baseline performance of CTRL-SSP. (a) Spectra of the data, i.e., distribution of \(\sigma_{p}(\X)\). (b) Spectra of the representations, i.e., distribution of \(\sigma_{p}(f_{\star}(\X))\). (c) Histogram of the residuals \(\texttt{resid}(f_{\star}(\x^{i}))\) for all \(\x^{i}\) in \(\X\). Note that \(d_{\S} = 10\) singular values of \(f_{\star}(\X)\) are nonzero and large, and the projection residuals are close to \(0\).}
    \label{fig:ctrl_ssp_baseline}
\end{figure}

We now prove our previous heuristic claim: that CTRL-SSP works with any latent dimension \(\dz\) with \(\dz \geq d_{\S}\). Specifically, with \(\dx = 40\) and \(d_{\S} = 10\), we compare the cases of \(\dz = 10\), \(\dz = 20\), \(\dz = 40\), and \(\dz = 50\). In all cases, we observe success of CTRL-SSP to learn equilibria which have properties that correspond to our theoretical guarantees, with slightly increasing efficiency as \(\dz\) increases (\Cref{fig:ctrl_ssp_varying_dz}).

\begin{figure}
    \centering
    \includegraphics[width=0.24\textwidth]{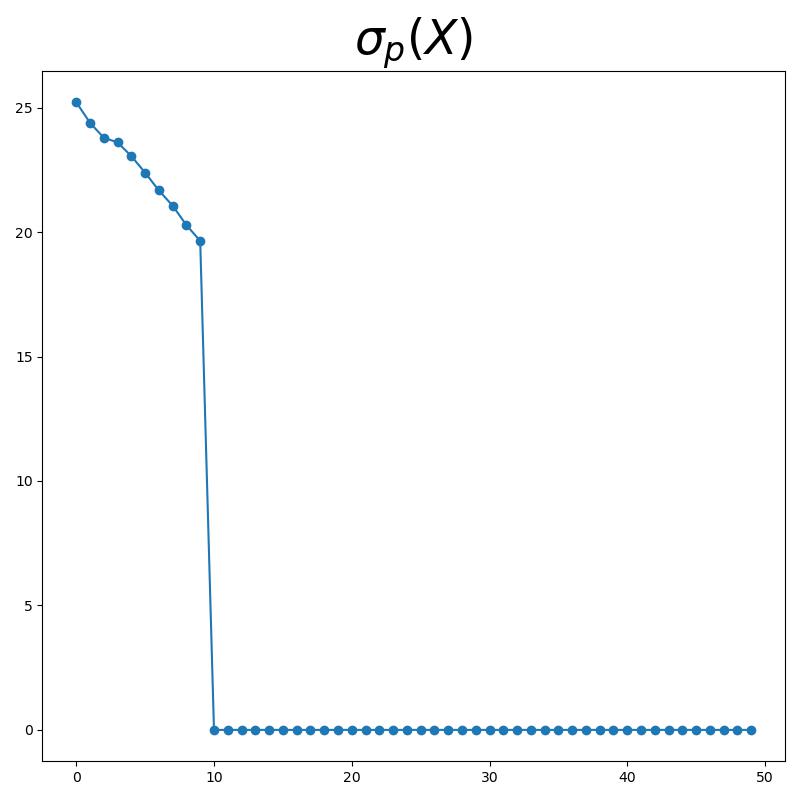}
    
    \includegraphics[width=0.24\textwidth]{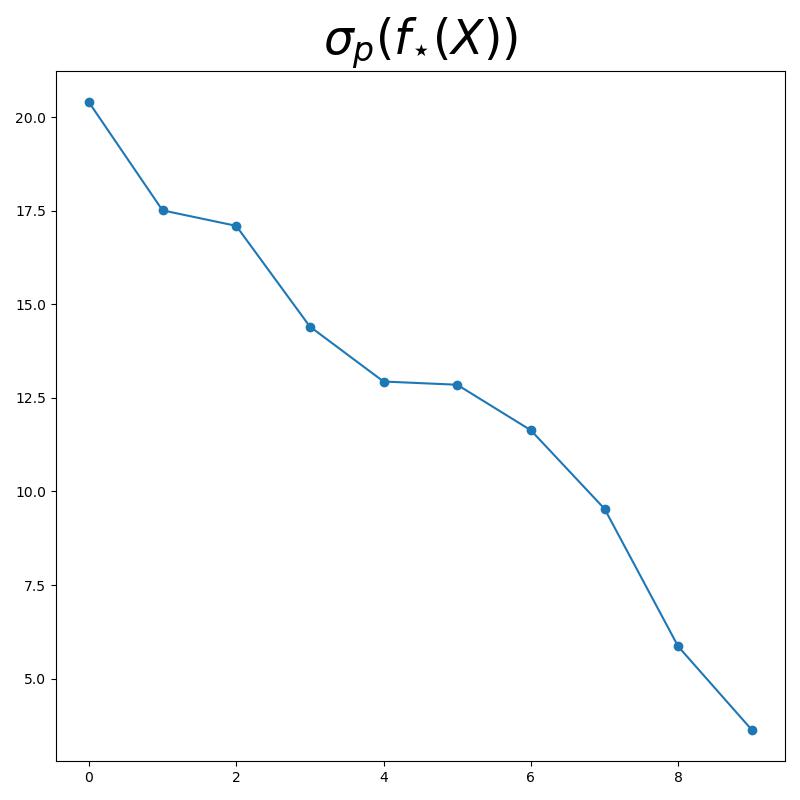}
    \includegraphics[width=0.24\textwidth]{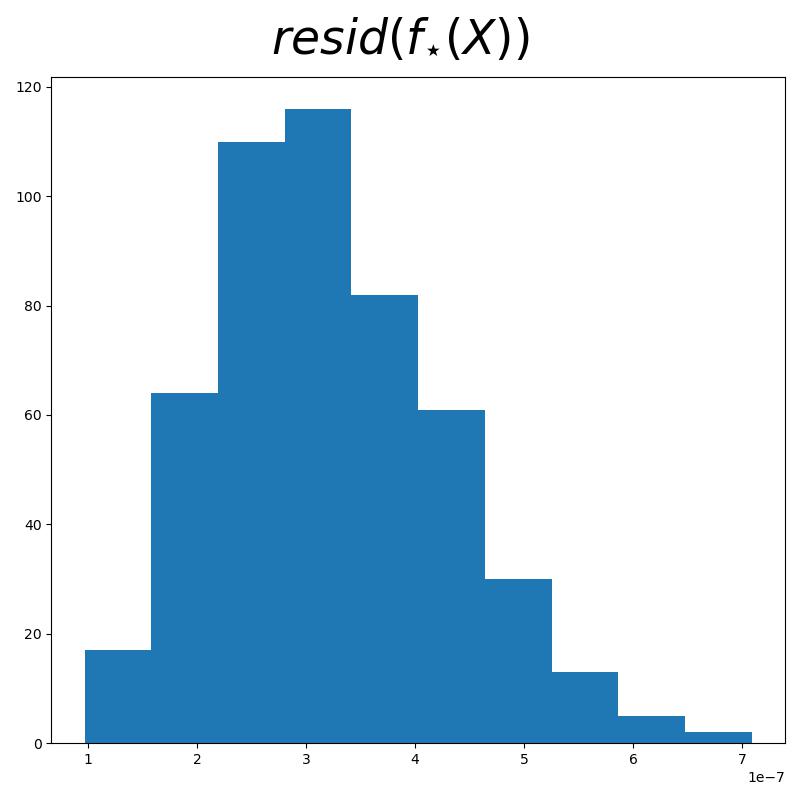}
    
    \includegraphics[width=0.24\textwidth]{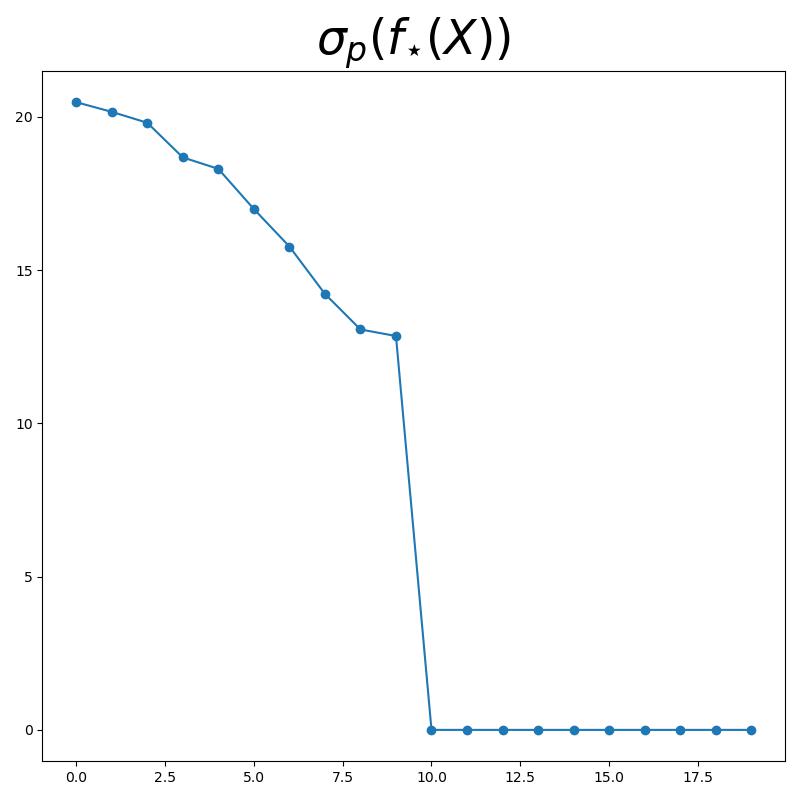}
    \includegraphics[width=0.24\textwidth]{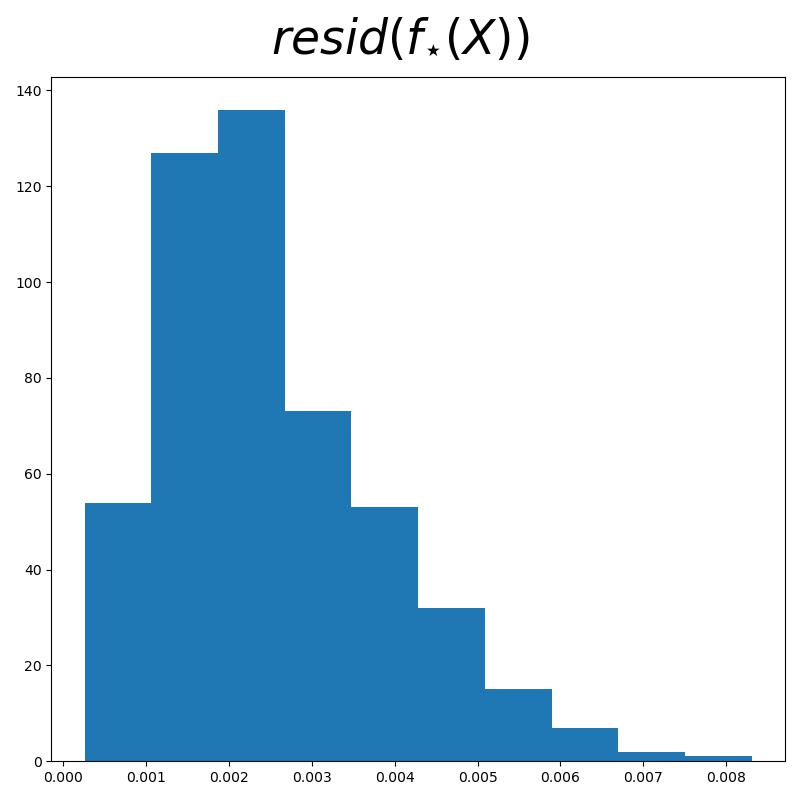}
    
    \includegraphics[width=0.24\textwidth]{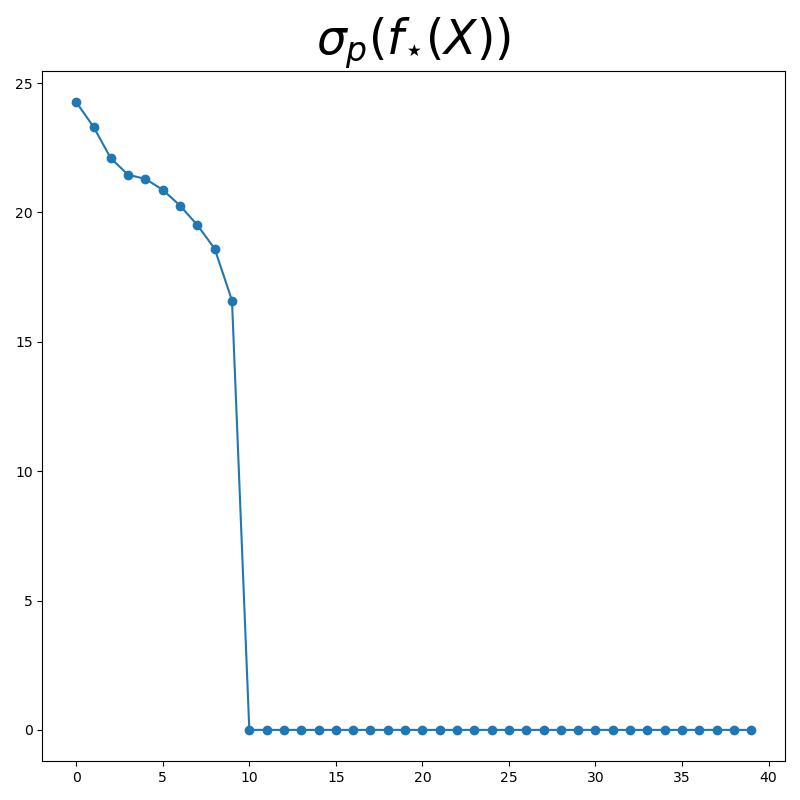}
    \includegraphics[width=0.24\textwidth]{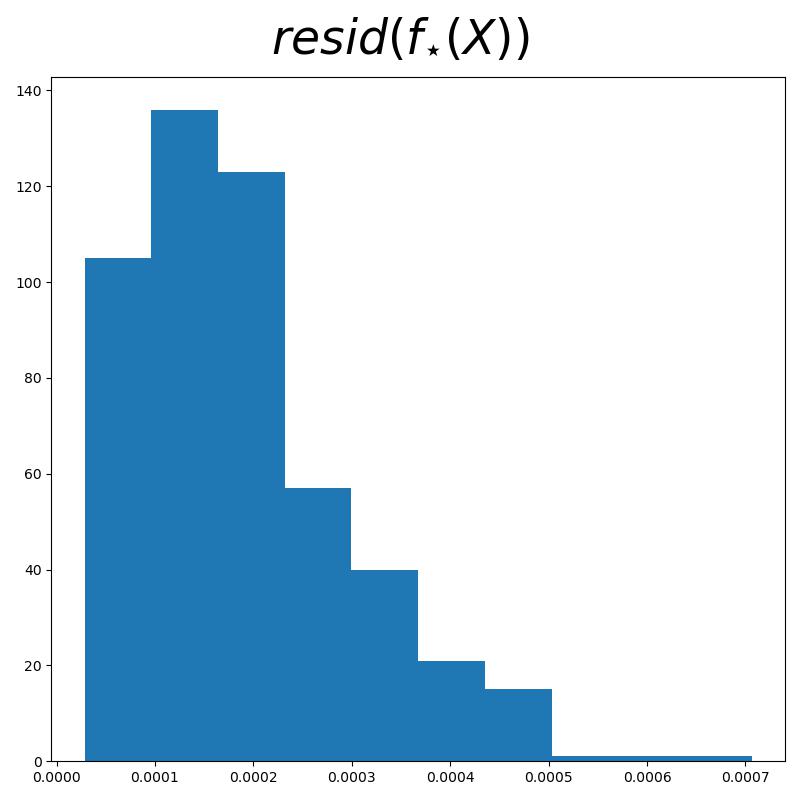}
    
    \includegraphics[width=0.24\textwidth]{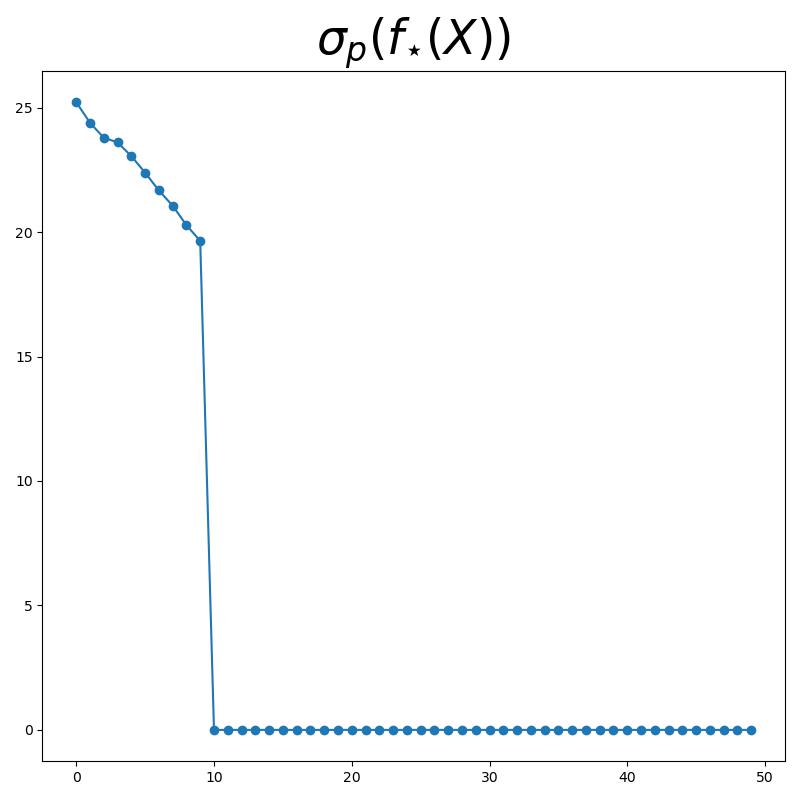}
    \includegraphics[width=0.24\textwidth]{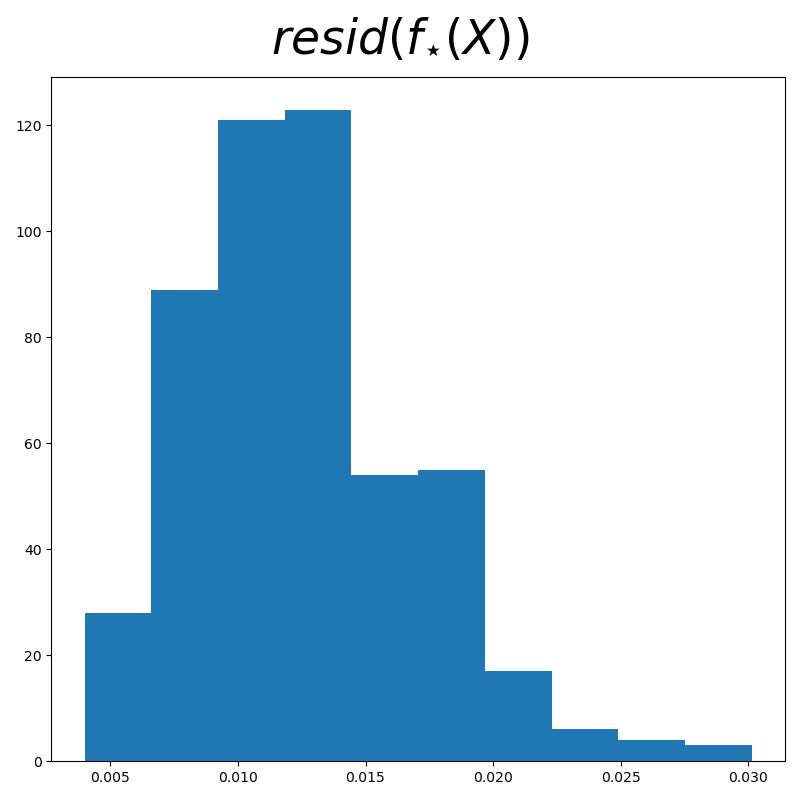}
    \caption{Performance of CTRL-SSP with varying \(\dz\). First row: spectra of the data matrix, i.e., distribution of \(\sigma_{p}(\X)\). Second row: spectra of the representation matrix \(f_{\star}(\X)\), i.e., distribution of \(\sigma_{p}(f_{\star}(\X))\), and histogram of the residuals \(\texttt{resid}(f_{\star}(\x^{i}))\) for all \(\x^{i}\) in \(\X\), with \(\dz = 10\). Third row: the same with \(\dz = 20\). Fourth row: the same with \(\dz = 40\). Fifth row: the same with \(\dz = 60\). Note that in all cases \(d_{\S} = 10\) singular values of \(f_{\star}(\X)\) are nonzero and large, and the projection residuals are close to \(0\).}
    \label{fig:ctrl_ssp_varying_dz}
\end{figure}

We now test the impact of noise on CTRL-SSP. Specifically, we attempt trials with \(\nu = 10^{-6}, \nu = 10^{-4}, \nu = 10^{-2}\), and \(\nu = 1\). This adds off-subspace noise to the data, which no longer satisfies the assumption that the data are distributed on a linear subspace. However, in all but the last setting, CTRL-SSP clearly succeeds to learn equilibria which have properties that correspond to our theoretical guarantees (\Cref{fig:ctrl_ssp_varying_nu}).

\begin{figure}
    \centering
    \includegraphics[width=0.24\textwidth]{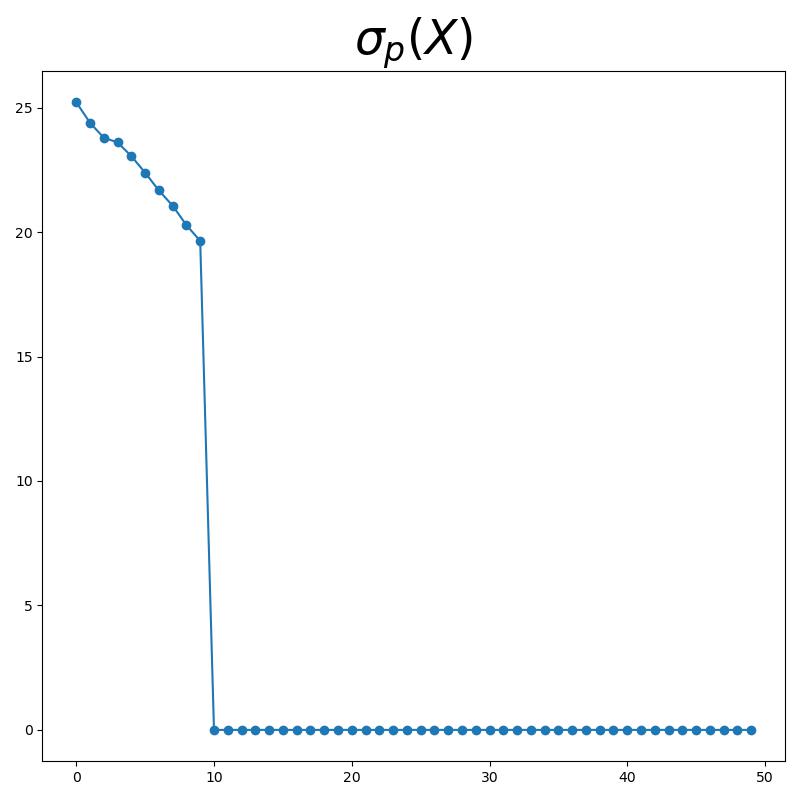}
    \includegraphics[width=0.24\textwidth]{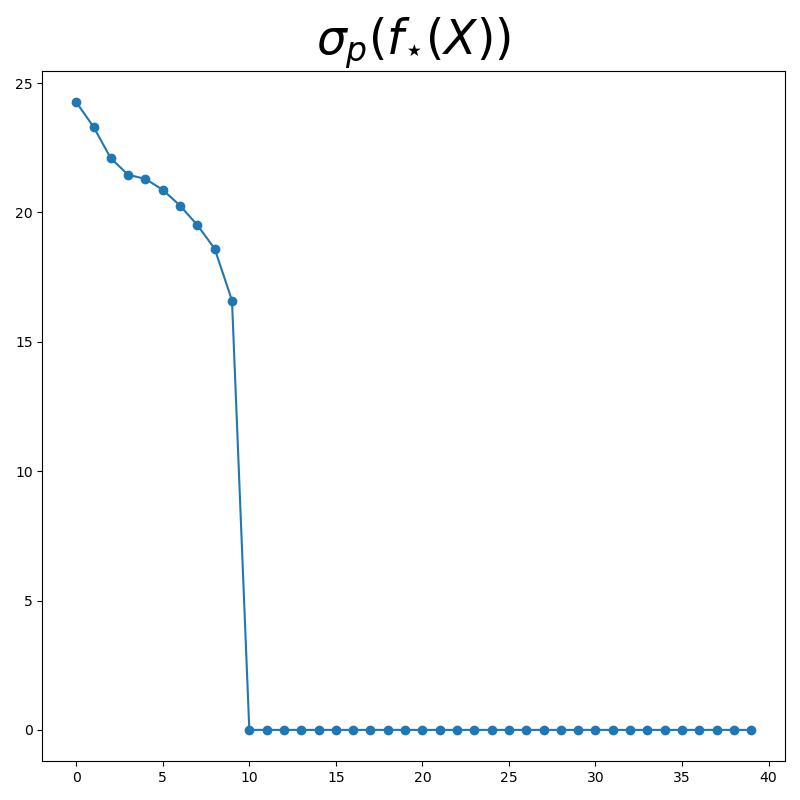}
    \includegraphics[width=0.24\textwidth]{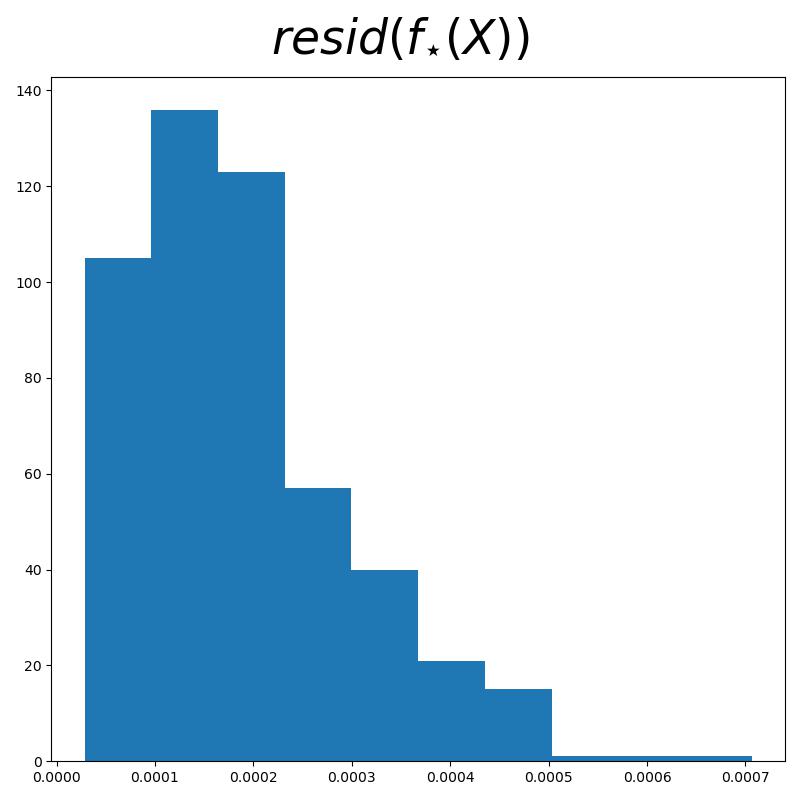}
    
    \includegraphics[width=0.24\textwidth]{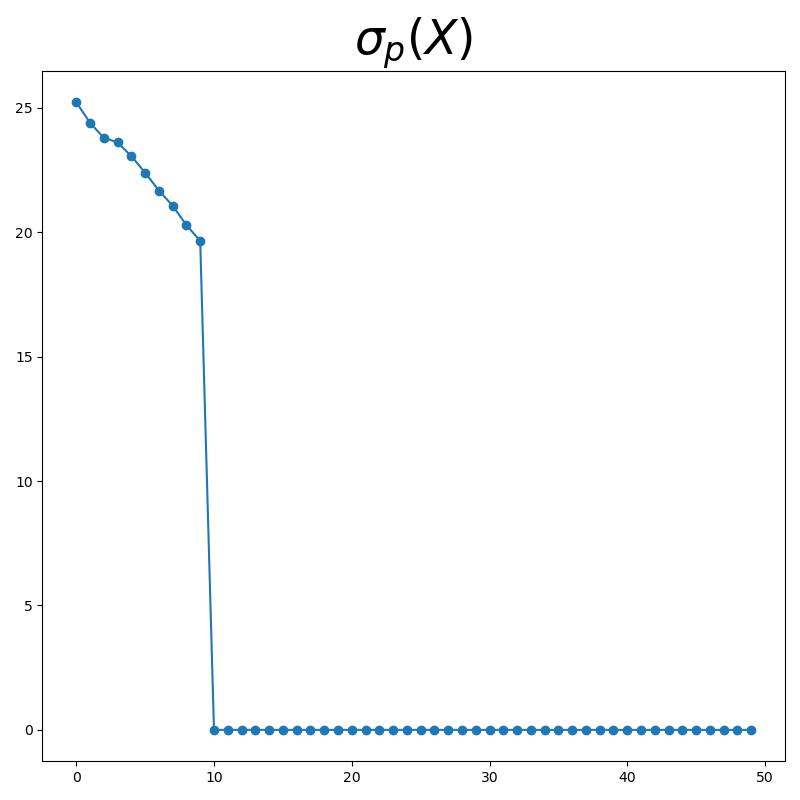}
    \includegraphics[width=0.24\textwidth]{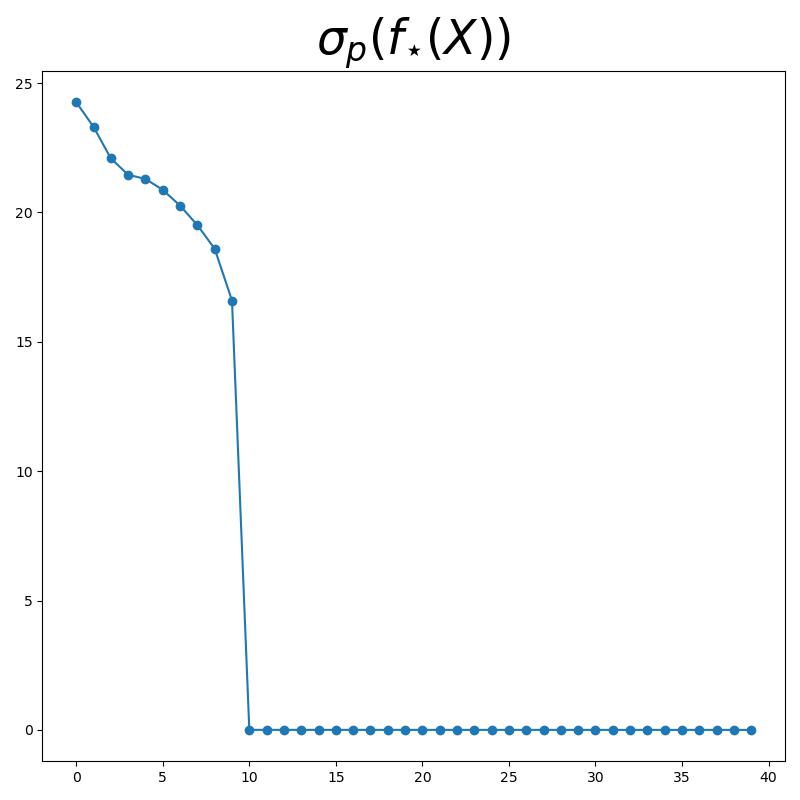}
    \includegraphics[width=0.24\textwidth]{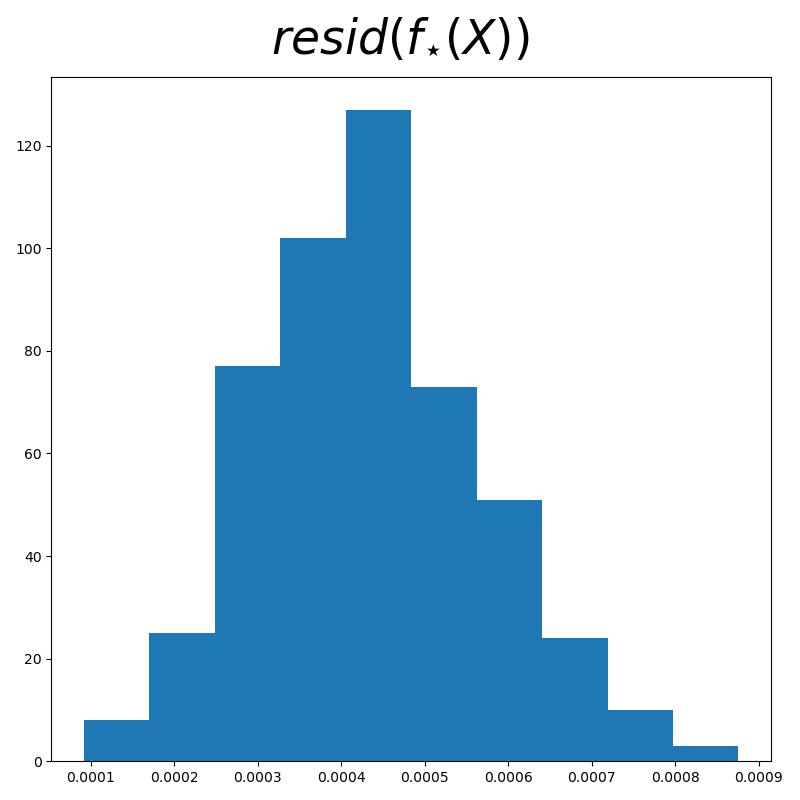}
    
    \includegraphics[width=0.24\textwidth]{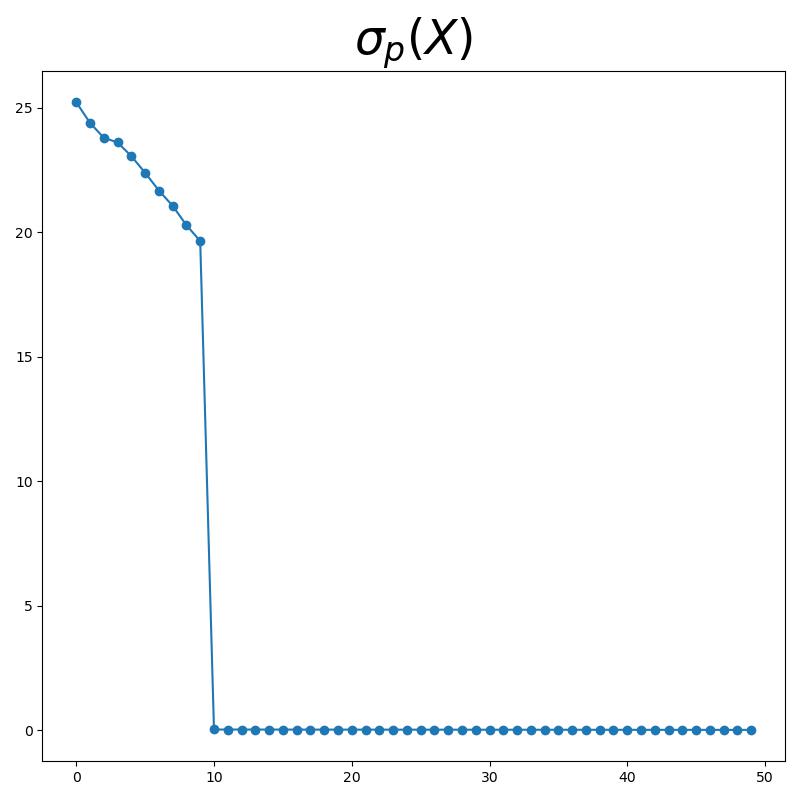}
    \includegraphics[width=0.24\textwidth]{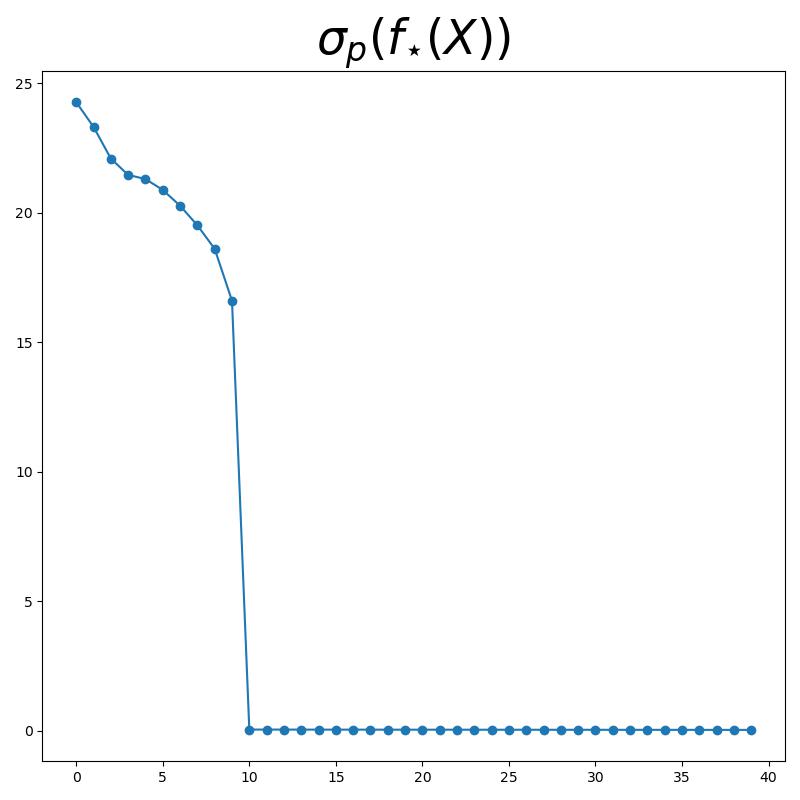}
    \includegraphics[width=0.24\textwidth]{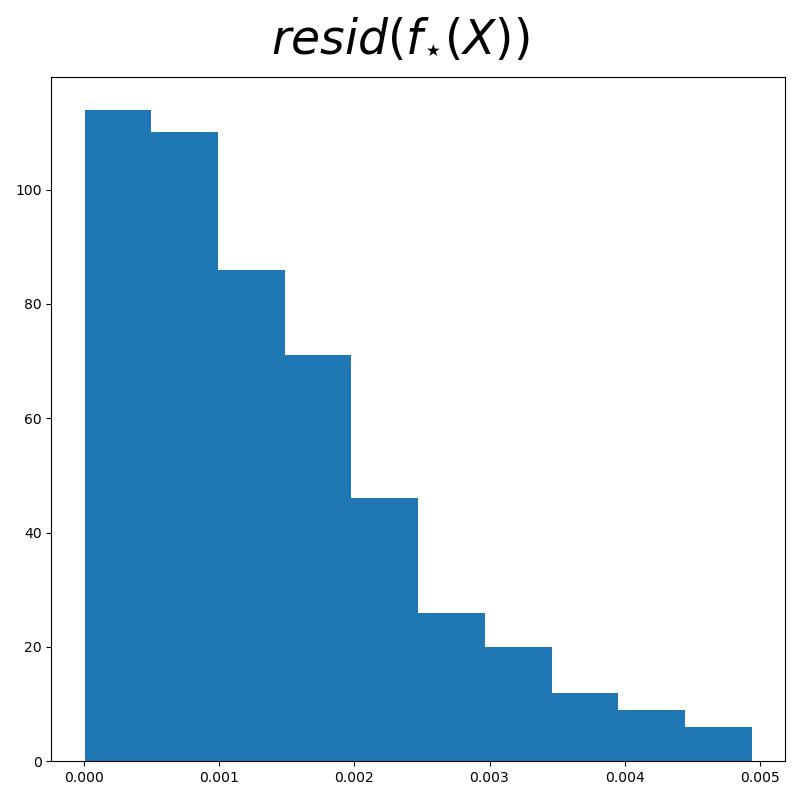}
    
    \includegraphics[width=0.24\textwidth]{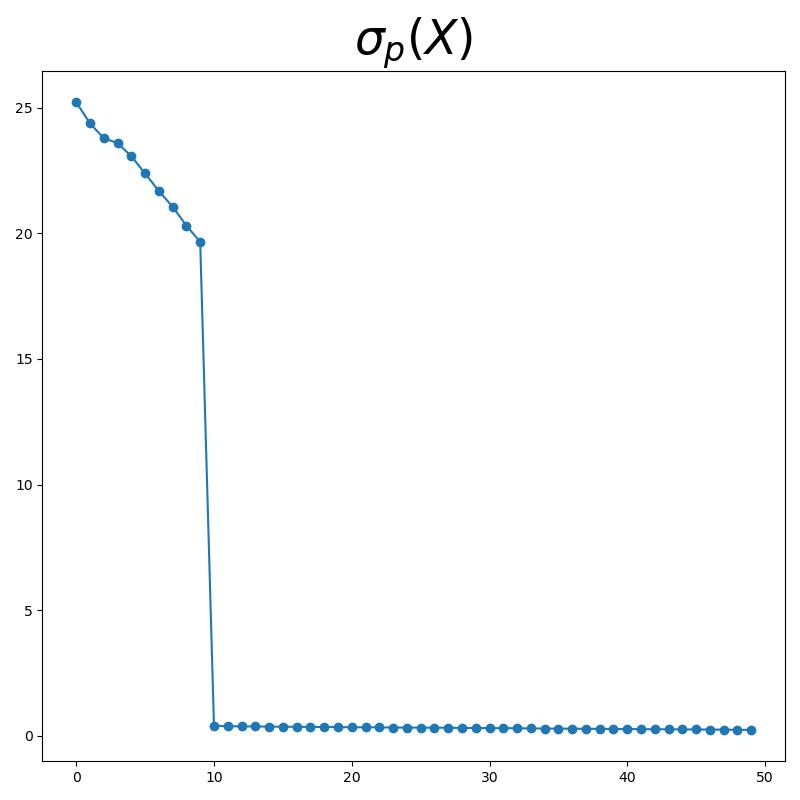}
    \includegraphics[width=0.24\textwidth]{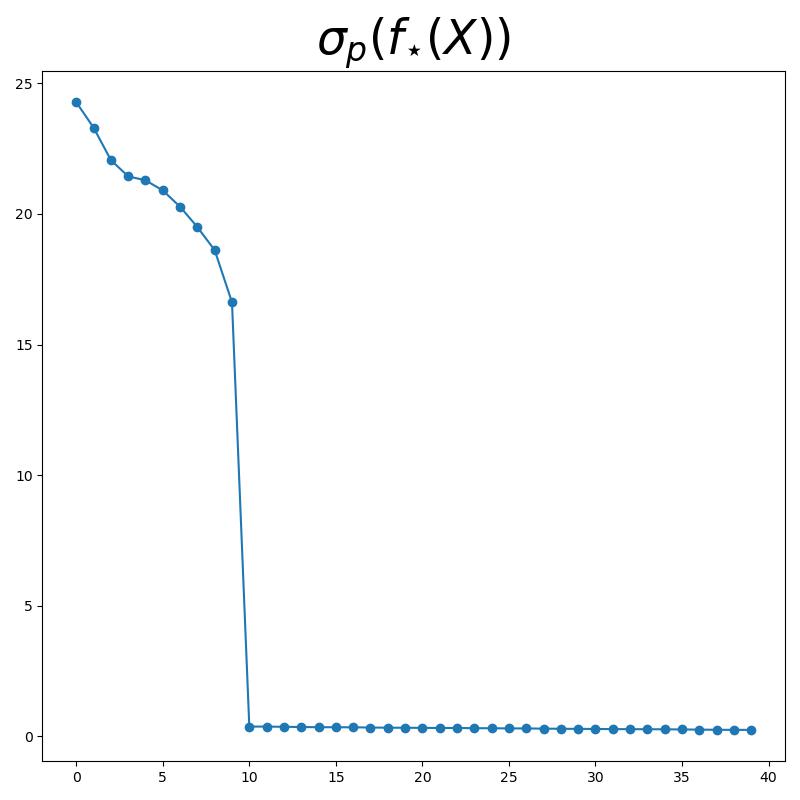}
    \includegraphics[width=0.24\textwidth]{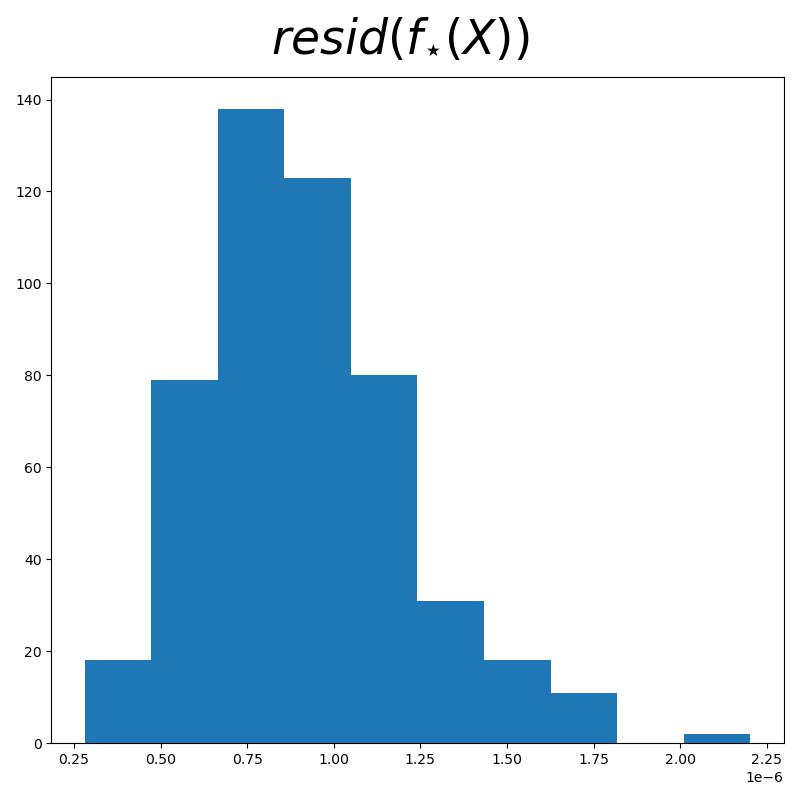}
    
    \includegraphics[width=0.24\textwidth]{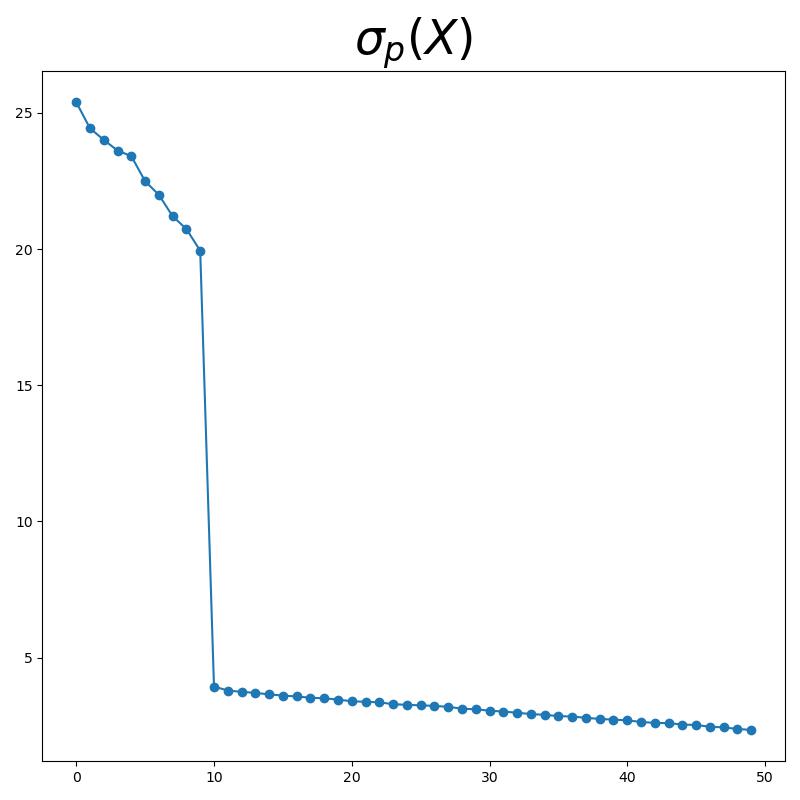}
    \includegraphics[width=0.24\textwidth]{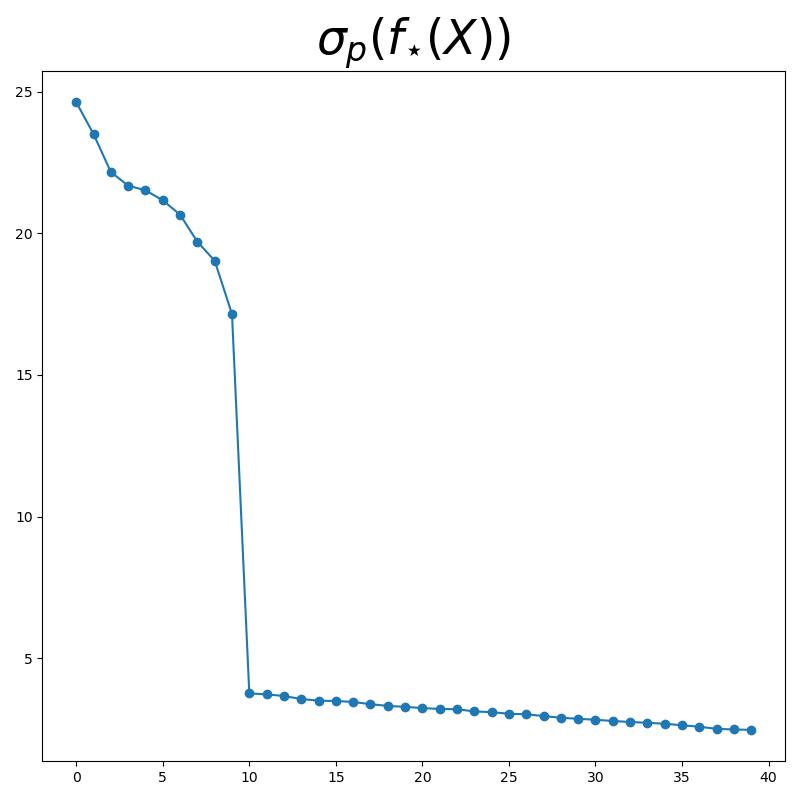}
    \includegraphics[width=0.24\textwidth]{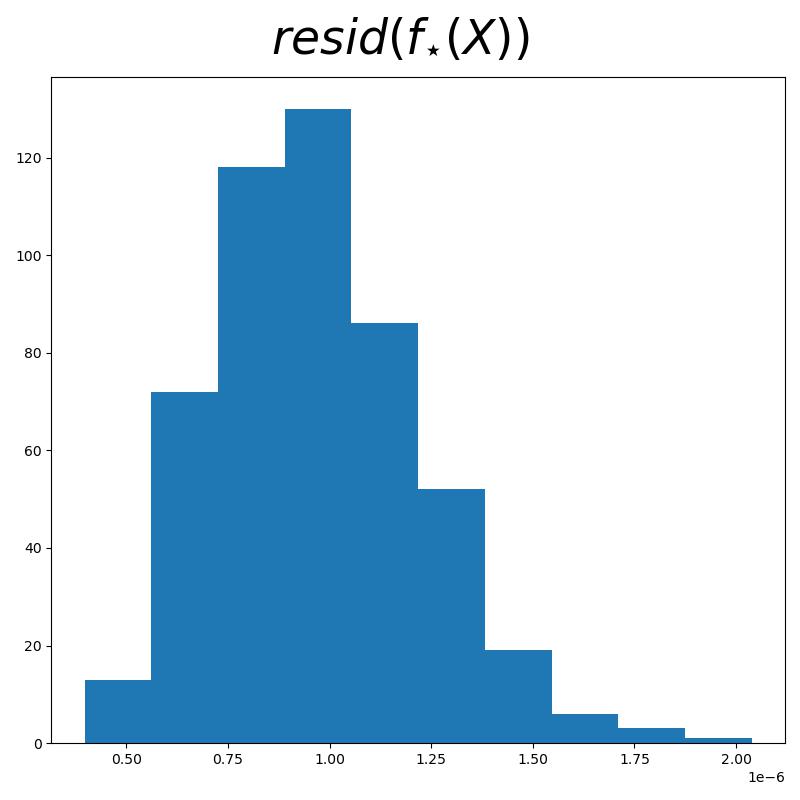}
    
    \caption{Performance of CTRL-SSP with varying \(\nu\). First row: spectra of the data matrix, i..e, distribution of \(\sigma_{p}(\X)\), spectra of the representation matrix \(f_{\star}(\X)\), i.e., distribution of \(\sigma_{p}(f_{\star}(\X))\), and histogram of the residuals \(\texttt{resid}(f_{\star}(\x^{i}))\) for all \(\x^{i}\) in \(\X\), with \(\nu = 0\). Second row: the same with \(\nu = 10^{-6}\). Third row: the same with \(\nu = 10^{-4}\). Fourth row: the same with \(\nu = 10^{-2}\). Fifth row: the same with \(\nu = 1\).}
    \label{fig:ctrl_ssp_varying_nu}
\end{figure}

\newpage
\section{More Empirical Evaluations}\label{sec:more_experiments}

In this section, we continue the study of the empirical convergence of learned \(f\) and \(g\) to Stackelberg equilbria of the CTRL-MSP game. We perform a more detailed analysis of CTRL-MSP's robustness to noise. Finally, we clarify the differences between CTRL-MSP, our instance of CTRL-SG as formulated in \Cref{sec:experiments}, and popular representation learning algorithms, and compare reconstruction performance.

For clarity and interpretability, we focus on the original problem of data lying on or near multiple linear subspaces. Our data generation process, baseline parameters, and optimization details are the same as in \Cref{sec:experiments}.

\subsection{CTRL-MSP}

We first study the resilience of CTRL-MSP to the choice of the ambient dimension \(\dz\), as that is the main hyperparameter to choose in this model. That is, in the framework of \Cref{sec:experiments} where \(d_{1} = 3\), \(d_{2} = 4\), and \(d_{3} = 5\), we vary \(\dz\) across the following levels: \(\dz = 20\), \(\dz = 30\), \(\dz = 40\), and \(\dz = 60\). In all regimes, we see that CTRL-MSP behaves equivalently and achieves success (\Cref{fig:ctrl_msp_varying_dz}).

\begin{figure}
    \centering
    \includegraphics[width=0.24\textwidth]{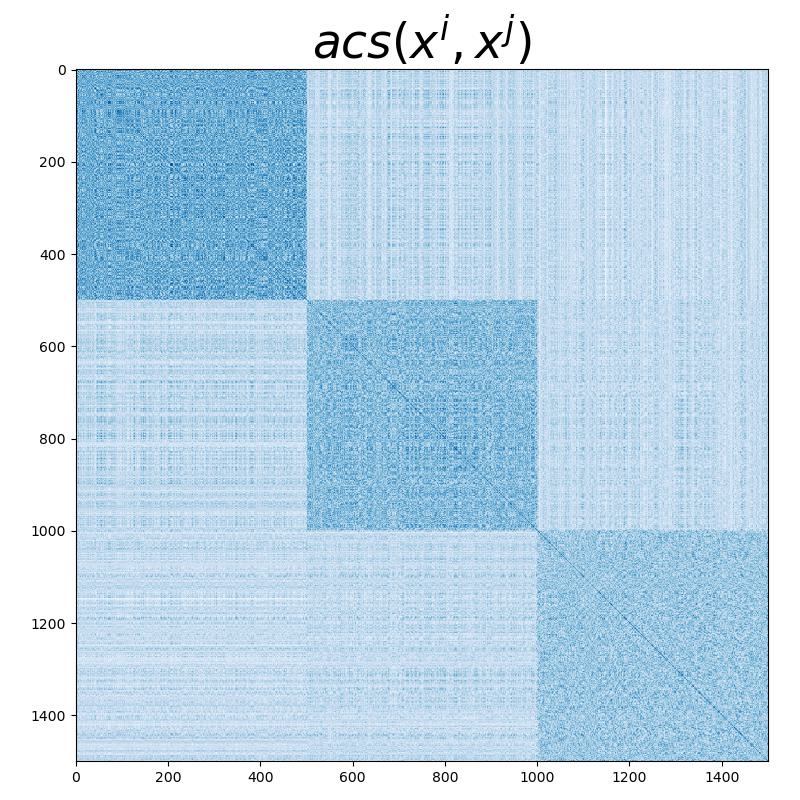}
    
    \includegraphics[width=0.24\textwidth]{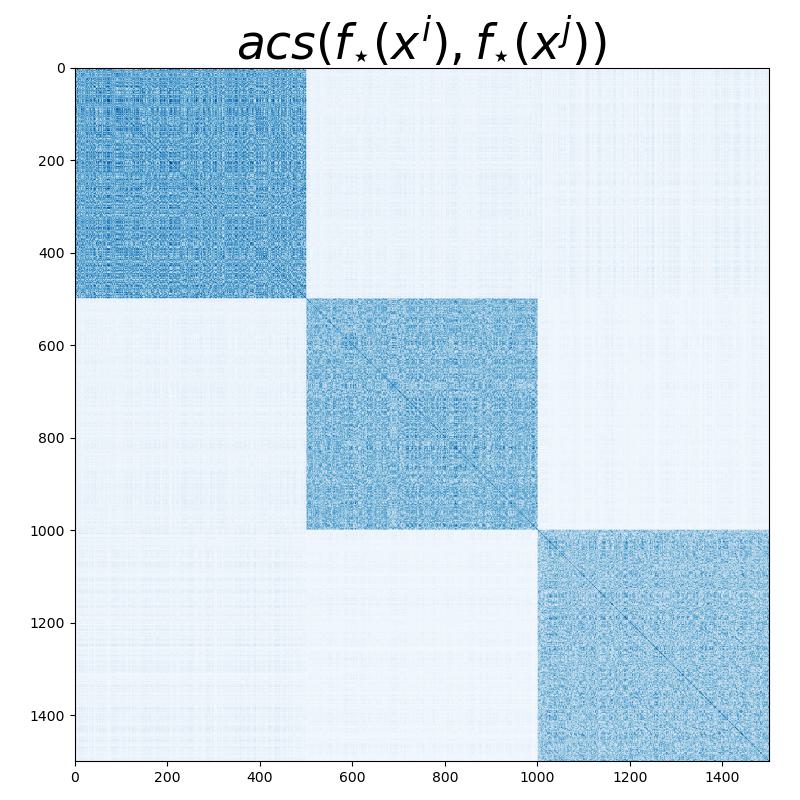}
    \includegraphics[width=0.24\textwidth]{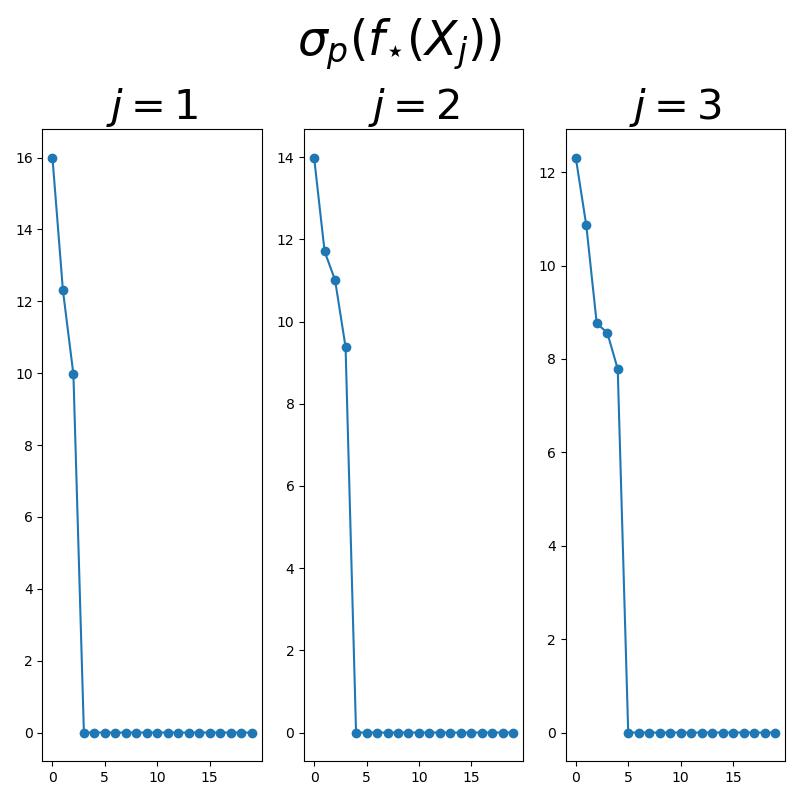}
    \includegraphics[width=0.24\textwidth]{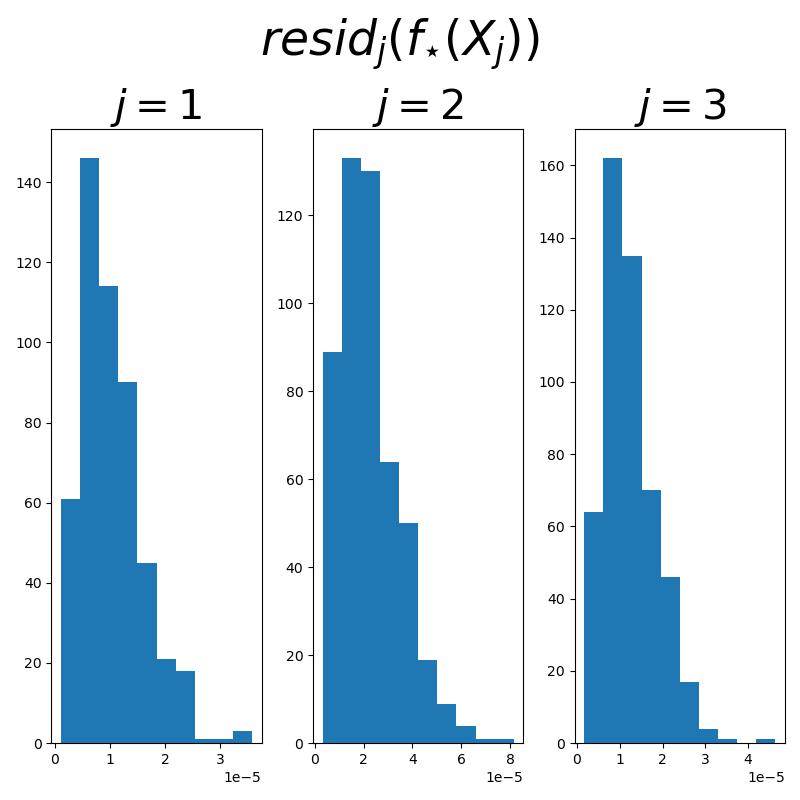}
    
    \includegraphics[width=0.24\textwidth]{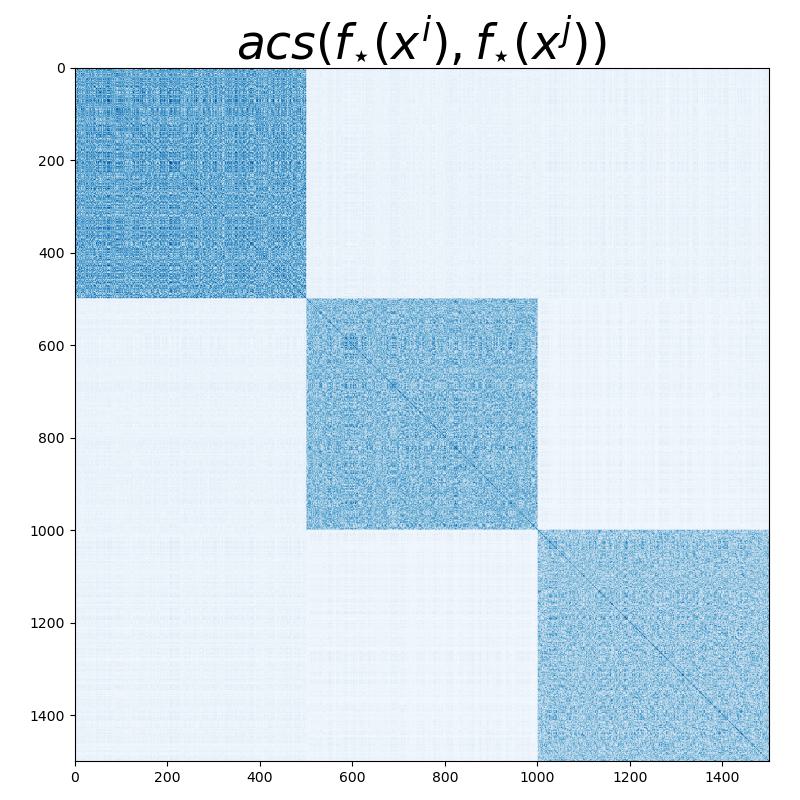}
    \includegraphics[width=0.24\textwidth]{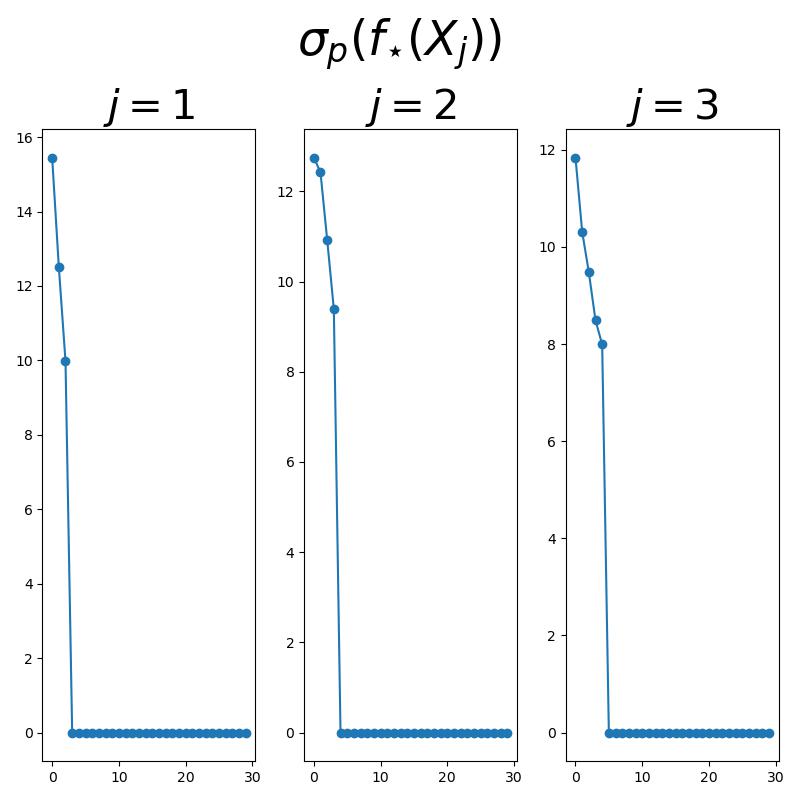}
    \includegraphics[width=0.24\textwidth]{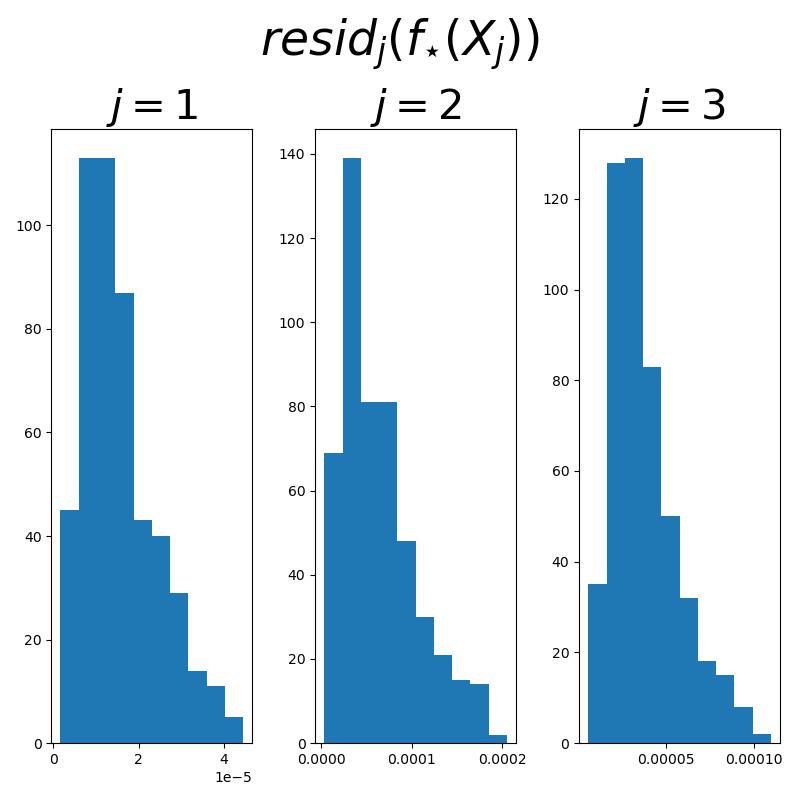}
    
    \includegraphics[width=0.24\textwidth]{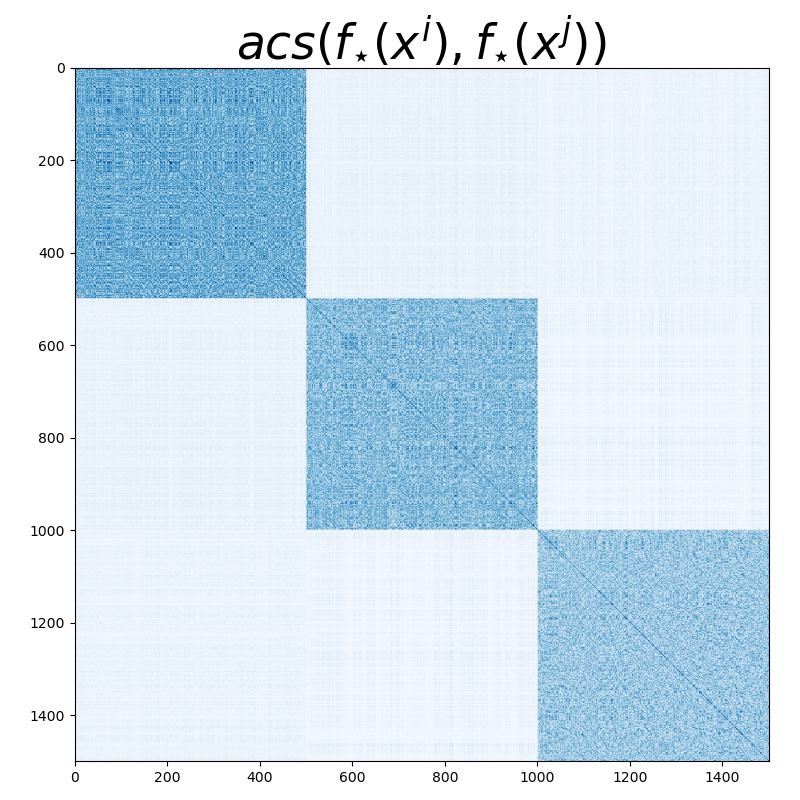}
    \includegraphics[width=0.24\textwidth]{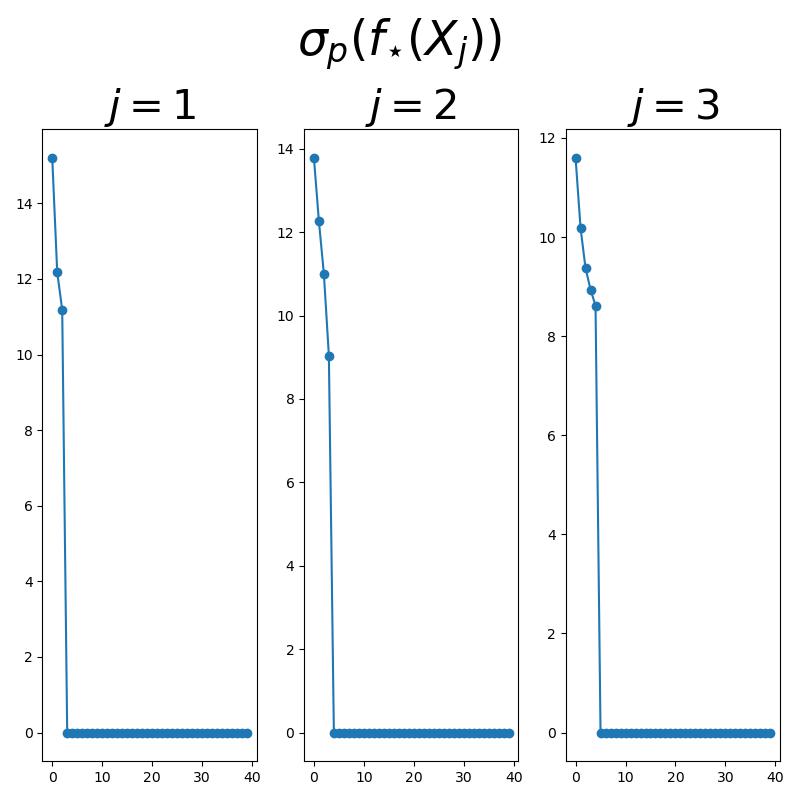}
    \includegraphics[width=0.24\textwidth]{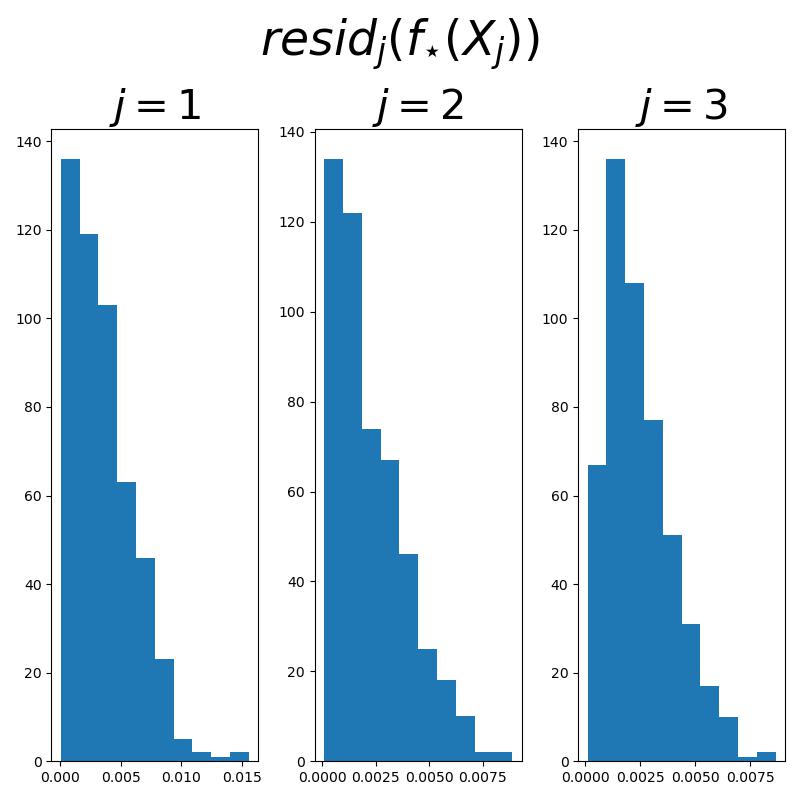}
    
    \includegraphics[width=0.24\textwidth]{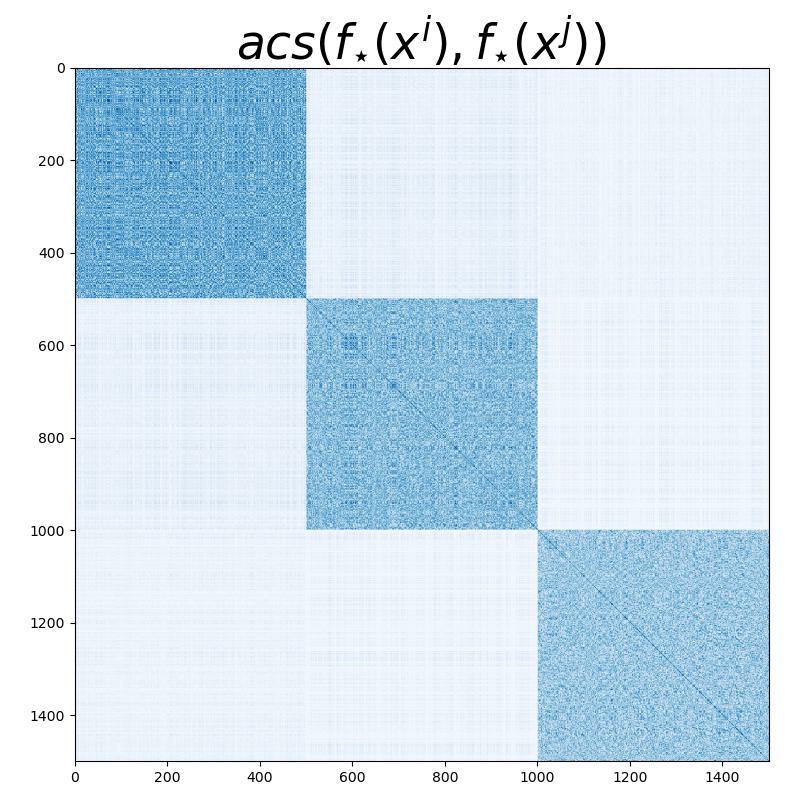}
    \includegraphics[width=0.24\textwidth]{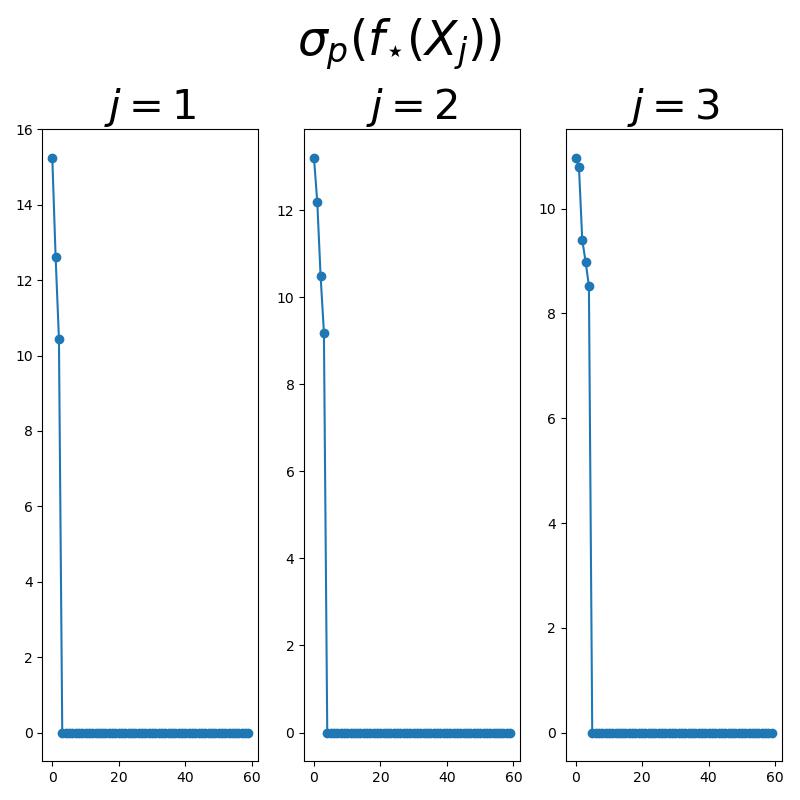}
    \includegraphics[width=0.24\textwidth]{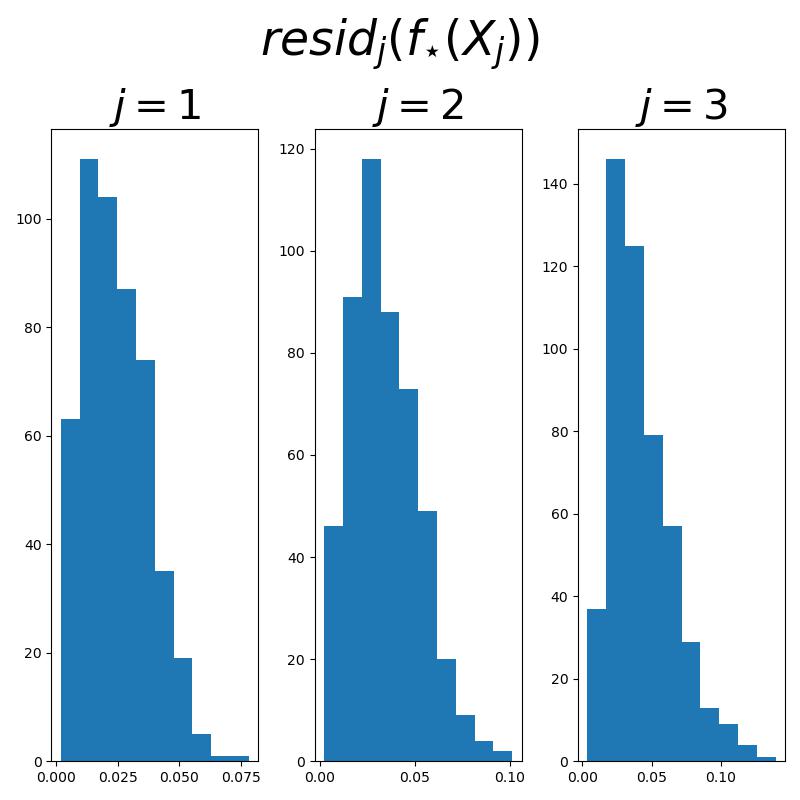}
    
    \caption{Performance of CTRL-MSP with varying \(\dz\). First row: heatmap of absolute cosine similarities of the original data \(\texttt{acs}(\x^{p}, \x^{q})\). Second row: heatmap of absolute cosine similarities of the learned representations \(\texttt{acs}(f_{\star}(\x^{p}), f_{\star}(\x^{q}))\), spectra of the representation matrices \(f_{\star}(\X_{j})\) for every \(j\), histogram of the residuals \(\texttt{resid}_{j}(f_{\star}(\x^{i}))\) for all \(\x^{i}\) in \(\X_{j}\) for every \(j\), with \(\dz = 20\). Third row: the same with \(\dz = 30\). Fourth row: the same with \(\dz = 40\). Fifth row: the same with \(\dz = 60\).}
    \label{fig:ctrl_msp_varying_dz}
\end{figure}

We now study the resilience of CTRL-MSP to noise in the data. In the framework of \Cref{sec:experiments}, we vary \(\nu\) across the following levels: \(\nu = 10^{-6}\), \(\nu = 10^{-4}\), \(\nu = 10^{-2}\), and \(\nu = 1\). In all but the last regime, we see that CTRL-MSP is resilient to noise and achieves success (\Cref{fig:ctrl_msp_varying_nu}). Interestingly, between the second-to-last and last noise levels, the learned model quality degenerates rapidly; this alludes to a phase transition behavior, which could be an interesting phenomenon to analyze in future work.

\begin{figure}
    \centering
    \includegraphics[width=0.24\textwidth]{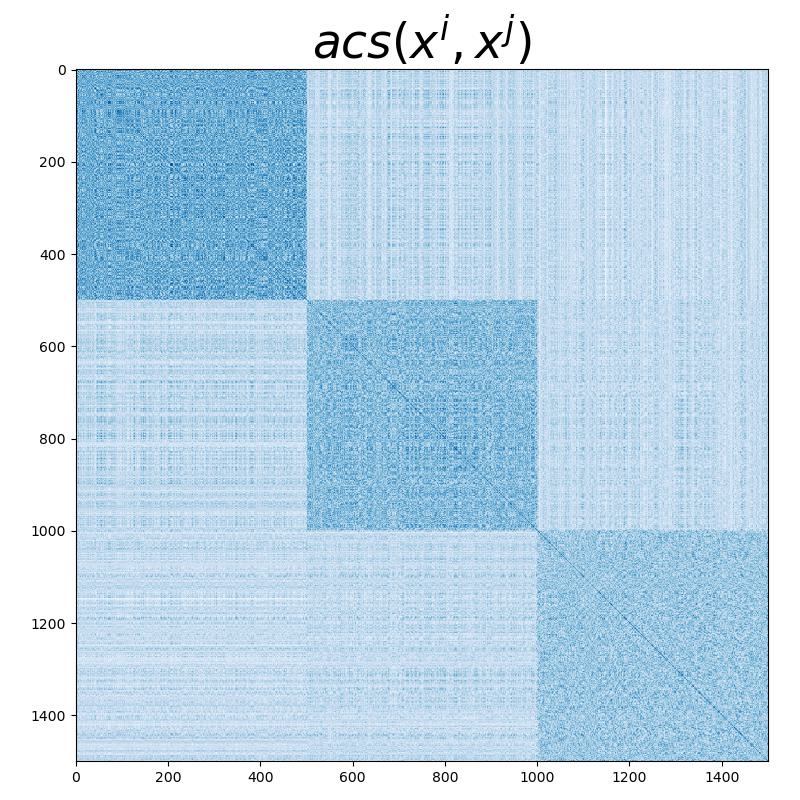}
    \includegraphics[width=0.24\textwidth]{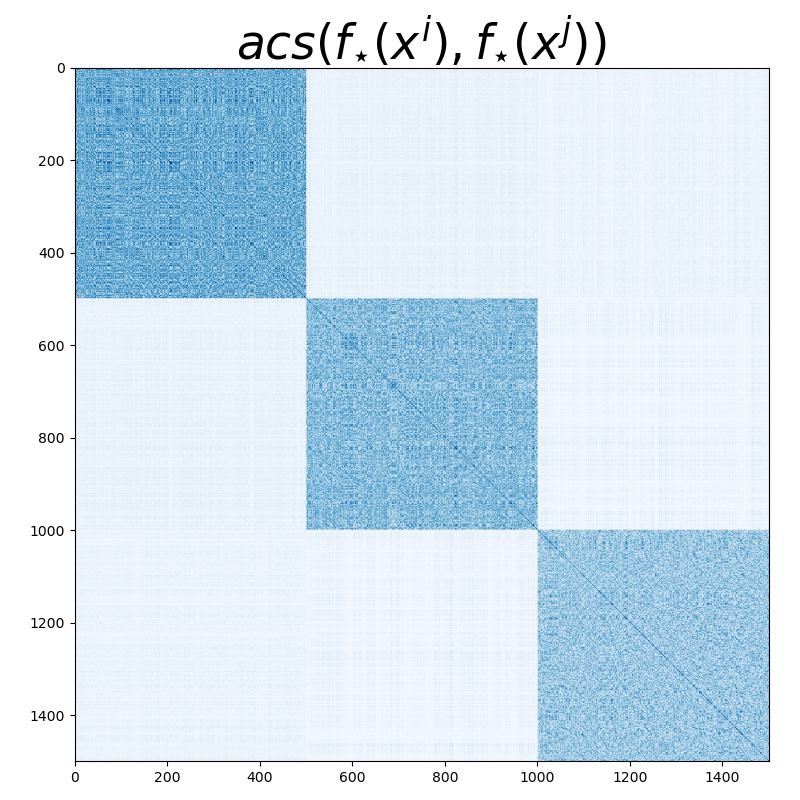}
    \includegraphics[width=0.24\textwidth]{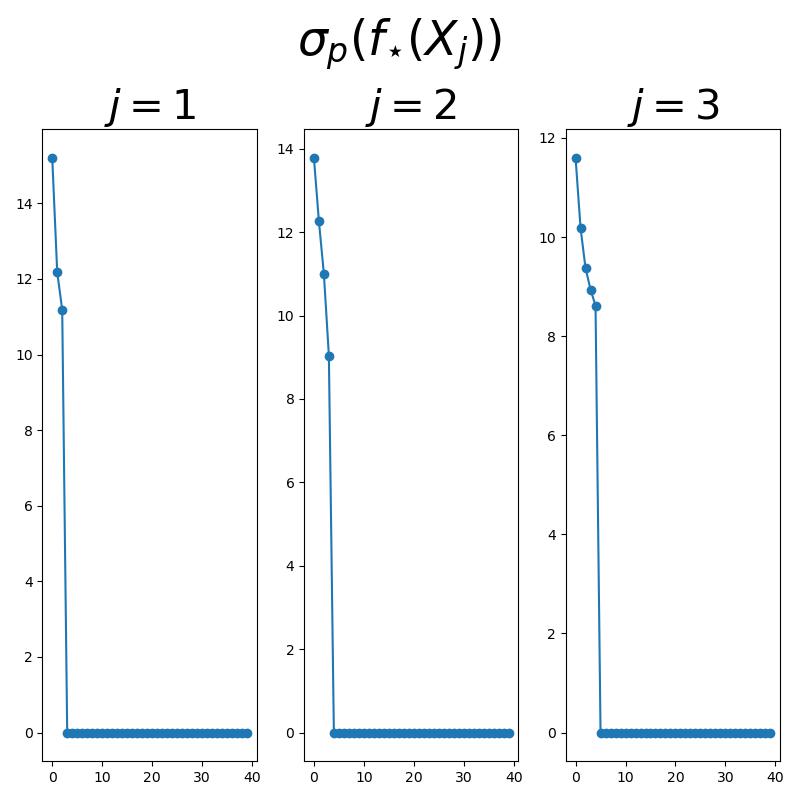}
    \includegraphics[width=0.24\textwidth]{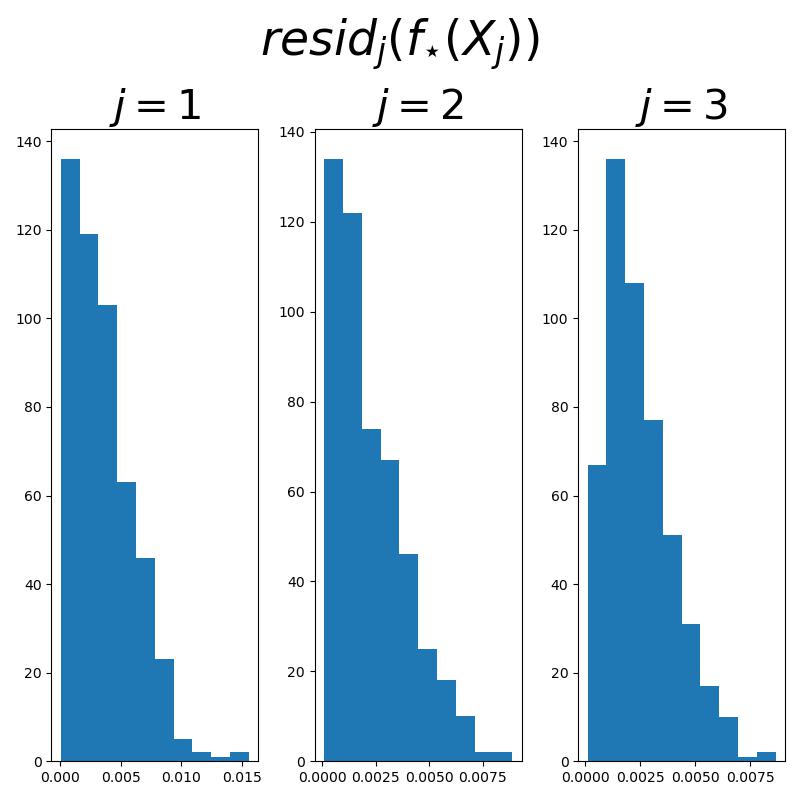}
    
    \includegraphics[width=0.24\textwidth]{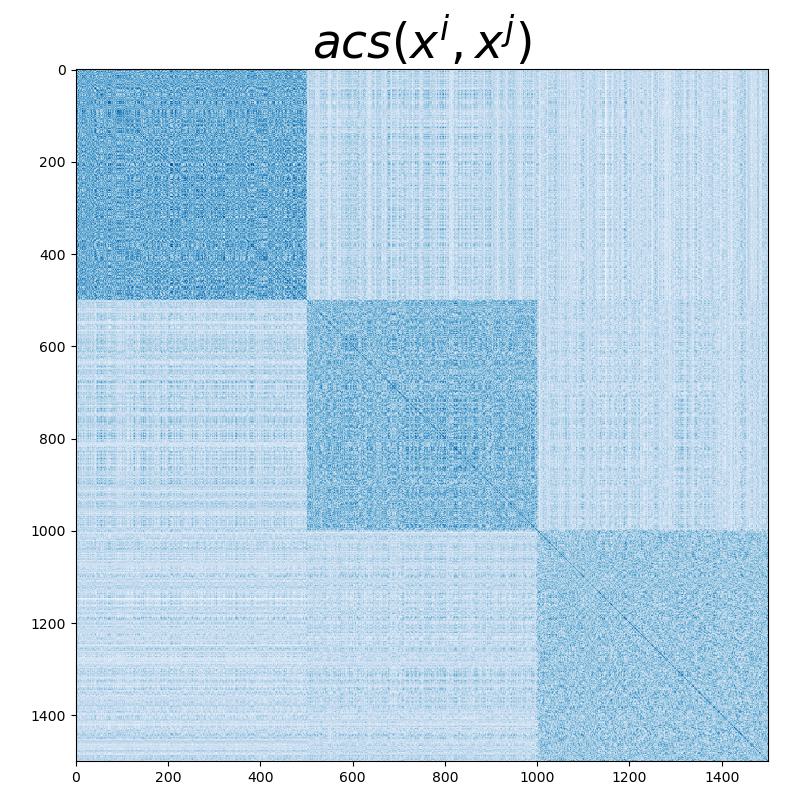}
    \includegraphics[width=0.24\textwidth]{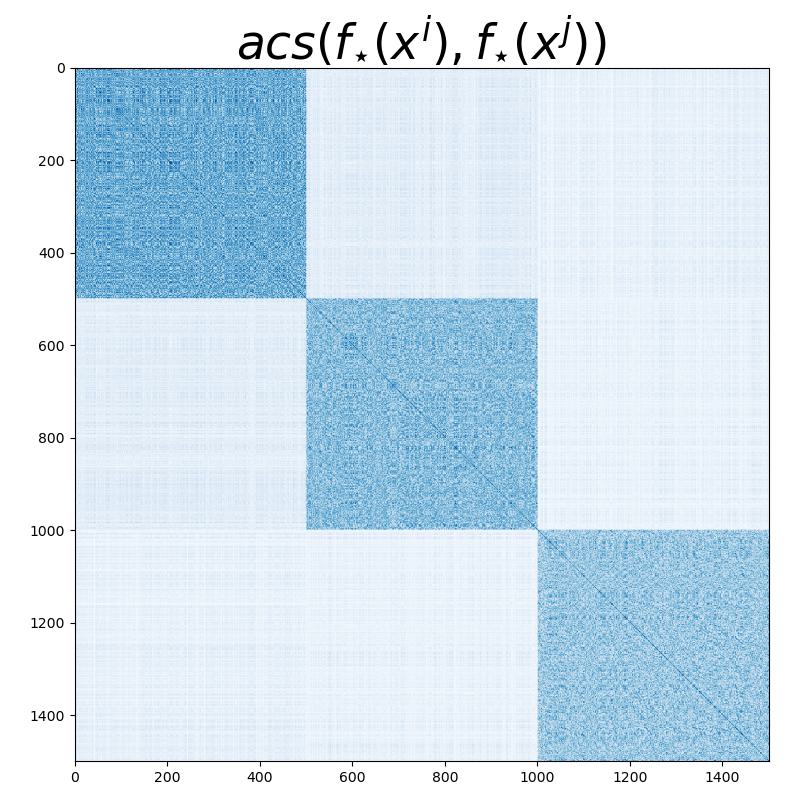}
    \includegraphics[width=0.24\textwidth]{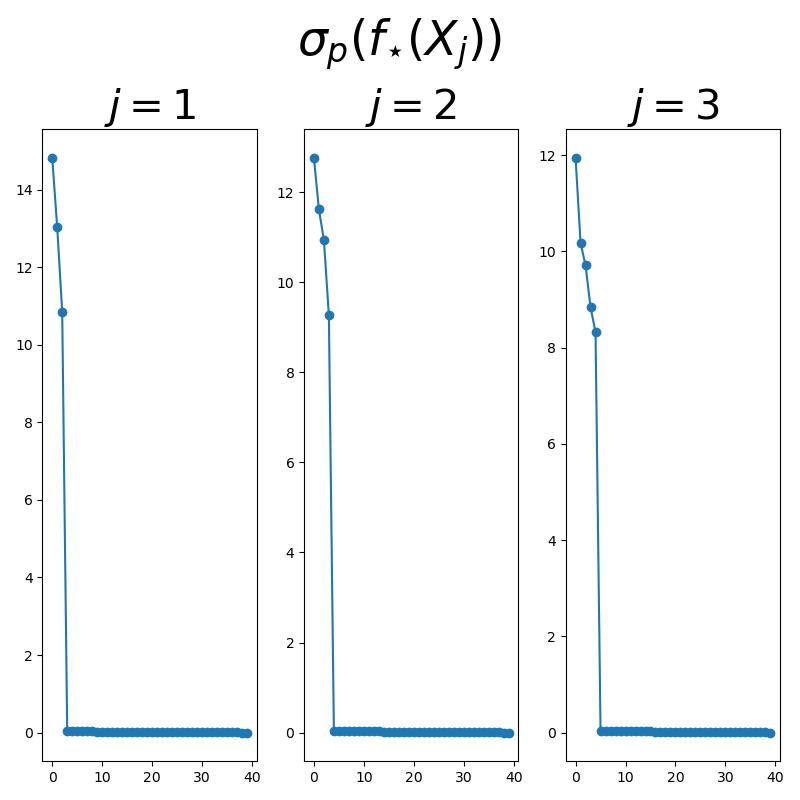}
    \includegraphics[width=0.24\textwidth]{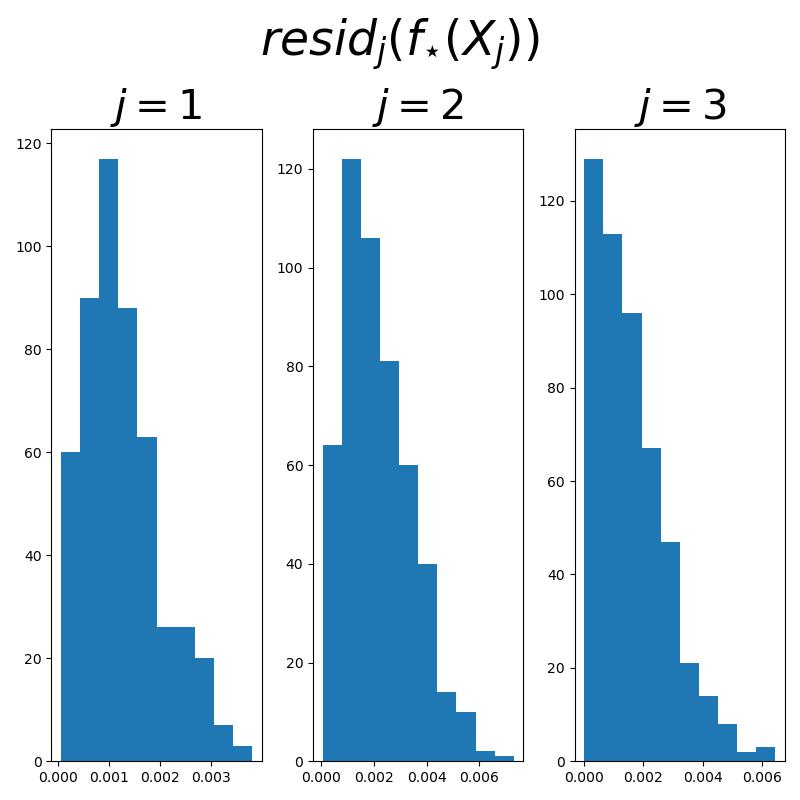}
    
    \includegraphics[width=0.24\textwidth]{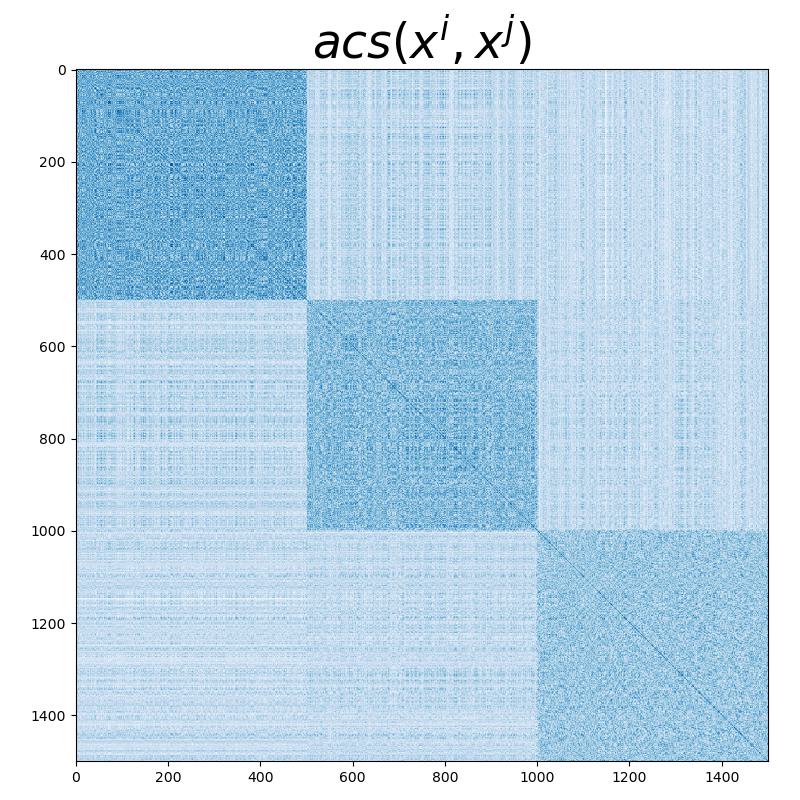}
    \includegraphics[width=0.24\textwidth]{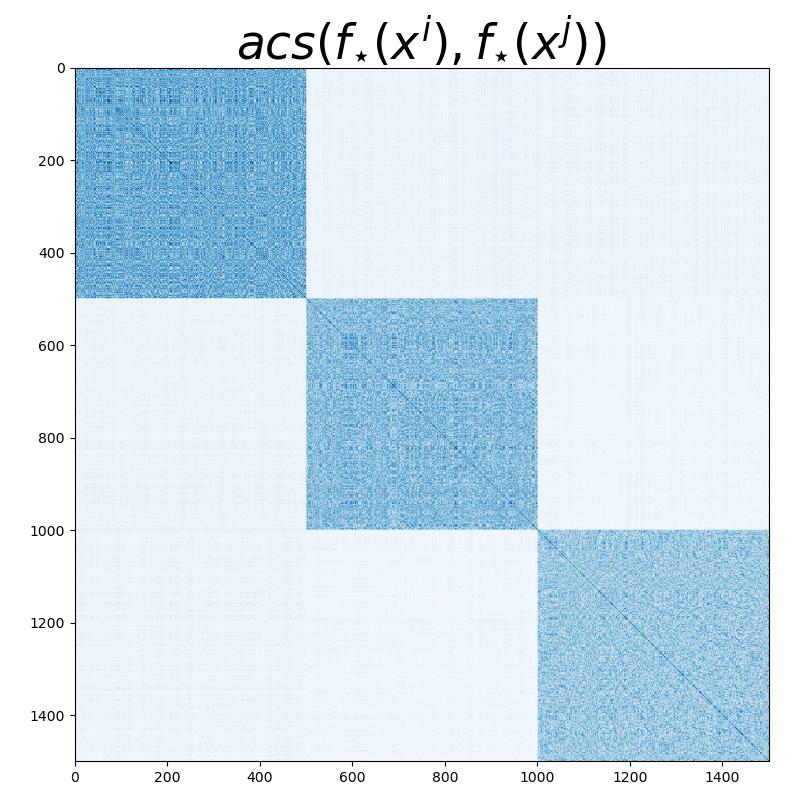}
    \includegraphics[width=0.24\textwidth]{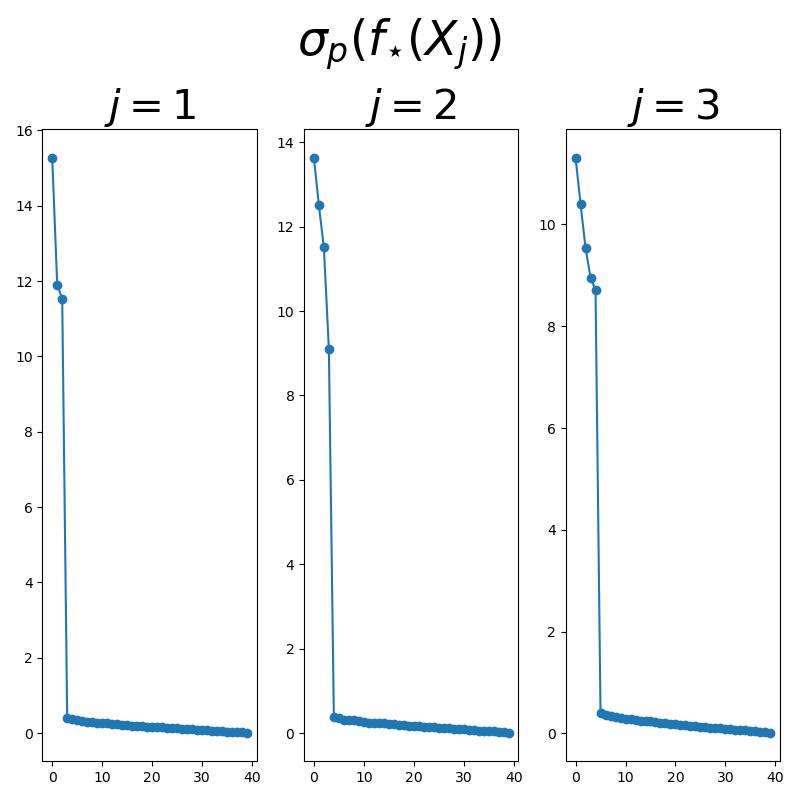}
    \includegraphics[width=0.24\textwidth]{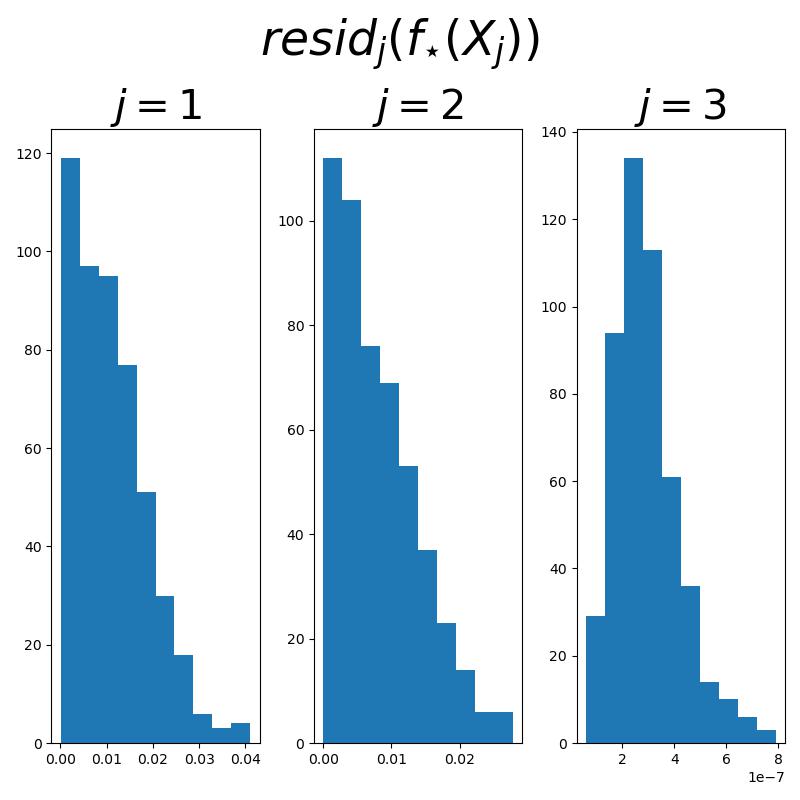}
    
    \includegraphics[width=0.24\textwidth]{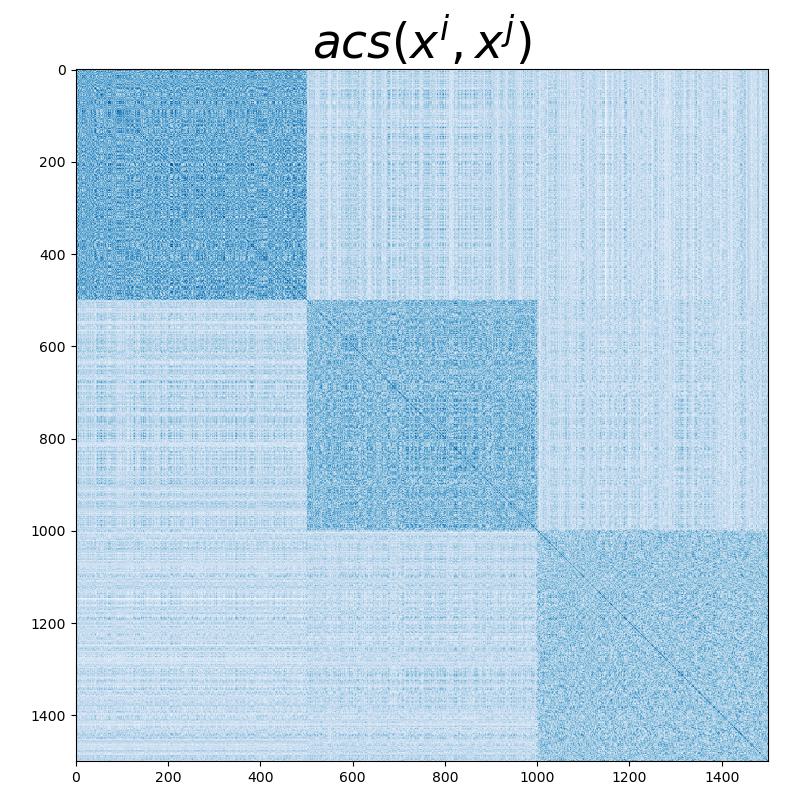}
    \includegraphics[width=0.24\textwidth]{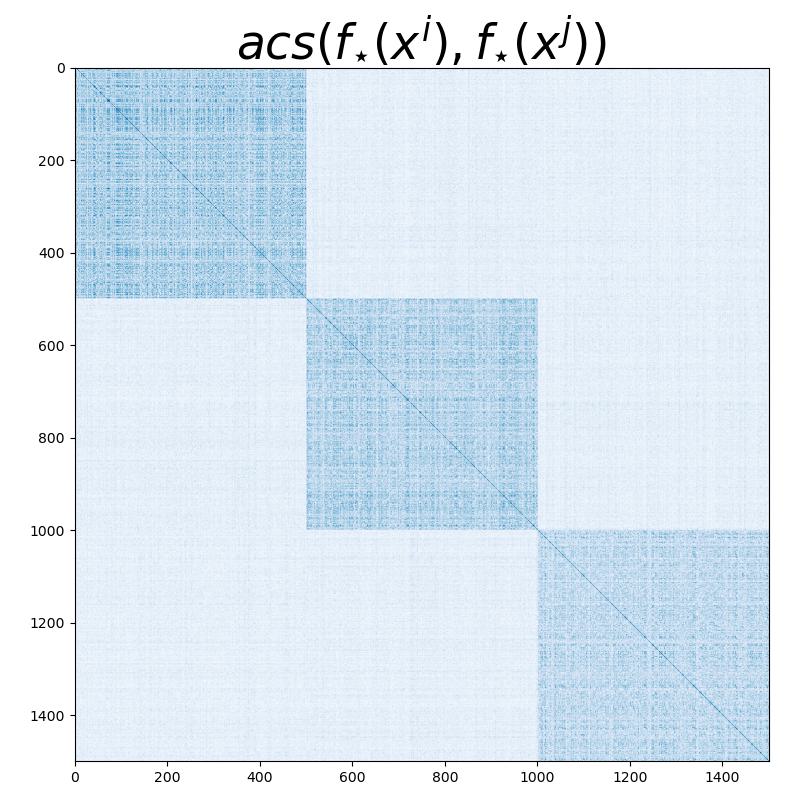}
    \includegraphics[width=0.24\textwidth]{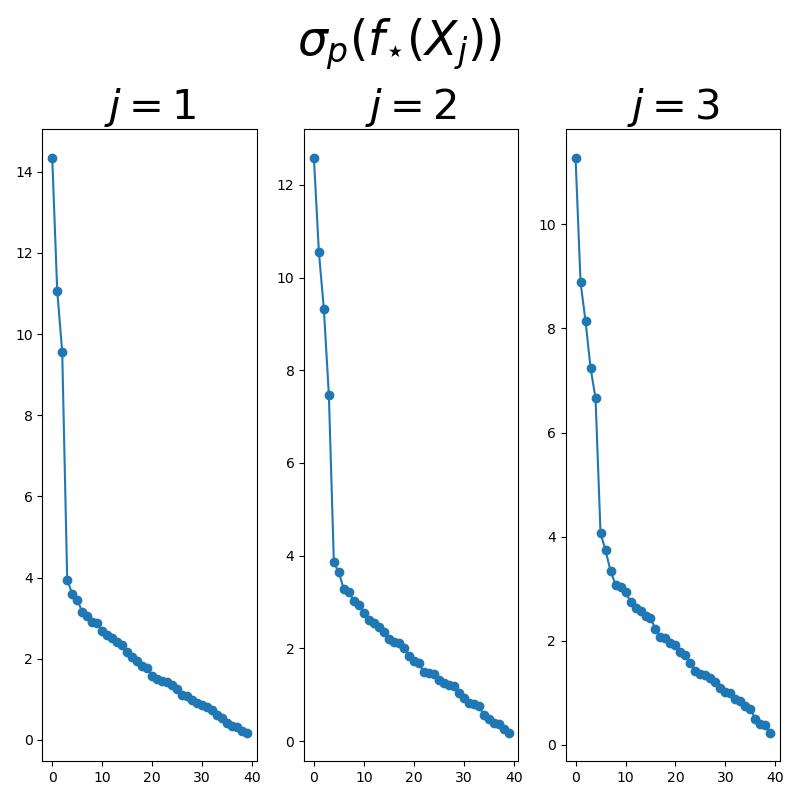}
    \includegraphics[width=0.24\textwidth]{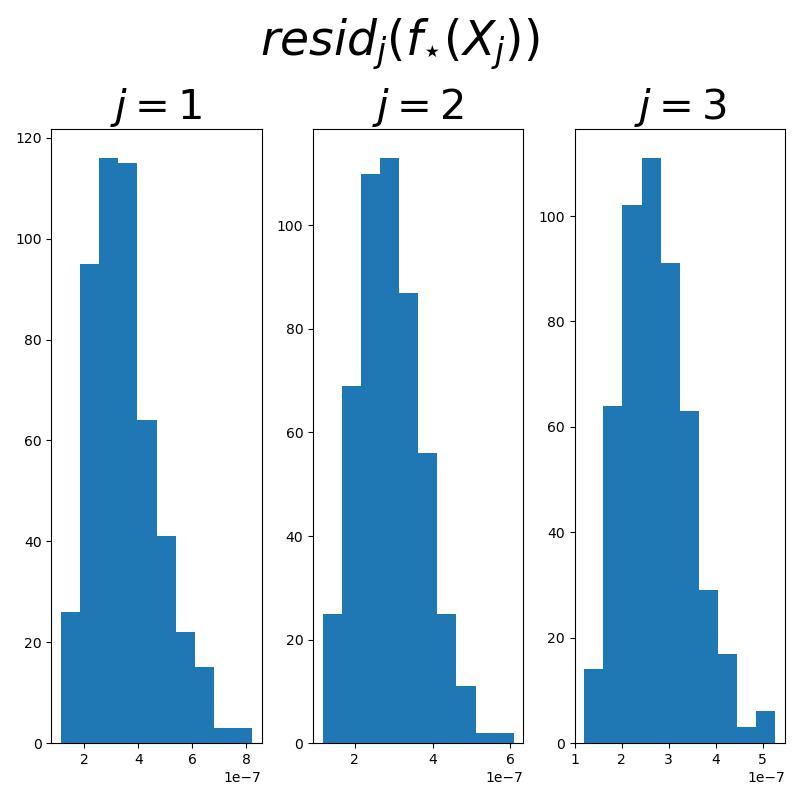}
    
    \includegraphics[width=0.24\textwidth]{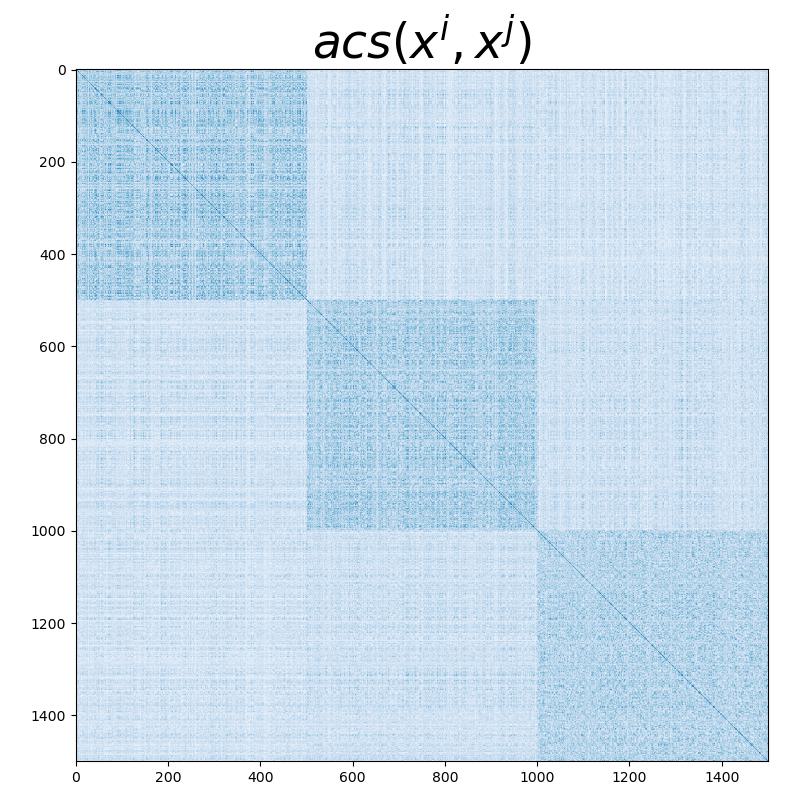}
    \includegraphics[width=0.24\textwidth]{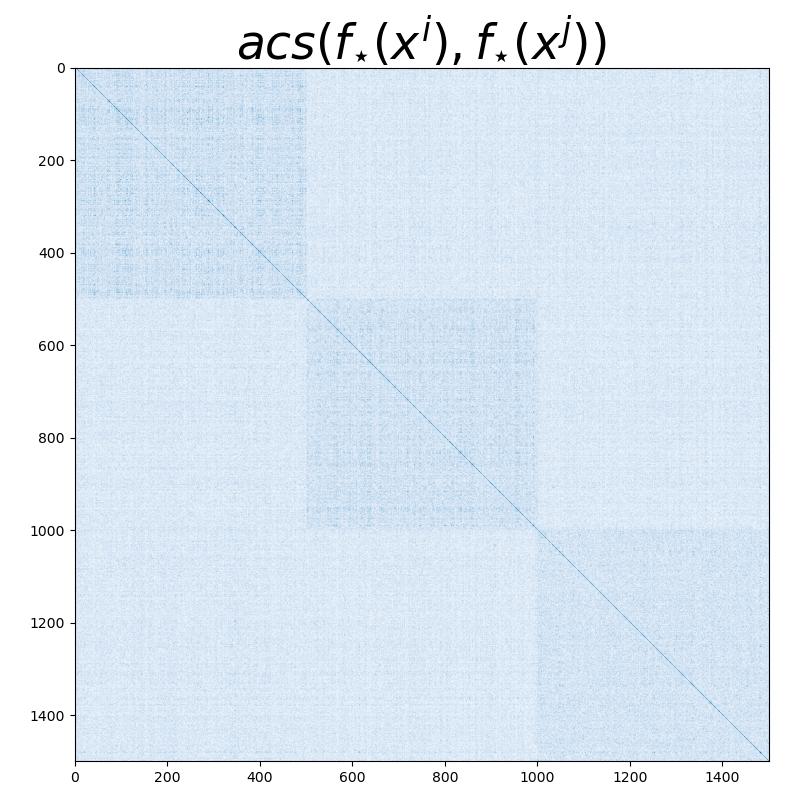}
    \includegraphics[width=0.24\textwidth]{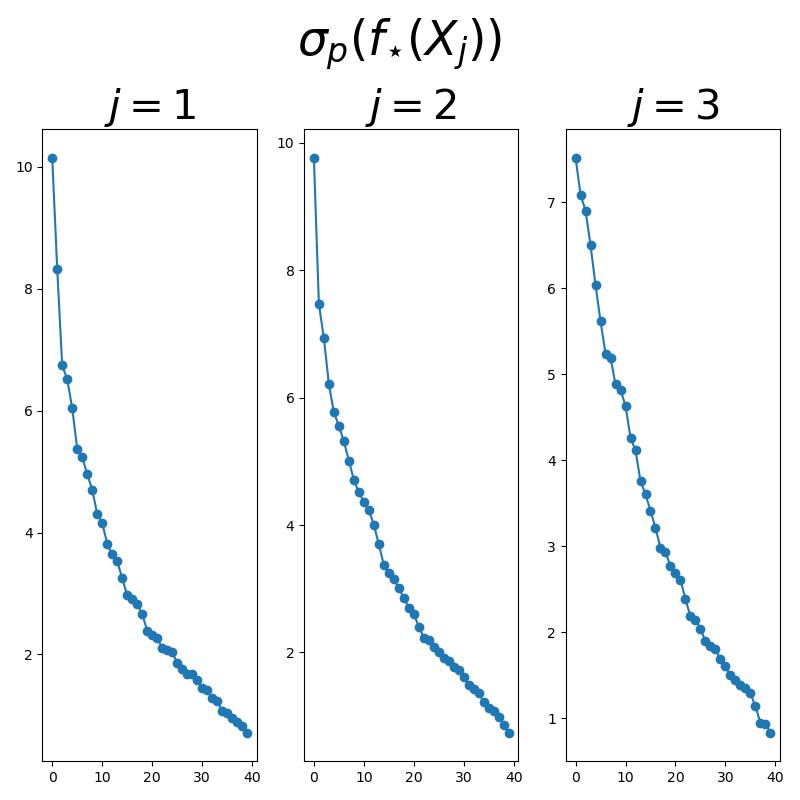}
    \includegraphics[width=0.24\textwidth]{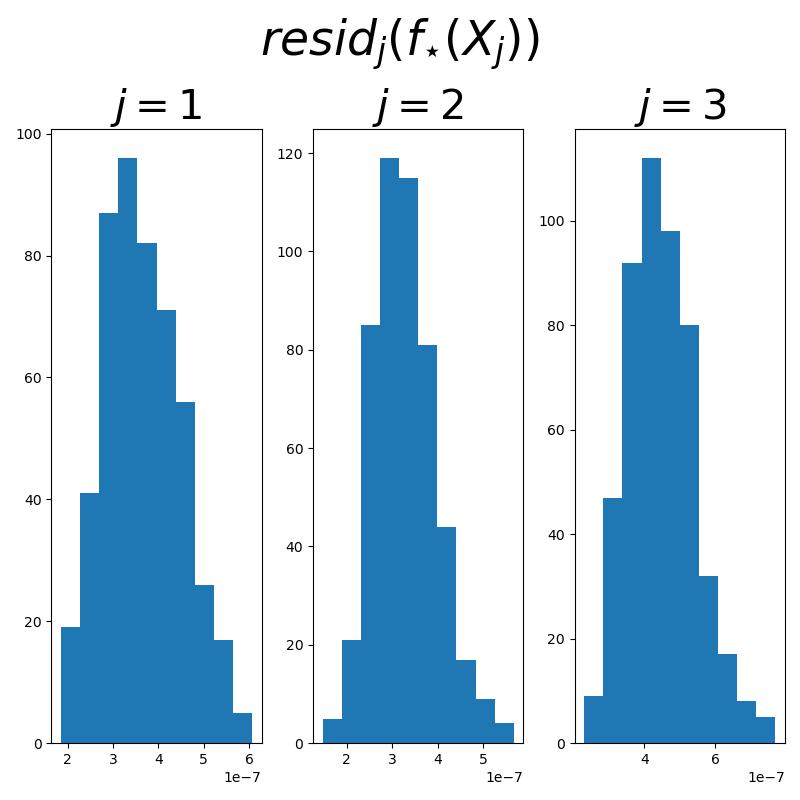}
    
    \caption{Performance of CTRL-MSP with varying \(\nu\). First row: heatmap of absolute cosine similarities of the original data \(\texttt{acs}(\x^{p}, \x^{q})\), heatmap of absolute cosine similarities of the learned representations \(\texttt{acs}(f_{\star}(\x^{p}), f_{\star}(\x^{q}))\), spectra of the representation matrices \(f_{\star}(\X_{j})\) for every \(j\), histogram of the residuals \(\texttt{resid}_{j}(f_{\star}(\x^{i}))\) for all \(\x^{i}\) in \(\X_{j}\) for every \(j\), with \(\nu = 0\). Second row: the same with \(\nu = 10^{-6}\). Third row: the same with \(\nu = 10^{-4}\). Fourth row: the same with \(\nu = 10^{-2}\). Fifth row: the same with \(\nu = 1\).}
    \label{fig:ctrl_msp_varying_nu}
\end{figure}

\subsection{Comparisons}

Finally, we discuss the differences between CTRL-MSP, our CTRL-SG instance, and popular representation learning algorithms. The biggest difference is that \textit{no} mainstream representation learning algorithm learns, or even claims to learn, \textit{explicit and interpretable} representations for structured high-dimensional data, much less having \textit{provable guarantees} for achieving this goal. Thus, it is impossible to build a truly fair comparison of CTRL-MSP and our CTRL-SG instance versus popular representation learning algorithms.

Despite this, we still show that CTRL-MSP and our CTRL-SG instance capture low-dimensional non-noisy structure of the data in a way that popular algorithms fail to do. We do this by looking at the correlation structure and per-class spectra of the original data \(\X\) as well as the autoencoded data \((g_{\star} \circ f_{\star})(\X)\).

More specifically, we use the following popular and well-studied supervised architectures for a comparison:
\begin{itemize}
    \item Conditional VAE \citep{sohn2015learning};
    \item Conditional GAN \citep{mirza2014conditional};
    \item InfoGAN \citep{chen2016infogan}.
\end{itemize}
Each neural network in each architecture, including in our CTRL-SG instance, is a multi-layer perceptron with \(2\) layers and ReLU activation; we use latent/noise dimension equal to \(d_{\mathrm{noise}} = d_{\mathrm{latent}} = 40\). InfoGAN also uses an extra ``code'' variable, whose dimension we set to \(d_{\mathrm{code}} = 10\). Our implementations for comparison are adapted from the popular GitHub repositories \href{https://github.com/eriklindernoren/PyTorch-GAN}{Pytorch-GAN} and \href{https://github.com/AntixK/PyTorch-VAE}{Pytorch-VAE}.

Despite not directly optimizing for data-space reconstruction, we show that CTRL-MSP and our CTRL-SG instance learn the statistics of the original data quite well, in contrast with several popular representation learning models. In fact, one sees that the autoencodings provided by CTRL-MSP and our CTRL-SG instance preserve the correlation structures between each data subspace, the dimension of each subspace \(\S_{j}\), and the spectra of the data matrices. On the other hand, the autoencoding provided by CVAE, and the generated data provided by the GAN frameworks, do not preserve the dimension of the subspace or the correlation structures of the original data, and in fact the autoencodings almost collapse the multi-dimensional subspace data into a one-dimensional subspace where correlations are uniform. Thus, one sees that CTRL-MSP and our CTRL-SG instance successfully learn the low-dimensional structure where other models fail (\Cref{fig:ctrl_comparison}). This confirms lines of theoretical work showing that, for example, VAEs are not able to learn low-dimensional structure under certain conditions on the encoder \citep{koehler2021variational}.

\begin{figure}
    \centering
    \begin{subfigure}[b]{\textwidth}
    \centering
    \includegraphics[width=0.20\textwidth]{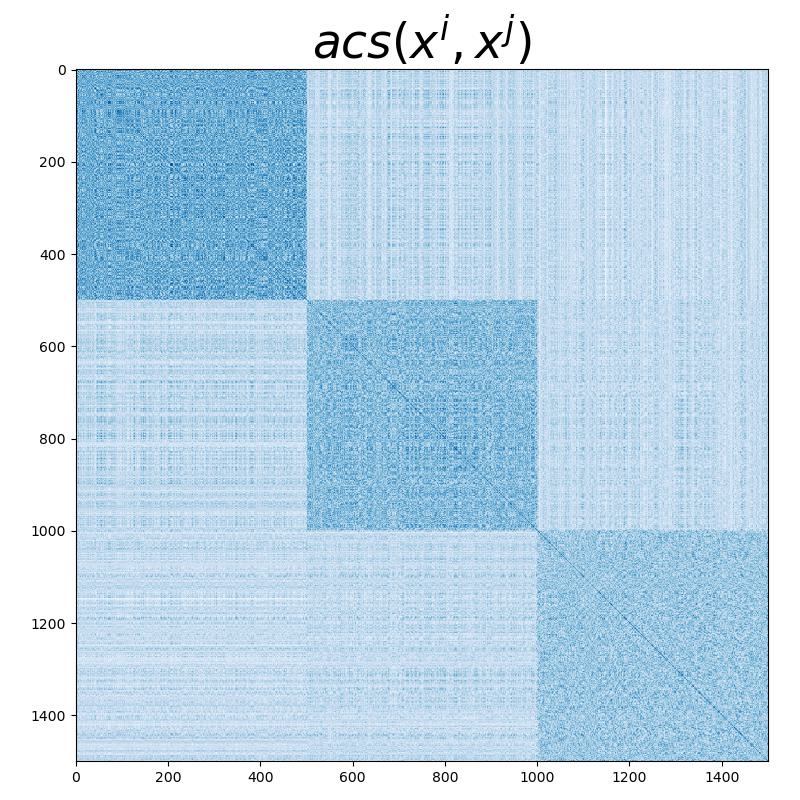}
    \includegraphics[width=0.20\textwidth]{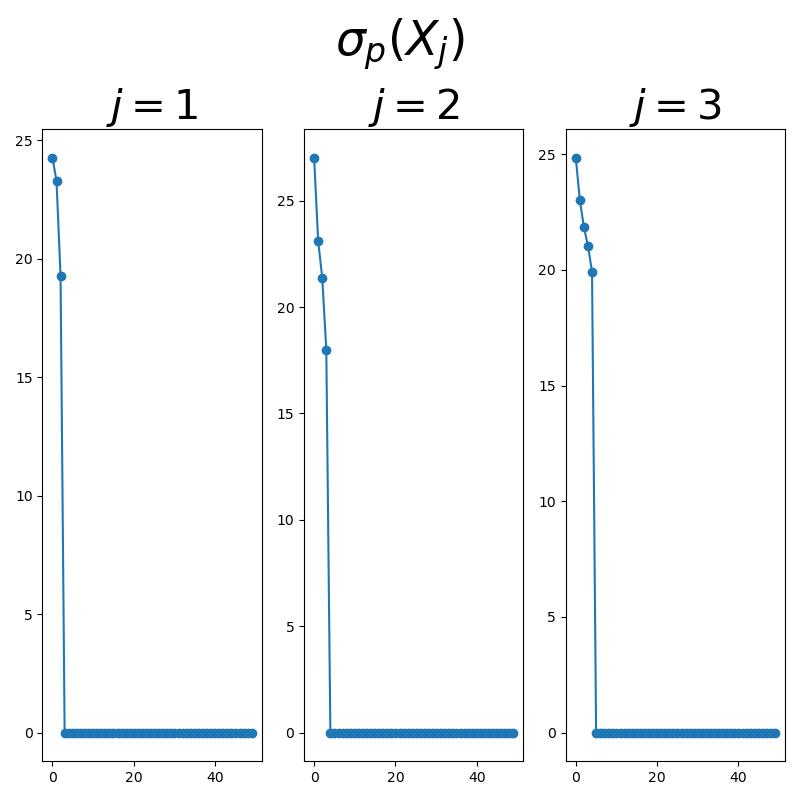}
    \caption{} 
    \end{subfigure}
    
    \begin{subfigure}[b]{0.4\textwidth}
    \centering
    \includegraphics[width=0.44\textwidth]{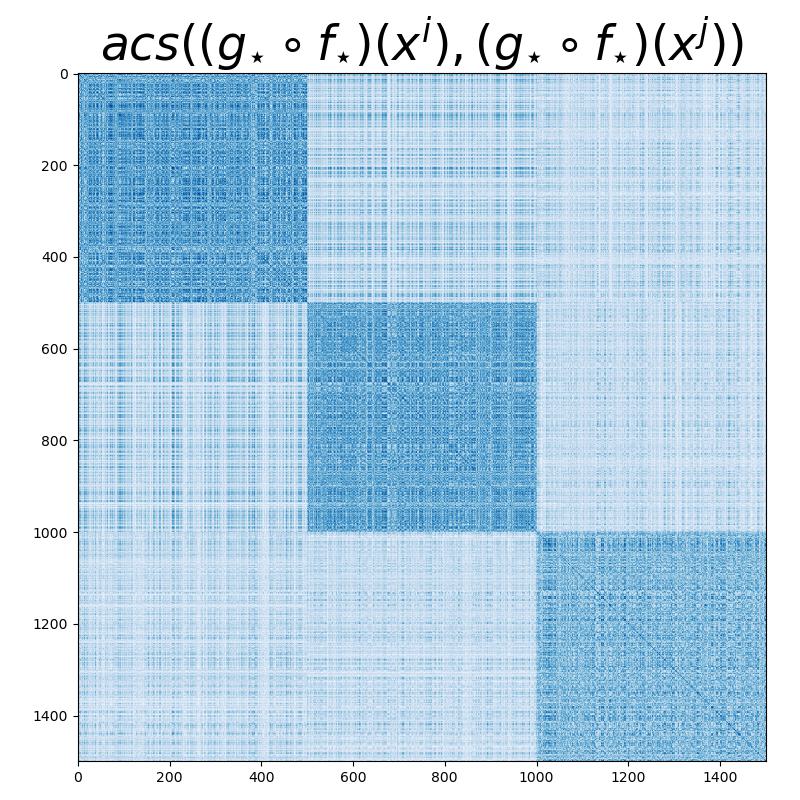}
    \includegraphics[width=0.44\textwidth]{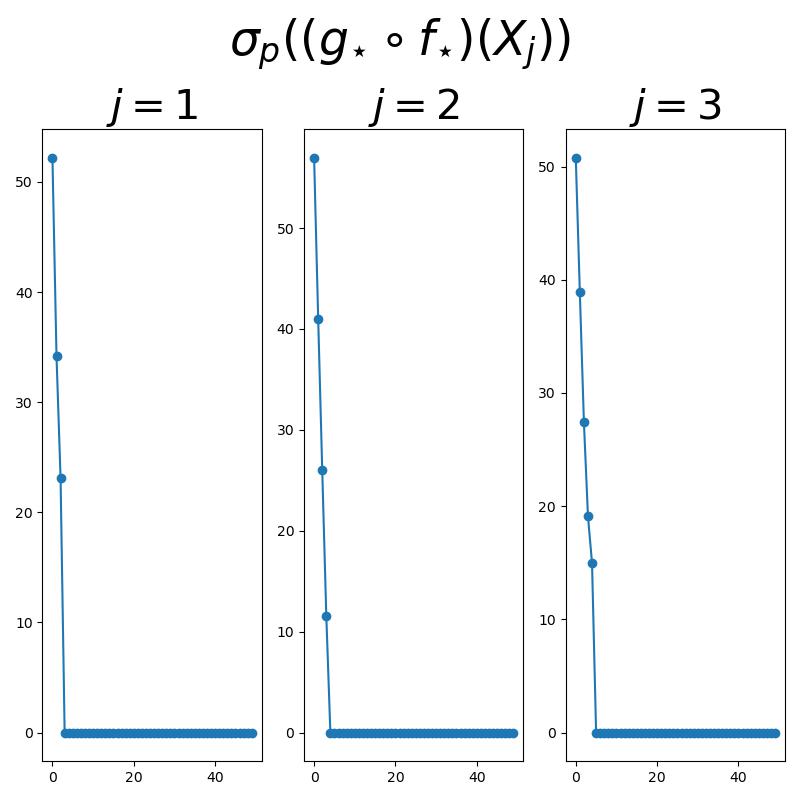}
    \caption{}
    \end{subfigure}
    \begin{subfigure}[b]{0.4\textwidth}
    \centering
    \includegraphics[width=0.44\textwidth]{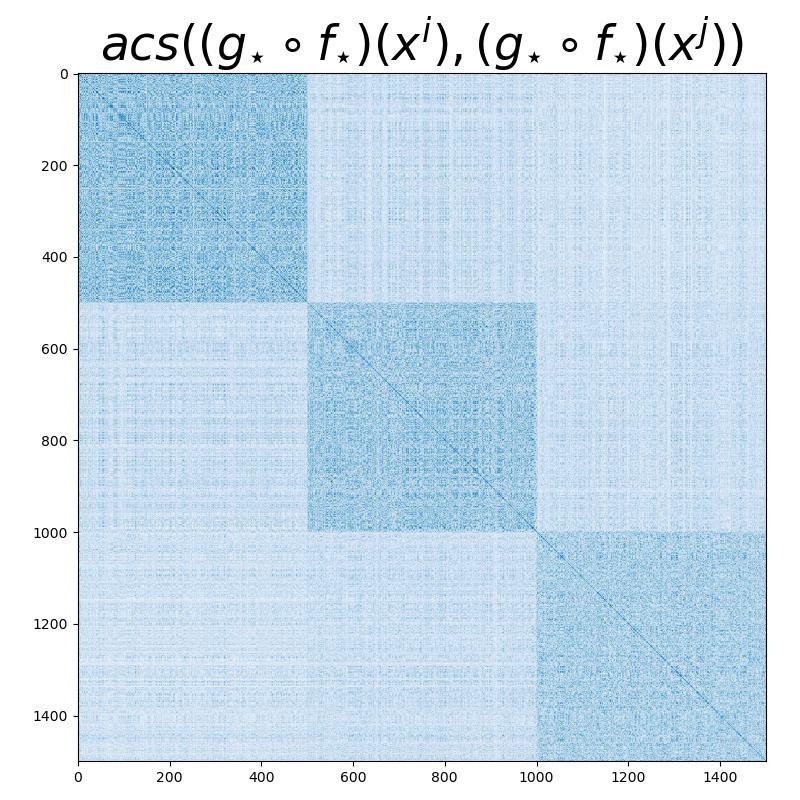}
    \includegraphics[width=0.44\textwidth]{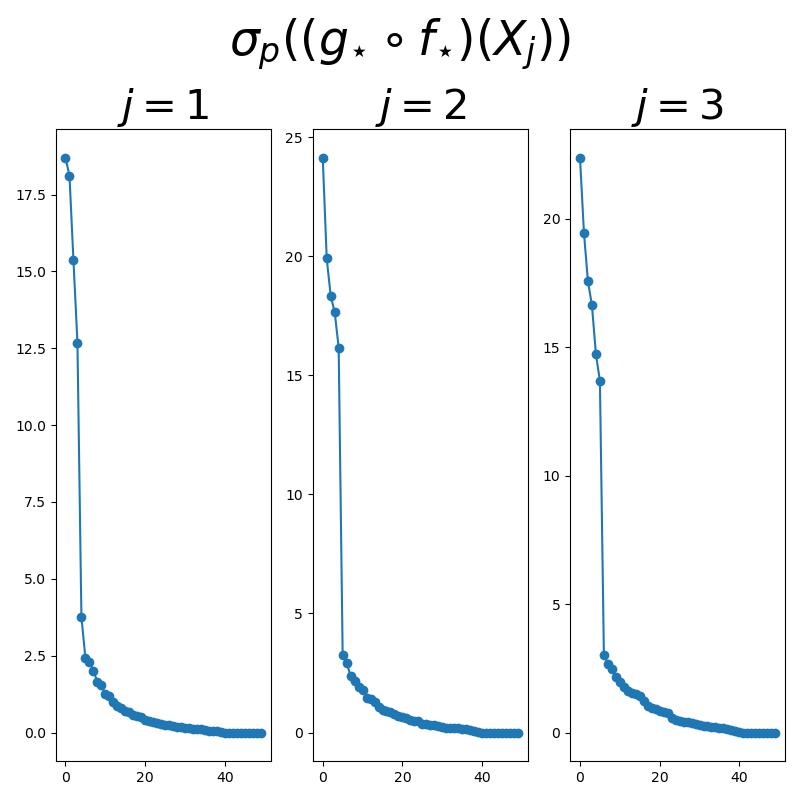}
    \caption{}
    \end{subfigure}
    
    \begin{subfigure}[b]{0.45\textwidth}
    \centering
    \includegraphics[width=0.44\textwidth]{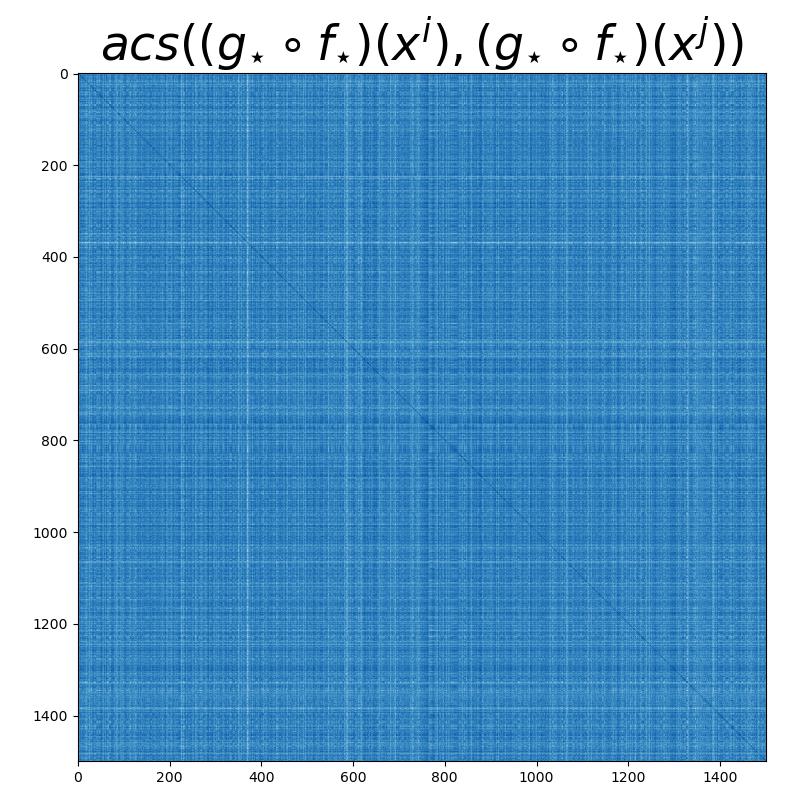}
    \includegraphics[width=0.44\textwidth]{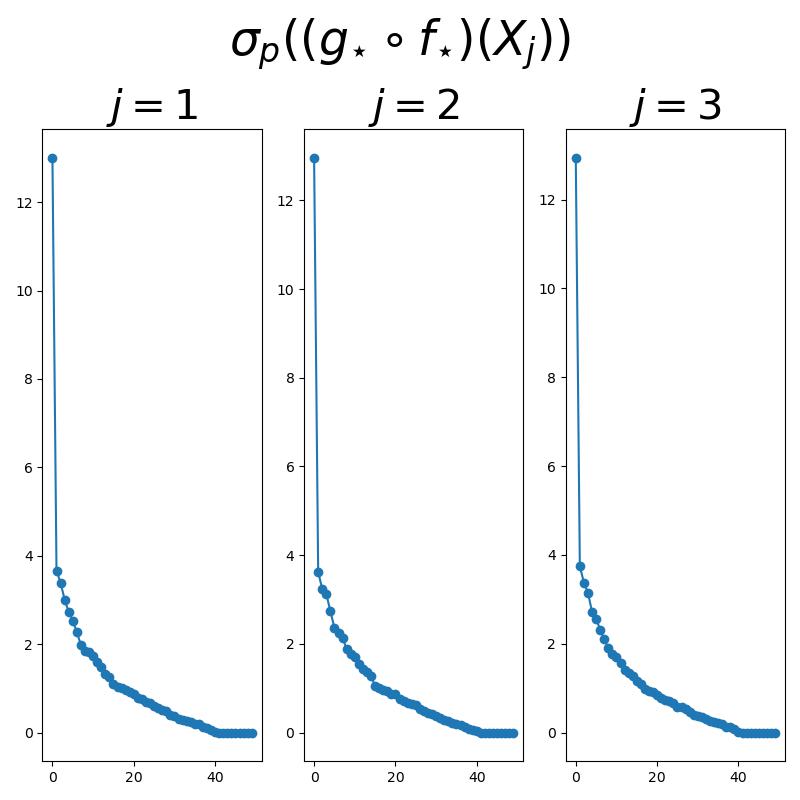}
    \caption{}
    \end{subfigure}
    
    \begin{subfigure}[b]{0.45\textwidth}
    \centering
    \includegraphics[width=0.44\textwidth]{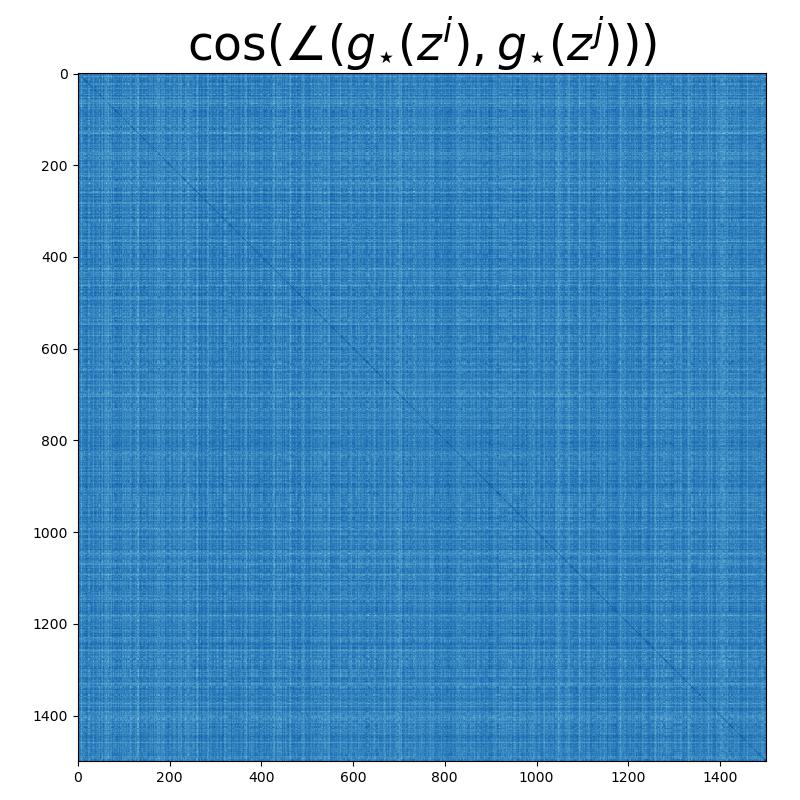}
    \includegraphics[width=0.44\textwidth]{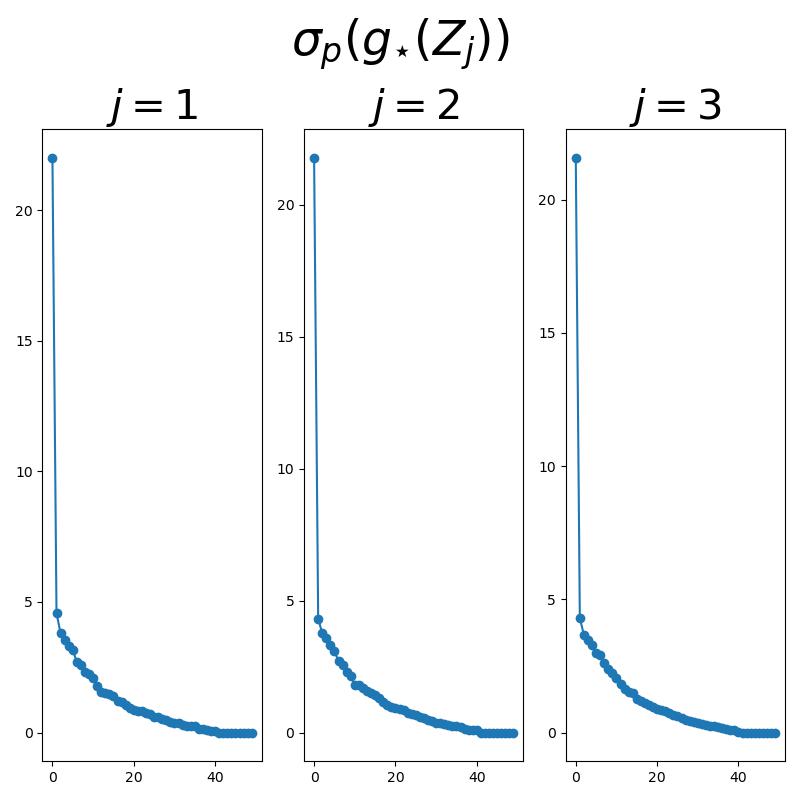}
    \caption{}
    \end{subfigure}
    \begin{subfigure}[b]{0.45\textwidth}
    \centering
    \includegraphics[width=0.44\textwidth]{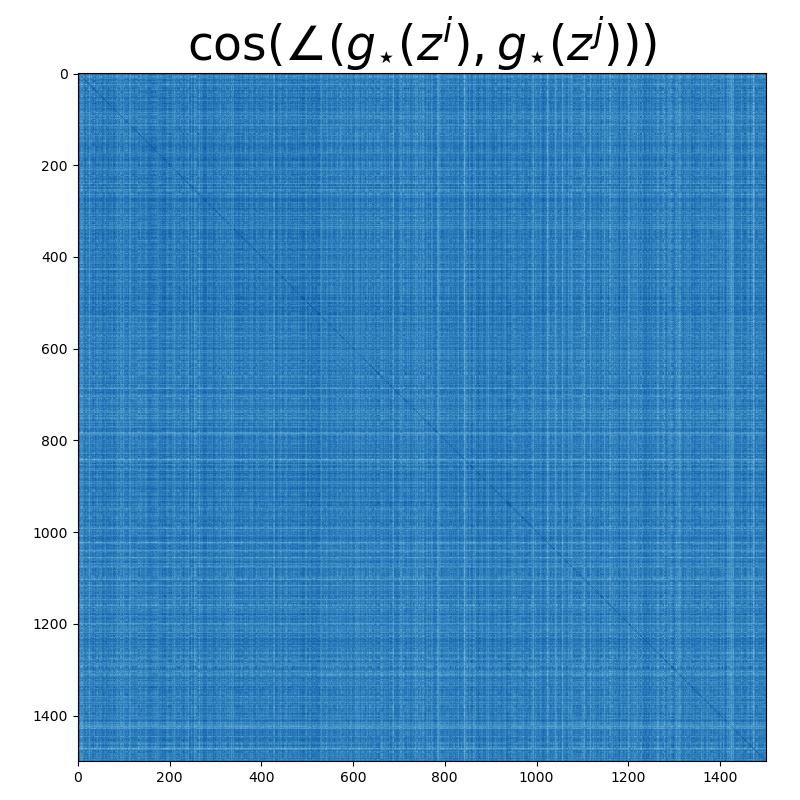}
    \includegraphics[width=0.44\textwidth]{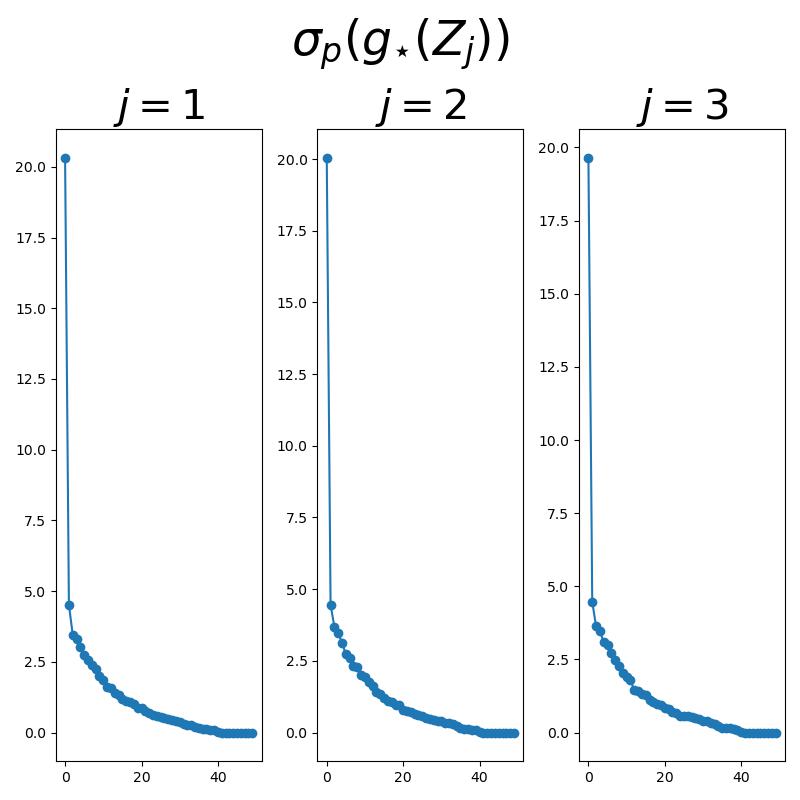}
    \caption{}
    \end{subfigure}

    \caption{Performance of various representation learning algorithms on data lying on linear subspaces. (a): Heatmap of absolute cosine similarities of the original data \(\texttt{acs}(\x^{p}, \x^{q})\), and distribution of singular values \(\sigma_{p}(\X_{j})\) for each class. (b): Heatmap of absolute cosine similarities of the autoencoded data \(\texttt{acs}((g_{\star} \circ f_{\star})(\x^{p}), (g_{\star} \circ f_{\star})(\x^{q}))\) and spectra of the autoencoded matrices \((g_{\star} \circ f_{\star})(\X_{j})\) for every \(j\), where the autoencoding is from CTRL-MSP. (c): The same, where the autoencoding is from our instance of CTRL-SG. (d): The same, where the autoencoding is from CVAE. (e): Heatmap of absolute cosine similarities of the generated data \(\texttt{acs}(g_{\star}(\z^{p}), g_{\star} (\z^{q}))\) and spectra of the generated data matrices \(g_{\star}(\Z_{j})\) for every \(j\), where the generation is from CGAN. (f): The same, where the generation is from InfoGAN.}
    \label{fig:ctrl_comparison}
\end{figure}

\end{document}